\numberwithin{equation}{section}
\theoremstyle{plain}
\newtheorem{thm}{\protect\theoremname}
\theoremstyle{plain}
\newtheorem{prop}[thm]{\protect\propositionname}
\theoremstyle{definition}
\newtheorem{defn}[thm]{\protect\definitionname}
\theoremstyle{remark}
\newtheorem{rem}[thm]{\protect\remarkname}
\theoremstyle{plain}
\newtheorem{cor}[thm]{\protect\corollaryname}
\theoremstyle{plain}
\newtheorem{lem}[thm]{\protect\lemmaname}
\providecommand{\corollaryname}{Corollary}
\providecommand{\definitionname}{Definition}
\providecommand{\lemmaname}{Lemma}
\providecommand{\propositionname}{Proposition}
\providecommand{\remarkname}{Remark}
\providecommand{\theoremname}{Theorem}
\author{%
  Arthur Jacot\\
  Courant Institute of Mathematical Sciences\\
  New York University\\
  \texttt{arthur.jacot@nyu.edu} \\
  \And
  Eugene Golikov\\
  Chair of Statistical Field Theory\\
  École Polytechnique Fédérale de Lausanne\\
  \texttt{evgenii.golikov@epfl.ch} \\
  \And
  Clément Hongler\\
  Chair of Statistical Field Theory\\
  École Polytechnique Fédérale de Lausanne\\
  \texttt{clement.hongler@epfl.ch} \\
  \And
  Franck Gabriel\\
  Institut de Science Financière et d'Assurances\\
  Université Lyon 1\\
  \texttt{franckr.gabriel@gmail.com} \\
}
\begin{document}
\title{Feature Learning in $L_{2}$-regularized DNNs:\\
Attraction/Repulsion and Sparsity}
\maketitle
\begin{abstract}
We study the loss surface of DNNs with $L_{2}$ regularization. We
show that the loss in terms of the parameters can be reformulated
into a loss in terms of the layerwise activations $Z_{\ell}$ of the
training set. This reformulation reveals the dynamics behind feature
learning: each hidden representations $Z_{\ell}$ are optimal w.r.t.
to an attraction/repulsion problem and interpolate between the input
and output representations, keeping as little information from the
input as necessary to construct the activation of the next layer.
For positively homogeneous non-linearities, the loss can be further
reformulated in terms of the covariances of the hidden representations,
which takes the form of a partially convex optimization over a convex
cone.

This second reformulation allows us to prove a sparsity result for
homogeneous DNNs: any local minimum of the $L_{2}$-regularized loss
can be achieved with at most $N(N+1)$ neurons in each hidden layer
(where $N$ is the size of the training set). We show that this bound
is tight by giving an example of a local minimum that requires $N^{2}/4$
hidden neurons. But we also observe numerically that in more traditional
settings much less than $N^{2}$ neurons are required to reach the
minima.
\end{abstract}

\section{Introduction}

It is generally believed that the success of deep learning hinges
on the ability of deep neural networks (DNNs) to learn features that
are well suited to the task they are trained on. There is however
little understanding of what these features are and how they are selected by the network.

On the other hand, recent results \cite{jacot2018neural} have shown
that it is possible to train DNNs without feature learning. This suggests
the existence of two regimes of DNNs, a kernel regime (also called
lazy or NTK regime) without feature learning and an active regime
where features are learned. The presence or absence of feature learning
can depend on multiple factors, such as the initialization/parametrization
of DNNs \cite{chizat2018note,yang2020feature,li2020towards,jacot-2021-DLN-Saddle},
very large depths \cite{hanin2019finite} or large learning rate \cite{lewkowycz_2020_large_lr,cohen_2021_edge_of_stability}.

In this paper, we focus on the impact of $L_{2}$ regularization on
feature learning in DNNs. This analysis is further motivated by recent results \cite{gunasekar_2018_implicit_bias,gunasekar_2018_implicit_bias2,chizat_2020_implicit_bias} which show that the implicit bias of gradient descent on losses such as the cross entropy (which decay exponentially towards infinity) is essentially the same as the bias induced by $L_2$ regularization in DNNs.

Generally, the bias induced by the addition of $L_{2}$-regularization
on the parameters $\theta$ of a model $f_{\theta}$ can be described
by the \emph{representation cost} $R(f)=\min_{\theta:f_{\theta}=f}\left\Vert \theta\right\Vert ^{2}$,
since $\min_{\theta}C(f_{\theta})+\lambda\left\Vert \theta\right\Vert ^{2}=\min_{f}C(f)+\lambda R(f)$.

In deep linear networks, the addition of $L_{2}$ regularization on
the parameters corresponds to the addition of an $L_{p}$-Schatten
norm regularization to the represented matrix, with $p=\nicefrac{2}{L}$
where $L$ is the depth of the network \cite{gunasekar_2018_implicit_bias2,dai_2021_repres_cost_DLN}.
This implies a sparsity effect that increases with depth $L$.

In non-linear networks the sparsity effect of $L_{2}$-regularization
has been described for shallow networks ($L=2$) in \cite{bach2017_F1_norm, Ongie_2020_repres_bounded_norm_shallow_ReLU_net,Savarese_2019_repres_bounded_norm_shallow_ReLU_net_1D} or for shallow non-linear networks with added linear layers \cite{ongie2022_linear_layer_in_DNN}.
Though it seems natural that this effect should become stronger for
deeper networks, to our knowledge little theoretical work has been done in this area.

\subsection{Contributions}

In this paper, we study the minima of the loss of $L_{2}$ regularized
fully-connected DNNs of depth $L$. We propose two reformulations
of the loss:
\begin{enumerate}
\item The first reformulation expresses the loss in terms of the representations
$Z_{1},\dots,Z_{L}$ (the layer pre-activations of every input in
the training set) of every layer of the network. This reformulation
has the advantage of being local - the optimal choice of a layer $Z_{\ell}$
only depends on its neighboring layers $Z_{\ell-1}$ and $Z_{\ell+1}$.
The optimal choice of representation $Z_{\ell}$ is at the balance
between an attractive force (determined by the previous layer) and
a repulsive force (coming from the next layer). It illustrates how
the representations $Z_{1},\dots,Z_{L-1}$ interpolates between the
input layer $Z_{0}$ and output layer $Z_{L}$.
\item The second reformulation expresses the loss in terms of the covariances
of the representation before applying the non-linearity $K_{\ell}=Z_{\ell}^{T}Z_{\ell}$
and after the non-linearity $K_{\ell}^{\sigma}=\left(Z_{\ell}^{\sigma}\right)^{T}Z_{\ell}^{\sigma}$.
For positively homogeneous non-linearities and when the number of
neurons $n_{\ell}$ in every hidden layer $\ell$ is larger or equal
to $N(N+1)$ for $N$ the number of datapoints, this reformulation
is an optimization of a (partially convex) loss over the covariances
$(K_{1},K_{1}^{\sigma}),\dots,(K_{L-1},K_{L-1}^{\sigma})$ and the
outputs $Z_{L}$, restricted to a (translated) convex cone. This
reformulation does not depend on the number of neurons $n_{\ell}$
in the hidden layers (as long as $n_{\ell}\geq N(N+1)$).
\end{enumerate}
The second reformulation implies that for positively homogeneous non-linearities
such as the ReLU, as the number of neurons in the hidden layers $n_{\ell}$
increase, the global minimum of the $L_{2}$-regularized loss goes
down until reaching a plateau (i.e. adding neurons does not lead to
an improvement in the loss). This illustrates the sparsity effect of
$L_{2}$-regularization, where the optimum reached on a very large
network is equivalent to a much smaller network. 

The start of the plateau hence gives a measure of sparsity of the
global minimum. We show that the minimal number of neurons $n_{\ell}$
to reach this plateau is determined by a notion of rank $\mathrm{Rank}_{\sigma}\left(K_{\ell},K_{\ell}^{\sigma}\right)$
of the covariance pairs. We show that $\mathrm{Rank}_{\sigma}\left(K_{\ell},K_{\ell}^{\sigma}\right)\leq N(N+1)$,
i.e. the plateau must start before $N(N+1)$ and that the scaling
of this upper bound is tight by giving an example dataset such that
at the optimum $\mathrm{Rank}_{\sigma}\left(K_{\ell},K_{\ell}^{\sigma}\right)\geq N^{2}/4$.
We also present other datasets where the start of the plateau is either constant or grows linearly with the number of datapoints.
We also observe empirically that the plateau can start at much smaller widths
for real data such as MNIST and the teacher/student setting.

\section{Setup}

We consider fully-connected deep neural networks with $L+1$ layers,
numbered from $0$ (the input layer) to $L$ (the output layer), with
nonlinear activation function $\sigma:\mathbb{R}\to\mathbb{R}$ (e.g.
the ReLU $\sigma(x)=\max\{0,x\}$)). Each layer $\ell$ contains $n_{\ell}$
neurons and we denote $\mathbf{n}=\left(n_{1},\dots,n_{L}\right)$
the widths of the network. Given an input dataset $\{x_{1},\ldots,x_{N}\}\subset\mathbb{R}^{n_{0}}$
of size $N$, we consider the data matrix $X=(x_{0},\ldots,x_{n})\in\mathbb{R}^{n_{0}\times N}$,
and encode the activations and preactivations of the whole data set
by considering the pre-activations $Z_{\ell}(X;\mathbf{W})\in\mathbb{R}^{n_{\ell}\times N}$
and activations $Z_{\ell}^{\sigma}(X;\mathbf{W})\in\mathbb{R}^{(n_{\ell}+1)\times N}$
given by:
\begin{align*}
Z_{0}^{\sigma}(X;\mathbf{W}) & =\left(\begin{array}{c}
X\\
\beta\mathbf{1}_{N}^{T}
\end{array}\right)\\
Z_{\ell}(X;\mathbf{W}) & =W_{\ell}Z_{\ell-1}^{\sigma}(X;\mathbf{W})\\
Z_{\ell}^{\sigma}(X;\mathbf{W}) & =\left(\begin{array}{c}
\sigma\left(Z_{\ell}\right)\\
\beta\mathbf{1}_{N}^{T}
\end{array}\right),
\end{align*}
where $\mathbf{W}=\left(W_{\ell}\right)_{\ell=1,\dots,L}$ is the
collection of $n_{\ell}\times (n_{\ell-1}+1)$-dim weight matrices
$W_{\ell}$, $\sigma\left(Z_{\ell}\right)$ is obtained by applying
elementwise the nonlinearity $\sigma$ to the matrix $Z_{\ell}$,
and the scalar $\beta\in\mathbb{R}$ represents the amount of bias
(i.e. when $\beta=0$ there is no bias, when $\beta=1$ this definition
is equivalent to the traditional definition of bias). The output of
the network is the pre-activation of the $L$-th layer $Z_{L}$.

We often drop the dependence on the weights $\mathbf{W}$ and on the
dataset $X$ and simply write $Z_{\ell}$ and $Z_{\ell}^{\sigma}$.

We denote $f_{\mathbf{W}}:\mathbb{R}^{n_{0}}\to\mathbb{R}^{n_{L}}$
the \emph{network function}, which maps an input $x$ to the pre-activation
at the last layer.

\subsection{$L_{2}$-Regularized Loss and Representation Cost}

Given a general cost functional $C:\mathbb{R}^{n_{L}\times N}\to\mathbb{R}$,
the $L_{2}$-regularized loss of DNNs of widths $\mathbf{n}$ is
\[
\mathcal{L}_{\lambda,\mathbf{n}}(\mathbf{W})=C(Z_{L}(X;\mathbf{W}))+\lambda\left\Vert \mathbf{W}\right\Vert ^{2},
\]
where $\left\Vert \mathbf{W}\right\Vert $ is the $L_{2}$-norm of
$\mathbf{W}$ understood as a vector. Note that $\left\Vert \mathbf{W}\right\Vert ^{2}=\sum_{\ell=1}^{L}\left\Vert W_{\ell}\right\Vert _{F}^{2}$
where $\left\Vert \cdot\right\Vert _{F}$ denotes the Frobenius norm.
From now on, we often omit to specify the widths $\mathbf{n}$ and
simply write $\mathcal{L}_{\lambda}$.

The additional regularization cost should bias the network toward
low norm solutions. This bias on the parameters leads to a bias in
function space, which is described by the so-called \emph{representation
cost} $\mathcal{R}_{\mathbf{n}}(f)$ defined on functions $f:\mathbb{R}^{n_{0}}\to\mathbb{R}^{n_{L}}$:
\[
\mathcal{R}_{\mathbf{n}}(f)=\min_{\mathbf{W}:f_{\mathbf{W}}=f}\left\Vert \mathbf{W}\right\Vert ^{2},
\]
where the minimum is taken over all choices of parameters $\mathbf{W}$
of a width $\mathbf{n}$ network, with fixed $\beta$ bias amount,
such that the network function $f_{\mathbf{W}}$ equals $f$. By convention,
if no such parameters exist then $\mathcal{R}_{\mathbf{n}}(f)=+\infty$. 

Similarly, given an input-output pair $X\in\mathbb{R}^{n_{0}\times N}$,
$Y\in\mathbb{R}^{n_{L}\times N}$, the representation cost $R_{\mathbf{n}}(X,Y)$
is: 
\[
R_{\mathbf{n}}(X,Y)=\min_{\mathbf{W}:Z_{L}(X,\mathbf{W})=Y}\left\Vert \mathbf{W}\right\Vert ^{2},
\]
with again the convention that if there exists no weight $\mathbf{W}$
such that $Z_{L}(X,\mathbf{W})=Y$, then $R_{\mathbf{n}}(X,Y)=+\infty$.
The representation cost $R_{\mathbf{n}}$ naturally describes the
bias induced by the $L_{2}$-regularized loss of DNNs since: 
\[
\min_{\mathbf{W}}C(Z_{L}(X;\mathbf{W}))+\lambda\left\Vert \mathbf{W}\right\Vert ^{2}=\min_{Y}C(Y)+\lambda R_{\mathbf{n}}(Y,X).
\]

\section{Two Reformulations of the Regularized Loss: Hidden Representation
and Covariance Optimization}

We now provide two reformulations of the $L_{2}$-regularized loss
$\mathcal{L}_{\lambda}(\mathbf{W})$ and representation cost $R_{\mathbf{n}}(X,Y)$,
which both put emphasis on the hidden representations $Z_{\ell}$
and how they are progressively modified throughout the neural network.
The first reformulation holds for general non-linearities while the
second only applies to networks with homogeneous nonlinearities.

\subsection{Feature optimization : attraction/repulsion}

\begin{figure}
\center

\subfloat[\textbf{\label{fig:Attraction-Repulsion-ell1-inputs}}Inputs ($\ell=1$).]{\includegraphics[height=3.7cm]{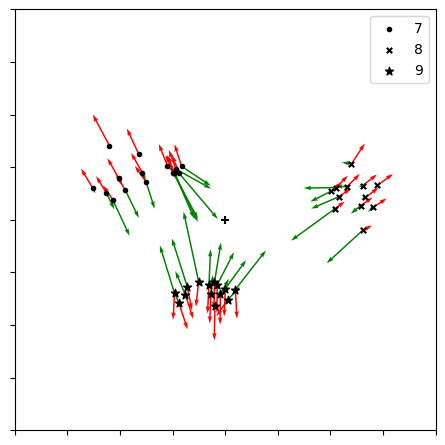}

}\subfloat[\textbf{\label{fig:Attraction-Repulsion-ell1-neurons}}Neurons ($\ell=1$).]{\includegraphics[height=3.7cm]{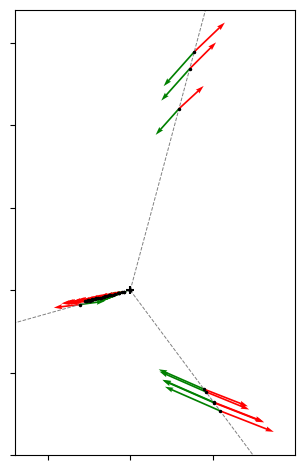}\hspace{0.2cm}

}\subfloat[\textbf{\label{fig:Attraction-Repulsion-ell2-inputs}}Inputs ($\ell=2$).]{\includegraphics[height=3.7cm]{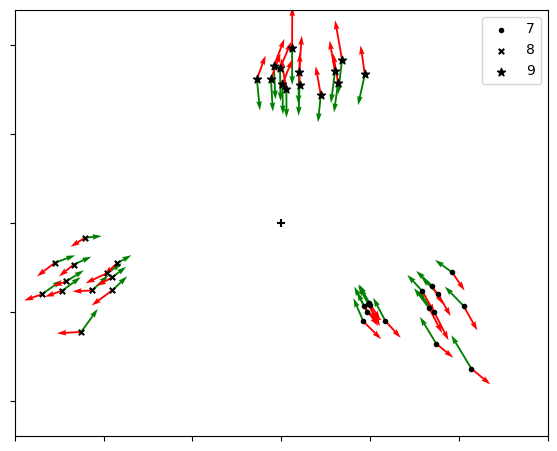}

}\subfloat[\textbf{\label{fig:Attraction-Repulsion-ell2-neurons}}Neurons ($\ell=2$).]{\includegraphics[height=3.7cm]{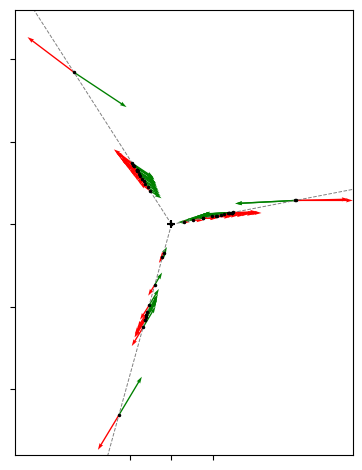}

}

\caption{\textbf{\label{fig:Attraction-Repulsion}Attraction/Repulsion:} Visualization
of the hidden representation $Z_{1}$ and $Z_{2}$ of a $L=3$ ReLU
DNN at the end of training (i.e. after $T=20k$ steps of gradient descent on the original loss $\mathcal{L}_\lambda$) on 3 digits (7,8 and 9) of MNIST \cite{lecun_1998_MNIST} along with the attraction
force in green and repulsion force in red (both forces are approximated
with Tikhonov regularization). For both layers, we plot in \textbf{(a)
}and \textbf{(c)} the PCA of the $N=42$ lines (each corresponding
to a datapoint) and in \textbf{(b) }and \textbf{(d)} the  PCA the
$n_{1}=n_{2}=50$ columns (each corresponding to a neuron in the layer
$\ell=1$ or $\ell=2$). We observe a clustering of the inputs according to their digit, and of the neurons along 3 rays in grey dashed lines.}

\end{figure}

The key observation is that the weights $W_{\ell}$ can be decomposed
as follows:
\[
W_{\ell}=Z_{\ell}\left(Z_{\ell-1}^{\sigma}\right)^{+}+\tilde{W}_{\ell},
\]
where the residual matrix $\tilde{W}_{\ell}$ is orthogonal to $Z_{\ell-1}^{\sigma}$,
i.e. $\tilde{W}_{\ell}Z_{\ell-1}^{\sigma}=0$, and $(\cdot)^{+}$
is the Moore-Penrose pseudo-inverse. This stems from the fact that
$W_{\ell}=W_{\ell}\mathrm{P}_{\mathrm{Im}Z_{\ell-1}^{\sigma}}+\tilde{W}_{\ell}$
where $\tilde{W}_{\ell}:=W_{\ell}\mathrm{P}_{(\mathrm{Im}Z_{\ell-1}^{\sigma})^{\perp}}$,
and $\mathrm{P}_{\mathrm{Im}Z_{\ell-1}^{\sigma}}$, resp. $\mathrm{P}_{(\mathrm{Im}Z_{\ell-1}^{\sigma})^{\perp}}$,
is the orthogonal projection on $\mathrm{Im}Z_{\ell-1}^{\sigma}$,
resp. on the orthogonal complement of $\mathrm{Im}Z_{\ell-1}^{\sigma}$; one
concludes using the facts that $\mathrm{P}_{\mathrm{Im}Z_{\ell-1}^{\sigma}}=Z_{\ell-1}^{\sigma}\left(Z_{\ell-1}^{\sigma}\right)^{+}$
and $Z_{\ell}=W_{\ell}Z_{\ell-1}^{\sigma}$ . 

Note that the matrix $\tilde{W}_{\ell}$ does not affect either the
hidden representations $Z_{\ell}$ nor the output $Z_{L}$. Besides,
the Frobenius norm of $W_{\ell}$ can be rewritten as $\vert\!\vert W_{\ell}\vert\!\vert_{F}^{2}=\vert\!\vert Z_{\ell}\left(Z_{\ell-1}^{\sigma}\right)^{+}\vert\!\vert_{F}^{2}+\vert\!\vert\tilde{W}_{\ell}\vert\!\vert_{F}^{2}$.
When minimizing the $L_{2}$-regularized cost, it is therefore always
optimal to consider null residual matrices $\tilde{W}_{\ell}=0$,
resulting in a reformulation of the cost which only depends on the
pre-activations $Z_{\ell}$: 
\begin{prop}\label{prop:first-reformulation}
The infimum of $\mathcal{L}_{\lambda}(\mathbf{W})=C(Z_{L}(X;\mathbf{W}))+\lambda\left\Vert \mathbf{W}\right\Vert ^{2},$
over the parameters $\mathrm{W}\in\mathbb{R}^{P}$ is equal to the
infimum of
\[
\mathcal{L}_{\lambda}^{r}(Z_{1},\dots,Z_{L})=C(Z_{L})+\lambda\sum_{\ell=1}^{L}\left\Vert Z_{\ell}\left(Z_{\ell-1}^{\sigma}\right)^{+}\right\Vert _{F}^{2}
\]
over the set $\mathcal{Z}$ of hidden representations $\mathbf{Z}=(Z_{\ell})_{\ell=1,\dots,L}$
such that $Z_{\ell}\in\mathbb{R}^{n_{\ell}\times N}$, $\mathrm{Im}Z_{\ell+1}^{T}\subset\mathrm{Im}\left(Z_{\ell}^{\sigma}\right)^{T}$,
with the notations $Z_{0}^{\sigma}=\left(\begin{array}{c}
X\\
\beta\mathbf{1}_{N}^{T}
\end{array}\right)$ and $Z_{\ell}^{\sigma}=\left(\begin{array}{c}
\sigma\left(Z_{\ell}\right)\\
\beta\mathbf{1}_{N}^{T}
\end{array}\right)$.

Furthermore, if $\mathbf{W}$ is a local minimizer of $\mathcal{L}_{\lambda}$
then $(Z_{1}(X; \mathbf{W}),\dots,Z_{L}(X; \mathbf{W}))$ is a local minimizer
of \textup{$\mathcal{L}_{\lambda}^{r}$.} Conversely, keeping the
same notations, if $(Z_{\ell})_{\ell=1,\ldots,L}$ is a local minimizer
of $\mathcal{L}_{\lambda}^{r}$, then $\mathrm{W}=(Z_{\ell}(Z_{\ell-1}^{\sigma})^{+})_{\ell=1,\ldots,L}$
is a local minimizer of $\mathcal{L}_{\lambda}$.
\end{prop}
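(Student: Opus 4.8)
The plan is to relate parameters and representations through two maps and track how each interacts with the loss and with local perturbations. Set $\Phi(\mathbf{W})=(Z_\ell(X;\mathbf{W}))_{\ell=1}^{L}$ and, for $\mathbf{Z}=(Z_\ell)\in\mathcal{Z}$, set $\Psi(\mathbf{Z})=\bigl(Z_\ell(Z_{\ell-1}^{\sigma})^{+}\bigr)_{\ell=1}^{L}$ with the $Z_{\ell-1}^{\sigma}$ built from $\mathbf{Z}$. Assuming $\sigma$ and $C$ continuous, $\Phi$ and $\mathbf{W}\mapsto\mathcal{L}_\lambda(\mathbf{W})$ are continuous, and $\Phi(\mathbf{W})\in\mathcal{Z}$ always since the rows of $Z_{\ell+1}=W_{\ell+1}Z_\ell^{\sigma}$ lie in $\mathrm{Im}(Z_\ell^{\sigma})^{T}$. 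The decomposition $\|W_\ell\|_F^2=\|Z_\ell(Z_{\ell-1}^{\sigma})^{+}\|_F^2+\|\tilde W_\ell\|_F^2$ established above gives $\mathcal{L}_\lambda^r(\Phi(\mathbf{W}))\le\mathcal{L}_\lambda(\mathbf{W})$, while for $\mathbf{Z}\in\mathcal{Z}$ an induction on $\ell$ shows $\Phi(\Psi(\mathbf{Z}))=\mathbf{Z}$ — the inductive step reducing to $Z_\ell(Z_{\ell-1}^{\sigma})^{+}Z_{\ell-1}^{\sigma}=Z_\ell\,\mathrm{P}_{\mathrm{Im}(Z_{\ell-1}^{\sigma})^{T}}=Z_\ell$, the last equality being precisely the constraint $\mathrm{Im}Z_\ell^{T}\subset\mathrm{Im}(Z_{\ell-1}^{\sigma})^{T}$ defining $\mathcal{Z}$ — and hence $\mathcal{L}_\lambda(\Psi(\mathbf{Z}))=\mathcal{L}_\lambda^r(\mathbf{Z})$. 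Note that $\Psi$ is in general \emph{not} continuous, since the pseudo-inverse jumps where the rank of $Z_{\ell-1}^{\sigma}$ jumps; this is the one delicate point.

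The equality $\inf_{\mathbf{W}}\mathcal{L}_\lambda=\inf_{\mathcal{Z}}\mathcal{L}_\lambda^r$ follows at once: $\Phi$ maps any $\mathbf{W}$ into $\mathcal{Z}$ without increasing the value, and $\Psi$ maps any $\mathbf{Z}\in\mathcal{Z}$ to a $\mathbf{W}$ with the same value. For the local-minimizer statement, I would first observe that a local minimizer $\mathbf{W}^{*}$ of $\mathcal{L}_\lambda$ must have vanishing residuals: replacing $W_\ell^{*}$ by $W_\ell^{*}-t\,\tilde W_\ell^{*}$ for small $t>0$ changes no $Z_j$ (hence not $C$) while strictly decreasing $\|W_\ell^{*}\|_F^2$, so $\tilde W_\ell^{*}=0$ and $\mathbf{W}^{*}=\Psi(\mathbf{Z}^{*})$ with $\mathbf{Z}^{*}:=\Phi(\mathbf{W}^{*})$; in particular each $W_\ell^{*}$ is the \emph{unique} minimum-Frobenius-norm solution of $A\,Z_{\ell-1}^{\sigma,*}=Z_\ell^{*}$, where $Z_\ell^{\sigma,*}$ denotes the activation of $Z_\ell^{*}$.

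For the implication that such a $\mathbf{W}^{*}$ yields a local minimizer $\mathbf{Z}^{*}$ of $\mathcal{L}_\lambda^r$ on $\mathcal{Z}$, suppose the contrary: there are $\mathbf{Z}^{(k)}\to\mathbf{Z}^{*}$ in $\mathcal{Z}$ with $\mathcal{L}_\lambda^r(\mathbf{Z}^{(k)})<\mathcal{L}_\lambda^r(\mathbf{Z}^{*})$. Put $\mathbf{W}^{(k)}:=\Psi(\mathbf{Z}^{(k)})$, so $\mathcal{L}_\lambda(\mathbf{W}^{(k)})=\mathcal{L}_\lambda^r(\mathbf{Z}^{(k)})$ is bounded above; since $C(Z_L^{(k)})\to C(Z_L^{*})$ is bounded, $\lambda\|\mathbf{W}^{(k)}\|^2$ is bounded, and a subsequence converges, $\mathbf{W}^{(k_j)}\to\mathbf{W}^{\infty}$. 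Continuity of $\Phi$ gives $\Phi(\mathbf{W}^{\infty})=\mathbf{Z}^{*}$, so each $W_\ell^{\infty}$ solves $A\,Z_{\ell-1}^{\sigma,*}=Z_\ell^{*}$; hence $\|W_\ell^{\infty}\|_F\ge\|W_\ell^{*}\|_F$ for every $\ell$ while the $C$-terms agree, so $\mathcal{L}_\lambda(\mathbf{W}^{\infty})\ge\mathcal{L}_\lambda(\mathbf{W}^{*})$. On the other hand continuity of $\mathcal{L}_\lambda$ gives $\mathcal{L}_\lambda(\mathbf{W}^{\infty})=\lim_j\mathcal{L}_\lambda^r(\mathbf{Z}^{(k_j)})\le\mathcal{L}_\lambda^r(\mathbf{Z}^{*})=\mathcal{L}_\lambda(\mathbf{W}^{*})$, so all these are equalities; then $\|W_\ell^{\infty}\|_F=\|W_\ell^{*}\|_F$ for every $\ell$, and uniqueness of the minimum-norm solution forces $\mathbf{W}^{\infty}=\mathbf{W}^{*}$. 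Thus $\mathbf{W}^{(k_j)}\to\mathbf{W}^{*}$ with $\mathcal{L}_\lambda(\mathbf{W}^{(k_j)})<\mathcal{L}_\lambda(\mathbf{W}^{*})$, contradicting local minimality of $\mathbf{W}^{*}$.

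The converse is easy: if $\mathbf{W}^{*}:=\Psi(\mathbf{Z}^{*})$ were not a local minimizer of $\mathcal{L}_\lambda$, take $\mathbf{W}^{(k)}\to\mathbf{W}^{*}$ with $\mathcal{L}_\lambda(\mathbf{W}^{(k)})<\mathcal{L}_\lambda(\mathbf{W}^{*})=\mathcal{L}_\lambda^r(\mathbf{Z}^{*})$; then $\mathbf{Z}^{(k)}:=\Phi(\mathbf{W}^{(k)})\in\mathcal{Z}$ converges to $\Phi(\mathbf{W}^{*})=\mathbf{Z}^{*}$ with $\mathcal{L}_\lambda^r(\mathbf{Z}^{(k)})\le\mathcal{L}_\lambda(\mathbf{W}^{(k)})<\mathcal{L}_\lambda^r(\mathbf{Z}^{*})$, contradicting local minimality of $\mathbf{Z}^{*}$. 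The main obstacle is exactly the discontinuity of $\Psi$: one cannot simply lift a perturbation of $\mathbf{Z}$ to a nearby perturbation of $\mathbf{W}$. What rescues the forward direction is that we only need to lift perturbations along which $\mathcal{L}_\lambda^r$ does not increase; along those the lifted weights stay bounded, and compactness together with uniqueness of the minimum-norm solution of $A\,Z_{\ell-1}^{\sigma,*}=Z_\ell^{*}$ pins the subsequential limit to $\mathbf{W}^{*}$ itself. I would also state explicitly the two conventions relied on: $\sigma$ and $C$ are continuous (so that $\Phi$ and $\mathcal{L}_\lambda$ are continuous), and local minimality for $\mathcal{L}_\lambda^r$ is understood with respect to the subspace topology on $\mathcal{Z}$.
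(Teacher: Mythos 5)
Your proof is correct and follows essentially the same route as the paper's: the same maps $\Phi$ and $\Psi$, the same three identities ($\Phi\circ\Psi=\mathrm{id}$ on $\mathcal{Z}$, $\mathcal{L}_\lambda\circ\Psi=\mathcal{L}_\lambda^r$, $\mathcal{L}_\lambda^r\circ\Phi\le\mathcal{L}_\lambda$), and the same contrapositive-plus-compactness argument for the delicate direction, with your appeal to uniqueness of the minimum-Frobenius-norm solution playing exactly the role of the paper's residual decomposition $W_\ell=Z_\ell(Z_{\ell-1}^\sigma)^++\tilde W_\ell$. Establishing $\tilde W_\ell^*=0$ upfront rather than inside the subsequence argument is a minor (and arguably cleaner) reorganization, not a different proof.
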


Note that one can also reformulate the representation cost: 
\[
R_{\mathbf{n}}(X,Y)=\min_{\mathbf{Z}\in\mathcal{Z},Z_{L}=Y}\sum_{\ell=1}^{L}\left\Vert Z_{\ell}\left(Z_{\ell-1}^{\sigma}\right)^{+}\right\Vert _{F}^{2}.
\]
The representation in terms of the output and hidden representations
have several interesting properties, especially when it comes
to minimization: 
\begin{enumerate}
\item The optimization becomes local in the sense that all terms and constraints
depend either only on the output cost $C(Z_{L})$ or on two neighboring
terms (e.g. $\left\Vert Z_{\ell}\left(Z_{\ell-1}^{\sigma}\right)^{+}\right\Vert _{F}^{2}$).
As a result, the (projected) gradient of the loss $\mathcal{L}_{\lambda}^r(Z_{1},...,Z_{L})$
w.r.t. to $Z_{\ell}$ only depends on $Z_{\ell-1},Z_{\ell}$ and $Z_{\ell+1}$.
This is in contrast to the optimization of $\mathcal{L}_{\lambda}(\mathbf{W})$,
where the gradient of $C(Z_{L})$ with respect to $W_{\ell}$ depends
on all parameters $W_{1},\ldots,W_{L}$.

\item The value $\left\Vert Z_{\ell}\left(Z_{\ell-1}^{\sigma}\right)^{+}\right\Vert _{F}^{2}$
represents a 'multiplicative distance' between $Z_{\ell}$ and $Z_{\ell-1}^{\sigma}$
(in contrast to the 'additive distance' $\left\Vert Z_{\ell}-Z_{\ell-1}^{\sigma}\right\Vert _{F}^{2}$);
the representation $Z_{\ell}$ therefore interpolates multiplicatively
between $Z_{\ell-1}$ and $Z_{\ell+1}$. This is most obvious for
linear networks (i.e. $\sigma=\mathrm{id}$ and $\beta=0$): in this
case, one can check that at any global minimizer, the covariances of the hidden layers equal $Z_{\ell}^{T}Z_{\ell}=X^{T}(X^{-T}Z_{L}^{T}Z_{L}X^{-1})^{\frac{\ell}{L}}X$, interpolating between the input covariance $X^{T}X$ and output covariance $Z_{L}^{T}Z_{L}$.

\item A lot of work has been done to propose biologically plausible training methods for DNNs \cite{bengio_2015_biologically_plausible, illing_2019_biologically_plausible}, in contrast to backpropagation which is not local. A line of work \cite{pehlevan_2017_similarity,Obeid_2019_deep_similarity_matching,qin_2021_similarity_matching}, propose a biologically plausible optimization technique which minimizes a cost which closely resembles our first reformulation, with the multiplicative distances $\left\Vert Z_{\ell}\left(Z_{\ell-1}^{\sigma}\right)^{+}\right\Vert _{F}^{2}$  replaced by additive ones $\left\Vert Z_{\ell}-Z_{\ell-1}^{\sigma}\right\Vert _{F}^{2}$. Due to this change, there is no direct correspondence between the networks trained with this biologically plausible technique and those trained with backpropagation. If one could extend this training technique to work with multiplicative distances one could guarantee such a direct correspondence.

\item The optimization leads to an attraction-repulsion algorithm. If we
optimize only on the term $Z_{\ell}$ and fix all other representations,
the only two terms that depend on $Z_{\ell}$ are $\left\Vert Z_{\ell}\left(Z_{\ell-1}^{\sigma}\right)^{+}\right\Vert _{F}^{2}$
and $\left\Vert Z_{\ell+1}\left(Z_{\ell}^{\sigma}\right)^{+}\right\Vert _{F}^{2}$.
The former term is attractive as it pushes the representations $Z_{\ell}$
towards the origin (and hence pushes the representations at depth
$\ell$ of every input towards each other), especially along directions
where $Z_{\ell-1}^{\sigma}$ is small. The latter term is repulsive
as it pushes the representations $Z_{\ell}^{\sigma}$ away from the
origin, especially along directions where $Z_{\ell+1}$ is large.
\item This attraction-repulsion process is similar to the Information
Bottleneck theory \cite{tishby2015_info_bottleneck}: the repulsive
term ensure that $Z_{\ell}$ keeps enough information about the inputs
to reconstruct $Z_{\ell+1}$, while the attractive term pushes
$Z_{\ell}$ to keep as little information as possible.
\end{enumerate}
The attraction and repulsion forces of of the $\ell$-th layer are
the derivative $\partial_{Z_{\ell}}\left\Vert Z_{\ell}\left(Z_{\ell-1}^{\sigma}\right)^{+}\right\Vert _{F}^{2}$
and $\partial_{Z_{\ell}}\left\Vert Z_{\ell+1}\left(Z_{\ell}^{\sigma}\right)^{+}\right\Vert _{F}^{2}$
which are both $n_{\ell}\times N$ matrices. One can visualize these
forces either column by column (each column corresponding to a datapoint
$i=1,\dots,N$) or line by line (each line corresponding to a neuron
$k=1,\dots,n_{\ell}$). These two visualizations of the forces are
presented in Figure \ref{fig:Attraction-Repulsion} for the two hidden
layers of a depth $L=3$ network, projected to the 2 largest principal
components of the columns resp. lines of $Z_{\ell}$. Figures \ref{fig:Attraction-Repulsion-ell1-inputs}
and \ref{fig:Attraction-Repulsion-ell2-inputs} illustrate how the
inputs corresponding to different classes are pushed away from each
other, leading to a clustering effect. Figures \ref{fig:Attraction-Repulsion-ell1-neurons}
and \ref{fig:Attraction-Repulsion-ell2-neurons} show that the neurons
naturally align along rays starting from the origin. This happens
for homogeneous non-linearities, such as the ReLU in this example,
because if two neurons $k,k'$ have proportional activations, i.e.
$Z_{\ell,k}=\alpha Z_{\ell,k}$ for some $\alpha\in\mathbb{R}$, then
their attractive and repulsive forces will also be proportional with
the same scaling $\alpha$. As a result, the neuron $k$ is stable,
i.e. the attraction and repulsion cancel each other, if and only if
the neuron $k'$ is stable.

This phenomenon can be interpreted as a form of sparsity: a group
of aligned neurons can be replaced by a single neuron without changing
the resulting function $Z_{L}$. Can we guarantee a degree of sparsity
in the hidden representations? Can we bound the number of aligned
groups in a neuron? In the next section, we introduce a further reformulation
of the loss which allows us to partially answer these questions.

\subsection{Covariance learning : partial convex optimization for positively
homogeneous nonlinearities}

The loss of the first reformulation $\mathcal{L}_{\lambda}^{r}$ depends
on the hidden representations $Z_{\ell}$ and $Z_{\ell}^{\sigma}$
only through the covariances $K_{\ell}=Z_{\ell}^{T}Z_{\ell}$ and
$K_{\ell}^{\sigma}=\left(Z_{\ell}^{\sigma}\right)^{T}Z_{\ell}^{\sigma}$,
since $\left\Vert Z_{\ell}\left(Z_{\ell-1}^{\sigma}\right)^{+}\right\Vert _{F}^{2}=\mathrm{Tr}\left[K_{\ell}\left(K_{\ell-1}^{\sigma}\right)^{+}\right]$.
Hence, we provide a second reformulation expressed in terms of the
tuple of covariance pairs $\mathbf{K}=((K_{1},K_{1}^{\sigma}),\dots,(K_{L-1},K_{L-1}^{\sigma}))$
and the outputs $Z_{L}$. Using the notations $K_{0}^{\sigma}=X^{T}X+\beta^{2}\mathbf{1}_{N\times N}$
and $K_{L}=Z_{L}^{T}Z_{L}$, we define: 
\[
\mathcal{L}_{\lambda}^{\mathrm{k}}(\mathbf{K},Z_{L})=C(Z_{L})+\lambda\sum_{\ell=1}^{L}\mathrm{Tr}\left[K_{\ell}\left(K_{\ell-1}^{\sigma}\right)^{+}\right].
\]
It remains to identify the set $\mathcal{K}_{\mathbf{n}}(X)$ of covariances
$\mathbf{K}$ and outputs $Z_{L}$ which can be represented by a width
$\mathbf{n}$ network with inputs $X$. For positively homogeneous
nonlinearities of degree 1 such as the ReLU (i.e. when $\sigma(\lambda x)=\lambda\sigma(x)$
for any positive $\lambda$), the set $\mathcal{K}_{\mathbf{n}}(X)$
can be expressed using the notion of conical hulls. 
\begin{defn}
The conical hull of $\Omega\subset\mathbb{R}^{d}$ is the set $\mathrm{cone}\left(\Omega\right):=\left\{ \sum_{i=1}^{k}\alpha_{i}\omega_{i}:k\geq0,\alpha_{i}\geq0,\omega_{i}\in\Omega\right\} $
and its $m$-conical hull for $m\geq1$ is the set $\mathrm{cone}_{m}\left(\Omega\right):=\left\{ \sum_{i=1}^{m}\alpha_{i}\omega_{i}:\alpha_{i}\geq0,\omega_{i}\in\Omega\right\} $.

Note that by Caratheodory's theorem for conical hulls, $\mathrm{cone}_{m}\left(\Omega\right)=\mathrm{cone}\left(\Omega\right)$
for any $m\geq d$. We now proceed to the description of the set $\mathcal{K}_{\mathbf{n}}(X)$
and obtain the second formulation of the $L_{2}$ regularized loss
and of the representation loss $R_{n}$ in terms of covariances: 
\end{defn}

\begin{prop}
\label{prop:secondrepresentation} For positively homogeneous non-linearities
$\sigma$, the infimum of $\mathcal{L}_{\lambda}(\mathbf{W})=C(Z_{L}(X;\mathbf{W}))+\lambda\left\Vert \mathbf{W}\right\Vert ^{2},$
over the parameters $\mathrm{W}\in\mathbb{R}^{P}$ is equal to the
infimum over $\mathcal{K}_{\mathbf{n}}(X)$ of
\[
\mathcal{L}_{\lambda}^{\mathrm{k}}(\mathbf{K},Z_{L})=C(Z_{L})+\lambda\sum_{\ell=1}^{L}\mathrm{Tr}\left[K_{\ell}\left(K_{\ell-1}^{\sigma}\right)^{+}\right].
\]
The set $\mathcal{K}_{\mathbf{n}}(X)$ is the set of covariances $\mathbf{K}=((K_{1},K_{1}^{\sigma}),\dots,(K_{L-1},K_{L-1}^{\sigma}))$
and outputs $Z_{L}$ such that for all hidden layer $\ell=1,\dots,L-1$: 
\begin{itemize}
\item the pair $\left(K_{\ell},K_{\ell}^{\sigma}\right)$ belongs to the
(translated) $n_{\ell}$-conical hull 
\[
S_{n_{\ell}}=\mathrm{cone}_{n_{\ell}}\left( \left\{ \left(xx^{T},\sigma(x)\sigma(x)^{T}\right):x\in\mathbb{R}^{N}\right\} \right) +(0,\beta^{2}\mathbf{1}_{N\times N}),
\]
\item $\mathrm{Im}K_{\ell}\subset\mathrm{Im}K_{\ell-1}^{\sigma}$, with
the notation $K_{0}^{\sigma}=X^{T}X+\beta^{2}\mathbf{1}_{N\times N}$,
and  $\mathrm{Im}Z_{L}\subset\mathrm{Im}K_{L-1}^{\sigma}.$ 
\end{itemize}
\end{prop}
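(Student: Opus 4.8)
The plan is to build on Proposition \ref{prop:first-reformulation} and the observation that $\mathcal{L}_\lambda^r$ depends on the hidden representations only through the covariances $K_\ell = Z_\ell^T Z_\ell$ and $K_\ell^\sigma = (Z_\ell^\sigma)^T Z_\ell^\sigma$, via the identity $\|Z_\ell (Z_{\ell-1}^\sigma)^+\|_F^2 = \mathrm{Tr}[K_\ell (K_{\ell-1}^\sigma)^+]$. So the content of the statement is really the identification of the feasible set: I need to show that the set of pairs $(\mathbf{K}, Z_L)$ realizable by some $\mathbf{Z} \in \mathcal{Z}$ with width $\mathbf{n}$ is exactly $\mathcal{K}_{\mathbf{n}}(X)$, plus checking that the two infima agree (which follows once the feasible sets match and the objectives coincide on them). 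First I would verify the trace identity: writing $A = (Z_{\ell-1}^\sigma)^+$, we have $\|Z_\ell A\|_F^2 = \mathrm{Tr}[A^T Z_\ell^T Z_\ell A] = \mathrm{Tr}[A A^T K_\ell]$, and $A A^T = (Z_{\ell-1}^\sigma)^+ ((Z_{\ell-1}^\sigma)^+)^T = ((Z_{\ell-1}^\sigma)^T Z_{\ell-1}^\sigma)^+ = (K_{\ell-1}^\sigma)^+$ using standard pseudo-inverse identities; one must be slightly careful but this is routine. The image condition $\mathrm{Im}\, Z_{\ell+1}^T \subset \mathrm{Im}\,(Z_\ell^\sigma)^T$ translates directly into $\mathrm{Im}\, K_{\ell+1} \subset \mathrm{Im}\, K_\ell^\sigma$ since $\mathrm{Im}\, Z^T Z = \mathrm{Im}\, Z^T$ for any matrix $Z$; likewise $\mathrm{Im}\, K_1 \subset \mathrm{Im}\, K_0^\sigma$ and $\mathrm{Im}\, Z_L \subset \mathrm{Im}\, K_{L-1}^\sigma$.

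The heart of the proof is the characterization of $S_{n_\ell}$. Given $Z_\ell \in \mathbb{R}^{n_\ell \times N}$, write its rows as $v_1^T, \dots, v_{n_\ell}^T$ with $v_k \in \mathbb{R}^N$. Then $K_\ell = Z_\ell^T Z_\ell = \sum_{k=1}^{n_\ell} v_k v_k^T$, and since $\sigma$ is applied entrywise, the rows of $\sigma(Z_\ell)$ are $\sigma(v_1)^T, \dots, \sigma(v_{n_\ell})^T$, so $Z_\ell^\sigma$ has covariance $\sum_{k=1}^{n_\ell} \sigma(v_k)\sigma(v_k)^T + \beta^2 \mathbf{1}_{N\times N}$ (the last row contributes the all-ones rank-one term). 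Hence $(K_\ell, K_\ell^\sigma) = \sum_{k=1}^{n_\ell} (v_k v_k^T, \sigma(v_k)\sigma(v_k)^T) + (0, \beta^2 \mathbf{1}_{N\times N})$, which is exactly a point of $S_{n_\ell}$ once we note that each summand, with $v_k \in \mathbb{R}^N$ free, ranges over $\{(xx^T, \sigma(x)\sigma(x)^T) : x \in \mathbb{R}^N\}$ — here positive homogeneity of degree $1$ is what lets an arbitrary nonnegative coefficient $\alpha_k$ be absorbed into $x \mapsto \sqrt{\alpha_k}\,x$, since $(\sqrt{\alpha_k}x)(\sqrt{\alpha_k}x)^T = \alpha_k xx^T$ and $\sigma(\sqrt{\alpha_k}x)\sigma(\sqrt{\alpha_k}x)^T = \alpha_k \sigma(x)\sigma(x)^T$. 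Conversely, given any $(K_\ell, K_\ell^\sigma) \in S_{n_\ell}$, write it as $\sum_{k=1}^{n_\ell}(x_k x_k^T, \sigma(x_k)\sigma(x_k)^T) + (0,\beta^2\mathbf{1})$ and take $Z_\ell$ to be the matrix with rows $x_k^T$; this reconstructs a valid representation of the required width. So the per-layer feasibility is a clean bijection-up-to-covariance.

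After that, I would assemble the layers: given $\mathbf{K} \in \mathcal{K}_{\mathbf{n}}(X)$, reconstruct each $Z_\ell$ from $(K_\ell, K_\ell^\sigma) \in S_{n_\ell}$ as above and set $Z_L$ as given; the image conditions guarantee $\mathbf{Z} = (Z_1, \dots, Z_L) \in \mathcal{Z}$, and $\mathcal{L}_\lambda^r(\mathbf{Z}) = \mathcal{L}_\lambda^{\mathrm{k}}(\mathbf{K}, Z_L)$ by the trace identity, so $\inf \mathcal{L}_\lambda^{\mathrm{k}} \geq \inf \mathcal{L}_\lambda^r$; the reverse inequality is immediate since every $\mathbf{Z} \in \mathcal{Z}$ induces a point of $\mathcal{K}_{\mathbf{n}}(X)$ with equal objective. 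Combining with Proposition \ref{prop:first-reformulation} (which equates $\inf \mathcal{L}_\lambda$ with $\inf \mathcal{L}_\lambda^r$) gives the claim. The same reconstruction argument applied with the extra constraint $Z_L = Y$ yields the representation-cost version $R_{\mathbf{n}}(X,Y) = \min_{\mathbf{K} \in \mathcal{K}_{\mathbf{n}}(X),\, Z_L = Y} \lambda^{-1}(\mathcal{L}_0^{\mathrm{k}} \text{ part})$, i.e. the covariance reformulation of $R_{\mathbf{n}}$.

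The main obstacle I anticipate is not any single deep step but rather handling the pseudo-inverse and image conditions carefully when the $K_\ell^\sigma$ are rank-deficient: one must confirm that $\mathrm{Tr}[K_\ell(K_{\ell-1}^\sigma)^+]$ is the right quantity even when $\mathrm{Im}\,K_\ell$ is a proper subspace of $\mathrm{Im}\,K_{\ell-1}^\sigma$, and that the entrywise action of $\sigma$ interacts correctly with the appended bias row $\beta\mathbf{1}_N^T$ (which is unaffected by $\sigma$, hence always contributes the fixed translation $(0, \beta^2\mathbf{1}_{N\times N})$ rather than something $\sigma$-dependent). A secondary subtlety is the role of Carathéodory's theorem: the statement fixes the width $\mathbf{n}$ so $S_{n_\ell}$ is genuinely an $n_\ell$-conical hull (not the full conical hull), and one should make sure the reconstruction never needs more than $n_\ell$ rank-one terms — which is automatic because we are given a point already expressed with $n_\ell$ terms, so no reduction is needed here (the Carathéodory remark matters for later width-independence arguments, not for this proposition per se).
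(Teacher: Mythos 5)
Your proposal is correct and follows essentially the same route as the paper's proof: both reduce the claim to showing that the set of covariance tuples realizable by a width-$\mathbf{n}$ network is exactly $\mathcal{K}_{\mathbf{n}}(X)$, reconstructing hidden representations $Z_\ell$ from each $(K_\ell,K_\ell^\sigma)\in S_{n_\ell}$ and then weights via $W_\ell = Z_\ell\left(Z_{\ell-1}^{\sigma}\right)^{+}$ using the image conditions. You spell out more explicitly the steps the paper leaves implicit — the pseudo-inverse identity behind $\left\Vert Z_{\ell}\left(Z_{\ell-1}^{\sigma}\right)^{+}\right\Vert _{F}^{2}=\mathrm{Tr}\left[K_{\ell}\left(K_{\ell-1}^{\sigma}\right)^{+}\right]$ and the row-decomposition/homogeneity argument identifying $S_{n_\ell}$ — but the argument is the same.
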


Note that one can also reformulate the representation cost: 
\[
R_{\mathbf{n}}(X,Y)=\min_{\mathbf{K}:(\mathbf{K},Y)\in\mathcal{K}_{\mathbf{n}}(X)}\sum_{\ell=1}^{L}\mathrm{Tr}\left[K_{\ell}\left(K_{\ell-1}^{\sigma}\right)^{+}\right].
\]

\begin{rem}
\label{rem:criticalpoints}In contrast to the first loss $\mathcal{L}_{\lambda}^{r}$
whose local minima were in correspondence with the local minima of
the original loss $\mathcal{L}_{\lambda}$, the second loss $\mathcal{L}_{\lambda}^{\mathrm{k}}$
can in some cases have strictly less critical points and local minima.
Indeed, since the map $\mathbf{W}\mapsto\left(\mathbf{K},Z_{L}\right)$
is continuous, if $(\mathbf{K}(\mathrm{W}),Z_{L}(\mathbf{W}))$ is
a local minimum, then so is $\mathbf{W}$. However, the converse is
not true: we provide a counterexample in the Appendix, i.e. a set
of weights $\mathbf{W}$ of a depth $L=2$ network which is a local
minimum of $\mathcal{L}_{\lambda}$ and such that the corresponding
$(\mathbf{K},Z_{L})$ is not a local minimum of $\mathcal{L}_{\lambda}^{\mathrm{k}}$.
\end{rem}

Since the dimension of the space of pairs of symmetric $N\times N$
matrices is $N(N+1)$, if $n_{\ell}\geq N(N+1)$ then, by the Caratheodory's
theorem for conical hulls, 
\[
S_{n_{\ell}}=S:=\mathrm{cone}\left( \left\{ \left(xx^{T},\sigma(x)\sigma(x)^{T}\right):x\in\mathbb{S}^{N-1}\right\} \right) +(0,\beta^{2}\mathbf{1}_{N\times N}).
\]
Hence, as soon as $n_{\ell}\geq N(N+1)$ for all hidden layers, the
set $\mathcal{K}_{\mathbf{n}}(X)$ does not depend on the list of
widths $\mathbf{n}$. We denote 
\[
\mathcal{K}(X)=\left\{ (\mathbf{K},Z_{L})\mid\forall\ell=1,\ldots,L-1,(K_{\ell},K_{\ell}^{\sigma})\in S,\mathrm{Im}K_{\ell}\subset\mathrm{Im}K_{\ell-1}^{\sigma},\mathrm{Im}Z_{L}\subset\mathrm{Im}K_{L-1}^{\sigma}\right\} 
\]
this width-independent set. The following proposition shows that for
sufficiently wide networks with a positively homogeneous nonlinearity,
training a deep DNN with $L_{2}$ regularization is  equivalent to
a partially convex optimization over a translated convex cone.
\begin{prop}
The set $\mathcal{K}(X)$ is a translated convex cone: after the
suitable translation, it is equal to its conical hull. The cost $\mathcal{L}_{\lambda}^{\mathrm{k}}\left(\mathbf{K},Z_{L}\right)$
is partially convex w.r.t. to the outputs $Z_{L}$ and the pairs $(K_{\ell},K_{\ell}^{\sigma})$,
i.e. it is convex if one fixes the other parameters and let only $(K_{\ell},K_{\ell}^{\sigma})$,
or $Z_{L}$, vary.
\end{prop}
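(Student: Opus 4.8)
The plan is to verify the two assertions separately: first that $\mathcal{K}(X)$ is a translated convex cone, and then that $\mathcal{L}_\lambda^{\mathrm{k}}$ is partially convex in each of the listed blocks of variables. For the cone statement, I would translate each pair by $(0,\beta^2 \mathbf{1}_{N\times N})$, so that the set $S$ becomes the genuine conical hull $S_0 := \mathrm{cone}(\{(xx^T,\sigma(x)\sigma(x)^T):x\in\mathbb{S}^{N-1}\})$, which is a convex cone by definition. The translated version of $\mathcal{K}(X)$ is then the set of tuples $((K_1,K_1^\sigma),\dots,(K_{L-1},K_{L-1}^\sigma),Z_L)$ such that each $(K_\ell,K_\ell^\sigma)\in S_0$, together with the image inclusion constraints $\mathrm{Im}K_\ell\subset\mathrm{Im}K_{\ell-1}^\sigma$ and $\mathrm{Im}Z_L\subset\mathrm{Im}K_{L-1}^\sigma$ (where $K_0^\sigma$ stays fixed, so the first such constraint is an honest linear-subspace constraint, not translated). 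The point is that all of these are closed under nonnegative scaling — if we scale the whole tuple by $t\ge 0$, each $(K_\ell,K_\ell^\sigma)$ stays in the cone $S_0$, and the image conditions are scale-invariant — and under addition: a sum of two pairs in $S_0$ is in $S_0$, and $\mathrm{Im}(A+B)\subset \mathrm{Im}A + \mathrm{Im}B$ gives $\mathrm{Im}(K_\ell + K_\ell') \subset \mathrm{Im}K_{\ell-1}^\sigma$ when both summands satisfy the inclusion (using that sums of PSD matrices have image equal to the sum of images, so no cancellation issues arise). Hence after translation the set equals its own conical hull.

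For the partial convexity of $\mathcal{L}_\lambda^{\mathrm{k}}(\mathbf{K},Z_L) = C(Z_L) + \lambda\sum_{\ell=1}^L \mathrm{Tr}[K_\ell (K_{\ell-1}^\sigma)^+]$, I would go block by block. Fixing all variables except $Z_L$: the only terms involving $Z_L$ are $C(Z_L)$ — convex by the standing assumption on the cost — and $\mathrm{Tr}[K_L (K_{L-1}^\sigma)^+] = \mathrm{Tr}[Z_L^T Z_L (K_{L-1}^\sigma)^+]$, which is a positive semidefinite quadratic form in the entries of $Z_L$ (since $(K_{L-1}^\sigma)^+\succeq 0$), hence convex; the image constraint $\mathrm{Im}Z_L\subset\mathrm{Im}K_{L-1}^\sigma$ with $K_{L-1}^\sigma$ fixed is a linear subspace constraint, hence convex. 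Fixing all variables except the pair $(K_\ell,K_\ell^\sigma)$ for some $1\le \ell\le L-1$: the two affected trace terms are $\mathrm{Tr}[K_\ell (K_{\ell-1}^\sigma)^+]$, which is linear in $K_\ell$ with $(K_{\ell-1}^\sigma)^+$ fixed, and $\mathrm{Tr}[K_{\ell+1}(K_\ell^\sigma)^+]$, where $K_{\ell+1}$ is fixed PSD; the key analytic fact is that $K^\sigma \mapsto \mathrm{Tr}[A (K^\sigma)^+]$ is convex on the PSD cone for fixed $A\succeq 0$ provided one restricts to the slice where $\mathrm{Im}A\subset\mathrm{Im}K^\sigma$ (this is the matrix analogue of the convexity of $t\mapsto 1/t$, and is a standard joint-convexity-type statement for the matrix function $(A,B)\mapsto \mathrm{Tr}[A^T B^{-1} A]$ restricted appropriately). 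The constraints on $(K_\ell,K_\ell^\sigma)$ are: membership in the convex cone $S_{n_\ell}$ (or $S$), the inclusion $\mathrm{Im}K_\ell\subset\mathrm{Im}K_{\ell-1}^\sigma$ which is linear in $K_\ell$, and $\mathrm{Im}K_{\ell+1}\subset\mathrm{Im}K_\ell^\sigma$ which — with $K_{\ell+1}$ fixed — is a convex constraint on $K_\ell^\sigma$ (the set of PSD matrices whose image contains a fixed subspace is convex). All these being convex, the restricted minimization is convex.

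The main obstacle I expect is the convexity of $K^\sigma \mapsto \mathrm{Tr}[K_{\ell+1}(K_\ell^\sigma)^+]$ on the appropriate slice of the PSD cone — the pseudo-inverse is not operator-convex as a function on all PSD matrices once the rank is allowed to change, so one must be careful to phrase the statement on the slice $\{K^\sigma \succeq 0 : \mathrm{Im}K_{\ell+1}\subset\mathrm{Im}K^\sigma\}$ where it does hold, and check that the feasible set genuinely lies in that slice (which it does, precisely because of the image-inclusion constraint built into $\mathcal{K}(X)$). I would handle this by writing, for $K^\sigma$ in that slice, $\mathrm{Tr}[K_{\ell+1}(K^\sigma)^+] = \sup_{M}\, 2\mathrm{Tr}[M^T K_{\ell+1}^{1/2}] - \mathrm{Tr}[M^T K^\sigma M]$ where the supremum ranges over matrices $M$ with appropriate image constraints; this expresses the function as a supremum of affine functions of $K^\sigma$, which is immediately convex, and one checks the supremum is attained with value matching the trace formula exactly on the slice where the image condition holds. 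The remaining pieces — convexity of the individual constraint sets and of $C$ — are routine, as is noting that $S_{n_\ell}$ and $S$ are convex cones by construction (a conical hull is always convex).
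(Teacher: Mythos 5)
Your proof is correct, and it supplies an argument the paper itself does not spell out: this proposition is stated in the main text without an explicit proof in the appendix, so there is nothing to compare against, but your route is the natural one. The two points that actually require care are both handled properly: (i) for the cone structure you use that images of PSD matrices add without cancellation, $\mathrm{Im}(A+B)=\mathrm{Im}\,A+\mathrm{Im}\,B$, which is exactly what makes the image-inclusion constraints stable under sums and under the $\beta^2\mathbf{1}_{N\times N}$ translation; and (ii) for the term $\mathrm{Tr}\bigl[K_{\ell+1}(K_\ell^\sigma)^+\bigr]$ you correctly restrict to the slice $\mathrm{Im}\,K_{\ell+1}\subset\mathrm{Im}\,K_\ell^\sigma$ (guaranteed by the definition of $\mathcal{K}(X)$) and obtain convexity there via the supremum-of-affine-functions representation $\sup_M\,2\mathrm{Tr}[M^TK_{\ell+1}^{1/2}]-\mathrm{Tr}[M^TK_\ell^\sigma M]$, which indeed equals the trace expression on that slice and is $+\infty$ off it. Two cosmetic remarks: the convexity of $C$ is only implicit in the paper (it is needed for the $Z_L$ block, as you note), and your parenthetical ``$S_{n_\ell}$ (or $S$)'' should be read as $S$ only, since $\mathrm{cone}_{m}$ for small $m$ need not be convex --- but the proposition concerns $\mathcal{K}(X)$, which is built from $S$, so this does not affect the argument.
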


\subsection{Direct optimization of the reformulations}

It is natural at this point to wonder whether one could optimize directly
over the representations $\mathbf{Z}$ (using the first reformulation)
or over the covariances $\mathbf{K}$ and output $Z_{L}$ (using the
second reformulation), and whether this would have an advantage over
the traditional optimization of the weights.

For the first reformulation, one can simply use the projected gradient
descent with updates given by
\[
\mathbf{Z}_{t+1}=P_{\mathcal{Z}}\left(\mathbf{Z}_{t}-\eta\nabla\mathcal{L}_{\lambda}^{r}(\mathbf{Z})\right)
\]
for any projection $P_{\mathcal{Z}}$ to the constraint space $\mathcal{Z}$.
For example, a projection is obtained by mapping $Z_{\ell}$ to $Z_{\ell}P_{\mathrm{Im}Z_{\ell-1}^{\sigma}}$
sequentially from $\ell=1$ to $\ell=L$. Note however that the loss explodes as the constraints become unsatisfied so that for gradient flow, there is no need for the projections. This suggests that these projections might also be unnecessary as long as the learning rate is small enough. For more details, see Appendix \ref{subsec:optimization-projection}.

For the second reformulation, there is no obvious way to compute a
projection to the constraint space $\mathcal{K}$: the cone $S$ is
spanned by an infinite amount of points and we do not have an explicit
formula for the dual cone $S^{*}$. Frank-Wolfe optimization can be
used to overcome the need for computing the projections.

However, these direct optimizations of the reformulations lead to
issues of computational complexity and stability. First, the computation
of the gradients $\nabla\mathcal{L}_{\lambda}^{r}(\mathbf{Z})$ and
$\nabla\mathcal{L}_{\lambda}^{\mathrm{k}}(\mathbf{K},Z_{L})$ requires
solving a linear equation of dimension $N$, which is very costly,
in contrast to the traditional optimization of the weights $\mathbf{W}$
for which the gradient can be computed very efficiently. Second, if
$Z_{\ell}^{\sigma}$ is not full-rank, the computation of its pseudo-inverse
$\left(Z_{\ell}^{\sigma}\right)^{+}$ and the projection $P_{\mathrm{Im}Z_{\ell}^{\sigma}}$
are very unstable. Therefore, if we only have finite-precision knowledge
of $Z_{\ell}$, we cannot reliably compute $\left(Z_{\ell}^{\sigma}\right)^{+}$
nor $P_{\mathrm{Im}Z_{\ell}^{\sigma}}$.

Although it could be possible to solve these problems (e.g. using
the Tikhonov regularization for the unstability problem) and to develop
efficient algorithms to optimize both reformulations efficiently,
we decided in this paper to focus on the theoretical implications
of these reformulations. 

\section{Sparsity of the Regularized Optimum for Homogeneous DNNs}

In this section, we assume that the non-linearity is positively homogenous.
Under this assumption, the second reformulation of the loss (and of
the representation cost) holds and implies the existence of a sparsity
phenomenon.

First observe that as the widths $\mathbf{n}$ increase, both the
global minimizer of the loss $\min_{\mathbf{W}}\mathcal{L}_{\lambda}(\mathbf{W})$
and the representation cost $R_{\mathbf{n}}(X,Y)$ diminish. We denote
by $\mathcal{L}_{\lambda,\mathbf{n}}$ the $L_{2}$-regularized loss
of DNNs with widths $\mathrm{n}$. Recall that the depth $L$ is fixed. 
\begin{prop}
If $\mathbf{n}\leq\mathbf{n}'$ (in the sense that $n_{\ell}\leq n'_{\ell}$
for all $\ell$ and $n_{0}=n'_{0}$ and $n_{L}=n'_{L}$), then
\[
\min_{\mathbf{W}\in\mathbb{R}^{P_{\mathbf{n}}}}\mathcal{L}_{\lambda,\mathbf{n}}(\mathbf{W})\geq\min_{\mathbf{W}\in\mathbb{R}^{P_{\mathbf{n}'}}}\mathcal{L}_{\lambda,\mathbf{n}'}(\mathbf{W})
\]
and for any $X\in\mathbb{R}^{n_{0}\times N}$ and $Y\in\mathbb{R}^{n_{L}\times N}$,
$R_{\mathbf{n}}(X,Y)\geq R_{\mathbf{n}'}(X,Y).$
\end{prop}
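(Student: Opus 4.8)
The plan is to prove the monotonicity by exhibiting, for any width $\mathbf{n} \leq \mathbf{n}'$, a norm-preserving embedding of width-$\mathbf{n}$ networks into width-$\mathbf{n}'$ networks. Concretely, given weights $\mathbf{W} = (W_\ell)_{\ell=1,\dots,L}$ for a width-$\mathbf{n}$ network, I would construct weights $\mathbf{W}' = (W'_\ell)$ for a width-$\mathbf{n}'$ network by padding each $W_\ell$ with zero rows and zero columns: since $n_0 = n'_0$ and $n_L = n'_L$, the input and output dimensions match, and for each hidden layer I set the first $n_\ell$ coordinates to carry the original computation while the extra $n'_\ell - n_\ell$ neurons have all incoming and outgoing weights equal to zero. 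One then checks that $Z_L(X;\mathbf{W}') = Z_L(X;\mathbf{W})$ (the extra neurons are dead and feed nothing forward, so the output is unchanged) and that $\|\mathbf{W}'\|^2 = \|\mathbf{W}\|^2$ (padding with zeros does not change the Frobenius norm). Hence $\mathcal{L}_{\lambda,\mathbf{n}'}(\mathbf{W}') = \mathcal{L}_{\lambda,\mathbf{n}}(\mathbf{W})$.

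Taking the infimum over $\mathbf{W}$ on the right-hand side, the image of the embedding is contained in the feasible set for the width-$\mathbf{n}'$ problem, so
\[
\min_{\mathbf{W}\in\mathbb{R}^{P_{\mathbf{n}'}}}\mathcal{L}_{\lambda,\mathbf{n}'}(\mathbf{W})\leq \mathcal{L}_{\lambda,\mathbf{n}'}(\mathbf{W}') = \mathcal{L}_{\lambda,\mathbf{n}}(\mathbf{W})
\]
for every $\mathbf{W}$, and taking the infimum over $\mathbf{W}$ gives the first inequality. For the representation cost statement, the same embedding shows that whenever $Z_L(X;\mathbf{W}) = Y$ for a width-$\mathbf{n}$ network, the padded $\mathbf{W}'$ also satisfies $Z_L(X;\mathbf{W}') = Y$ with the same norm; so the feasible set (parameters realizing $Y$) for width $\mathbf{n}$ maps into that for width $\mathbf{n}'$ norm-preservingly, and the minimum can only decrease: $R_{\mathbf{n}}(X,Y) \geq R_{\mathbf{n}'}(X,Y)$. (The edge case where $R_{\mathbf{n}}(X,Y) = +\infty$ is trivial since the inequality holds vacuously.)

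The one point needing slight care — and the only real obstacle — is the bookkeeping around the bias terms in the definition of $Z_\ell^\sigma$: each $W_\ell$ has shape $n_\ell \times (n_{\ell-1}+1)$, the extra column handling the constant $\beta\mathbf{1}_N^T$ row appended to $Z_{\ell-1}^\sigma$. When padding, I must ensure the new zero neurons at layer $\ell-1$ contribute a zero row to $Z_{\ell-1}^{\sigma}$ rather than interfering with the bias row, and that the bias column of $W'_\ell$ still multiplies the genuine $\beta\mathbf{1}_N^T$ row; this is just a matter of inserting the zero rows/columns in the right block positions (e.g. the extra activation coordinates go above the bias row), but it should be stated explicitly. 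Everything else — that a neuron with zero outgoing weights cannot affect any later layer regardless of the nonlinearity $\sigma$, and that $\sigma(0)$-valued entries are annihilated — follows immediately from the recursive definition of $Z_\ell$, $Z_\ell^\sigma$, so no homogeneity or structural assumption on $\sigma$ is needed. I would write the embedding for a single layer increment $n_\ell \mapsto n_\ell + 1$ and then note the general case follows by composing such increments.
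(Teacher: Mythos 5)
Your proposal is correct and is essentially the paper's own argument: the paper likewise pads the optimal width-$\mathbf{n}$ parameters with ``dead'' neurons (zero incoming and outgoing weights), which preserves both the network function and the parameter norm. Your additional care about the bias row/column placement and the observation that zero outgoing weights suffice regardless of $\sigma(0)$ are correct refinements of the same construction.
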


\begin{proof}
Let us assume that the parameters $\mathbf{W}^{*}$ are optimal for
a width $\mathbf{n}$ network, the parameters can be mapped to parameters
of a wider network by adding `dead' neurons (i.e. neurons with zero
incoming and outcoming weights) without changing the network
function $f_{\mathbf{W}^{*}}$ nor the norm of the parameters $\left\Vert W\right\Vert $. 
\end{proof}
For DNNs with positively homogeneous nonlinearities, a direct consequence
of our reformulation through the hidden covariances is that both the
global minimum of the loss $\mathcal{L}_{\lambda}$ and the representation
cost $R_{\mathbf{n}}(X,Y)$ plateau for any widths $\mathbf{n}$ such
that $n_{\ell}\geq N(N+1)$. 
\begin{prop}
\label{prop:upper_bound_plateau} For any positively homogeneous nonlinearity
$\sigma$, any widths $\mathbf{n}$ and $\mathbf{n}'$ such that $n_{0}=n'_{0}$,
$n_{L}=n'_{L}$ and for all $\ell=1,\dots,L-1$, $n_{\ell},n'_{\ell}\geq N(N+1)$,
for all $\lambda>0$, we have: 
\begin{align*}
\min_{\mathbf{W}\in\mathbb{R}^{P_{\mathbf{n}}}}\mathcal{L}_{\lambda,\mathbf{n}}(\mathbf{W}) & =\min_{\mathbf{W}\in\mathbb{R}^{P_{\mathbf{n}'}}}\mathcal{L}_{\lambda,\mathbf{n}'}(\mathbf{W}).
\end{align*}
Under the same conditions $R_{\mathbf{n}}(X,Y)=R_{\mathbf{n}'}(X,Y)$ for any $X\in\mathbb{R}^{n_{0}\times N}$
and $Y\in\mathbb{R}^{n_{L}\times N}$,.

\end{prop}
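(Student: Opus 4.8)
The plan is to deduce this directly from the covariance reformulation, Proposition~\ref{prop:secondrepresentation}, together with Carathéodory's theorem for conical hulls. The only way the widths $\mathbf{n}$ enter the feasible set $\mathcal{K}_{\mathbf{n}}(X)$ of Proposition~\ref{prop:secondrepresentation} is through the translated $n_\ell$-conical hulls $S_{n_\ell}=\mathrm{cone}_{n_\ell}(\Omega)+(0,\beta^{2}\mathbf{1}_{N\times N})$ with $\Omega=\{(xx^{T},\sigma(x)\sigma(x)^{T}):x\in\mathbb{R}^{N}\}$; the image constraints $\mathrm{Im}\,K_\ell\subset\mathrm{Im}\,K_{\ell-1}^{\sigma}$ and $\mathrm{Im}\,Z_L\subset\mathrm{Im}\,K_{L-1}^{\sigma}$ are width-free. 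So the first step is to observe that $\Omega$ lives inside the space of pairs of symmetric $N\times N$ matrices, whose dimension is $2\cdot\tfrac{N(N+1)}{2}=N(N+1)$; by Carathéodory's theorem for conical hulls, $\mathrm{cone}_{m}(\Omega)=\mathrm{cone}(\Omega)$ for every $m\geq N(N+1)$, and using the degree-$1$ positive homogeneity of $\sigma$ to absorb positive scalars one gets $\mathrm{cone}(\Omega)=\mathrm{cone}(\{(xx^{T},\sigma(x)\sigma(x)^{T}):x\in\mathbb{S}^{N-1}\})$. Hence $S_{n_\ell}=S$ as soon as $n_\ell\geq N(N+1)$.

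Consequently, under the hypotheses of the proposition ($n_0=n_0'$, $n_L=n_L'$, and $n_\ell,n_\ell'\geq N(N+1)$ for $\ell=1,\dots,L-1$) we have $\mathcal{K}_{\mathbf{n}}(X)=\mathcal{K}(X)=\mathcal{K}_{\mathbf{n}'}(X)$, where $\mathcal{K}(X)$ is the width-independent set defined above, and the loss $\mathcal{L}_{\lambda}^{\mathrm{k}}(\mathbf{K},Z_L)=C(Z_L)+\lambda\sum_{\ell=1}^{L}\mathrm{Tr}[K_\ell(K_{\ell-1}^{\sigma})^{+}]$ does not involve the widths either. Applying Proposition~\ref{prop:secondrepresentation} twice then gives
\[
\inf_{\mathbf{W}\in\mathbb{R}^{P_{\mathbf{n}}}}\mathcal{L}_{\lambda,\mathbf{n}}(\mathbf{W})=\inf_{(\mathbf{K},Z_L)\in\mathcal{K}(X)}\mathcal{L}_{\lambda}^{\mathrm{k}}(\mathbf{K},Z_L)=\inf_{\mathbf{W}\in\mathbb{R}^{P_{\mathbf{n}'}}}\mathcal{L}_{\lambda,\mathbf{n}'}(\mathbf{W}),
\]
the middle expression being manifestly independent of $\mathbf{n},\mathbf{n}'$. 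For the representation cost I would run the identical argument on the reformulation $R_{\mathbf{n}}(X,Y)=\min_{\mathbf{K}:(\mathbf{K},Y)\in\mathcal{K}_{\mathbf{n}}(X)}\sum_{\ell=1}^{L}\mathrm{Tr}[K_\ell(K_{\ell-1}^{\sigma})^{+}]$: since the feasible set of $\mathbf{K}$ is cut out of the width-independent $\mathcal{K}(X)$ by the width-free condition $Z_L=Y$, it is the same for $\mathbf{n}$ and $\mathbf{n}'$, hence $R_{\mathbf{n}}(X,Y)=R_{\mathbf{n}'}(X,Y)$.

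The only genuine subtlety is the replacement of ``$\inf$'' by ``$\min$'' in the statement. For $\mathcal{L}_\lambda$ this is not actually needed for the equality (it holds at the level of infima), and it holds outright whenever $C$ is continuous and bounded below, since the regularizer $\lambda\|\mathbf{W}\|^{2}$ makes $\mathcal{L}_{\lambda,\mathbf{n}}$ coercive. For $R_{\mathbf{n}}$, attainment requires that the feasible set of covariance pairs be closed and that $\mathbf{K}\mapsto\sum_\ell\mathrm{Tr}[K_\ell(K_{\ell-1}^{\sigma})^{+}]$ behave well under limits; closedness of $S$ follows because $\{(xx^{T},\sigma(x)\sigma(x)^{T}):x\in\mathbb{S}^{N-1}\}$ is compact with first coordinate of trace $1$ (so its conical hull is closed), and the rest is a standard coercivity and lower-semicontinuity argument. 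I expect this attainment bookkeeping, rather than the Carathéodory step, to be the only place requiring care; the core equality is essentially immediate from Proposition~\ref{prop:secondrepresentation}.
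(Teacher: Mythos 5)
Your proposal is correct and follows essentially the same route as the paper: the paper's own proof is a one-line appeal to Proposition~\ref{prop:secondrepresentation} together with Carath\'eodory's theorem for conical hulls giving $S_{n_\ell}=S$ once $n_\ell\geq N(N+1)$, which is exactly your argument spelled out in more detail. Your additional remarks on attainment of the minima go beyond what the paper addresses but do not change the approach.
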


\begin{proof}
This follows directly from the second reformulation and the fact that
by Caratheodory's theorem for conical hulls, $S_{n}=S$ if $n\geq N(N+1)$
(see discussion after Remark \ref{rem:criticalpoints}). 
\end{proof}
We can therefore define a width-independent representation cost $R(X,Y)$
(which still depends on the fixed depth $L$) equal to the representation
cost $R_{\mathbf{n}}(X,Y)$ of any sufficiently wide network.

\begin{figure}
\center

\includegraphics[height=4cm]{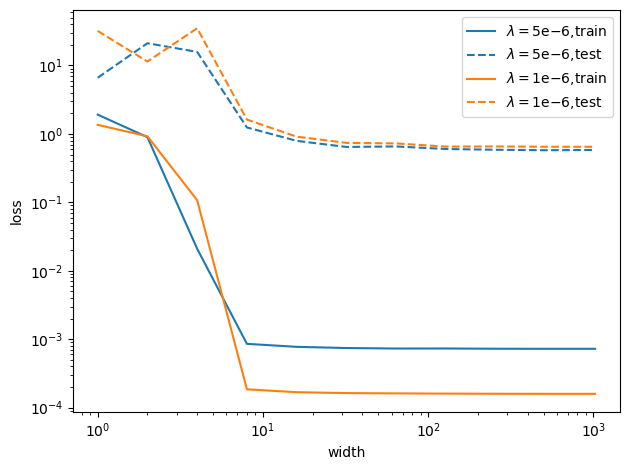}
\hspace{0.3cm}
\includegraphics[height=4cm]{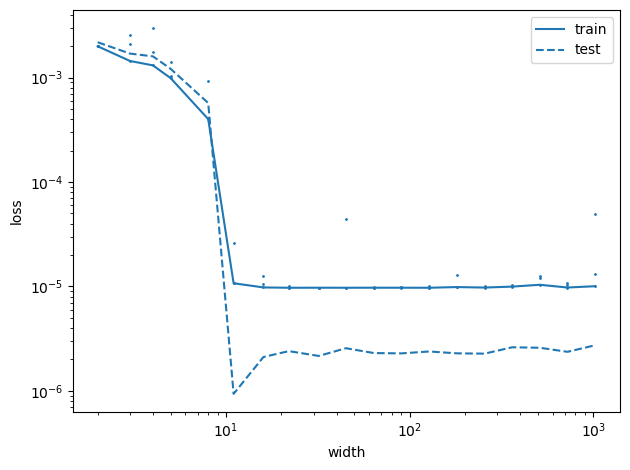}

\caption{\label{fig:Loss-plateau}\textbf{Loss plateau:} Plots of the train
loss (full lines) and test loss (dashed lines) as a function of the
width for depth $L=3$ DNNs for different datasets: (left) cross-entropy loss for a subset of MNIST ($N=1000$)
and two values of $\lambda$; (right) MSE with $\lambda=10^{-6}$ on $N=1000$ Gaussian inputs and outputs evaluated on a fixed teacher network of depth $L=3$  and width 10( (right). For the right
plot we took (the minimum is taken over 3 independent trials, represented by
the small blue dots). In both settings, the plateau appears to start around
10, much earlier than $N^2=10^6$. The regularization term is included in the training loss but not the test, leading to a smaller test loss on the right.}
\end{figure}

\subsection{Rank of the Hidden Representations}

Now that we have revealed the plateau phenomenon, a natural question
that we investigate in this section is when does this plateau begin.
In order to do so, we introduce the notion of rank $\mathrm{Rank}_{\sigma}(K,K^{\sigma})$
of a pair of Gram matrices $(K,K^{\sigma})\in S$ which is the minimal
number $k$ such that 
\begin{equation}
K=\sum_{i=1}^{k}z_{i}z_{i}^{T}\text{ and }K^{\sigma}=\sum_{i=1}^{k}\sigma\left(z_{i}\right)\sigma\left(z_{i}\right)^{T}+\beta^{2}\mathbf{1}_{N\times N}\label{eq:Rank-Signa}
\end{equation}
for some $z_{1},\dots,z_{k}\in\mathbb{R}^{N}$. This notion of rank
describes exactly the minimal number of neurons required to recover
a set of covariances $\mathbf{K}$:
\begin{prop}\label{prop:rank_is_start_of_plateau}
Let $(\mathbf{K},Z_L)\in\mathcal{K}(X)$,
then there are parameters $\mathbf{W}$ of a width $\mathbf{n}$ network
with covariances and outputs $\mathbf{K}$ if and only if $n_{\ell}\geq\mathrm{Rank}_{\sigma}\left(K_{\ell},K_{\ell}^{\sigma}\right)$
for all $\ell=1,\dots,L-1$.
\end{prop}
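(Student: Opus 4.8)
The plan is to prove both directions of the equivalence in Proposition \ref{prop:rank_is_start_of_plateau} by making precise the correspondence between a hidden layer of $n_\ell$ neurons and a decomposition of the covariance pair $(K_\ell,K_\ell^\sigma)$ into a sum of $n_\ell$ rank-one terms of the prescribed form. The key dictionary is: the rows of $Z_\ell$ (equivalently the columns of $Z_\ell^T$), indexed by the $n_\ell$ neurons, are vectors $z_1,\dots,z_{n_\ell}\in\mathbb{R}^N$, and one has $K_\ell = Z_\ell^T Z_\ell = \sum_{i=1}^{n_\ell} z_i z_i^T$ and, because the nonlinearity acts entrywise and the bias row contributes $\beta^2\mathbf{1}_{N\times N}$, $K_\ell^\sigma = (Z_\ell^\sigma)^T Z_\ell^\sigma = \sum_{i=1}^{n_\ell}\sigma(z_i)\sigma(z_i)^T + \beta^2\mathbf{1}_{N\times N}$. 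This is exactly the form appearing in the definition \eqref{eq:Rank-Signa} of $\mathrm{Rank}_\sigma$, so the number of neurons realizing a given pair $(K_\ell,K_\ell^\sigma)$ is precisely the number of summands in such a decomposition, and the minimal such number is $\mathrm{Rank}_\sigma(K_\ell,K_\ell^\sigma)$.

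For the ``only if'' direction, suppose $\mathbf{W}$ realizes the covariances $\mathbf{K}$ and outputs $Z_L$ on a width-$\mathbf{n}$ network. Then for each hidden layer $\ell$ the rows of $Z_\ell(X;\mathbf{W})$ give an explicit decomposition of $(K_\ell,K_\ell^\sigma)$ into $n_\ell$ rank-one terms of the required shape, whence $n_\ell \ge \mathrm{Rank}_\sigma(K_\ell,K_\ell^\sigma)$ by definition of the rank as the minimum. For the ``if'' direction, assume $n_\ell \ge \mathrm{Rank}_\sigma(K_\ell,K_\ell^\sigma)$ for all $\ell=1,\dots,L-1$. Pick for each $\ell$ a minimal decomposition $K_\ell=\sum_{i=1}^{k_\ell} z_{\ell,i}z_{\ell,i}^T$, $K_\ell^\sigma=\sum_{i=1}^{k_\ell}\sigma(z_{\ell,i})\sigma(z_{\ell,i})^T+\beta^2\mathbf{1}_{N\times N}$ with $k_\ell=\mathrm{Rank}_\sigma(K_\ell,K_\ell^\sigma)\le n_\ell$, pad with $n_\ell-k_\ell$ zero vectors $z_{\ell,i}=0$ (which contribute $0$ to $K_\ell$ and $0$ to the $\sigma$-part, so the pair is unchanged), and form $Z_\ell\in\mathbb{R}^{n_\ell\times N}$ whose rows are the $z_{\ell,i}$. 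One then recovers the weights by the first reformulation, setting $W_\ell = Z_\ell (Z_{\ell-1}^\sigma)^+$ (with $Z_0^\sigma$ the fixed input-plus-bias matrix and $Z_L^\sigma$ replaced by $Z_L$ at the last layer). The constraints $\mathrm{Im}\,K_\ell\subset\mathrm{Im}\,K_{\ell-1}^\sigma$ translate, via $\mathrm{Im}\,K_\ell = \mathrm{Im}\,Z_\ell^T$, into $\mathrm{Im}\,Z_\ell^T\subset\mathrm{Im}\,(Z_{\ell-1}^\sigma)^T$, which is exactly the condition under which $W_\ell Z_{\ell-1}^\sigma = Z_\ell$ (the pseudo-inverse reconstructs $Z_\ell$ because $Z_\ell$ has rows in the row-span of $Z_{\ell-1}^\sigma$); this is the same computation already used in the proof of Proposition \ref{prop:first-reformulation}. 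One checks inductively that the resulting network produces pre-activations $Z_\ell(X;\mathbf{W})=Z_\ell$, hence the prescribed covariances $K_\ell,K_\ell^\sigma$ and output $Z_L$.

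The one point requiring a little care — and the main obstacle — is the bookkeeping with the bias: the activation matrix $Z_\ell^\sigma$ has $n_\ell+1$ rows (the appended $\beta\mathbf{1}_N^T$), so $K_\ell^\sigma = (Z_\ell^\sigma)^T Z_\ell^\sigma$ really does pick up the $+\beta^2\mathbf{1}_{N\times N}$ term and the weight matrix $W_\ell$ has $n_{\ell-1}+1$ columns, the last acting on the bias coordinate. One must verify that the image condition $\mathrm{Im}\,Z_L\subset\mathrm{Im}\,K_{L-1}^\sigma=\mathrm{Im}\,(Z_{L-1}^\sigma)^T$ and $\mathrm{Im}\,K_1\subset\mathrm{Im}\,K_0^\sigma = \mathrm{Im}\,(X^TX+\beta^2\mathbf{1}_{N\times N})$ are exactly what is needed for $W_1$ and $W_L$ to reproduce $Z_1$ and $Z_L$ respectively, using the identity $\mathrm{P}_{\mathrm{Im}Z_{\ell-1}^\sigma} = Z_{\ell-1}^\sigma (Z_{\ell-1}^\sigma)^+$ from the discussion preceding Proposition \ref{prop:first-reformulation}. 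Beyond this, the argument is essentially the observation that ``neurons = rank-one summands'' together with the already-established first reformulation, so no further heavy machinery is needed.
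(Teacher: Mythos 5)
Your proof is correct and follows essentially the same route as the paper's: the "only if" direction is the observation that the $n_\ell$ rows of $Z_\ell(X;\mathbf{W})$ already give a decomposition of $(K_\ell,K_\ell^\sigma)$ of the form (\ref{eq:Rank-Signa}), and the "if" direction takes a minimal decomposition, pads it with zero rows to width $n_\ell$, and reconstructs the weights layer by layer as $W_\ell = Z_\ell(Z_{\ell-1}^\sigma)^+$, using $\mathrm{Im}\,K_\ell\subset\mathrm{Im}\,K_{\ell-1}^\sigma$ (equivalently $\mathrm{Im}\,Z_\ell^T\subset\mathrm{Im}\,(Z_{\ell-1}^\sigma)^T$) to ensure the pseudo-inverse reproduces $Z_\ell$. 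Your additional remarks on the bias bookkeeping are consistent with the paper's conventions and do not change the argument.
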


We can now describe the plateau $R=\left\{ \mathbf{n}:\min_{\mathbf{W}\in\mathbb{R}^{P_{\mathbf{n}}}}\mathcal{L}_{\lambda,\mathbf{n}}(\mathbf{W})=\min_{\mathbf{m}}\min_{\mathbf{W}\in\mathbb{R}^{P_{\mathbf{m}}}}\mathcal{L}_{\lambda,\mathbf{m}}(\mathbf{W})\right\} $,
i.e. the set of widths $\mathbf{n}$ such that the minimum $\min_{\mathbf{W}\in\mathbb{R}^{P_{\mathbf{n}}}}\mathcal{L}_{\lambda,\mathbf{n}}(\mathbf{W})$
is optimal over all possible widths:
\begin{cor}
Let $K_{\min}$ be the set of covariances sequences $(\mathbf{K},Z_L)$
which are global minima of the second reformulation. We have that
$\mathbf{n}\in R$ if and only if there is a $(\mathbf{K},Z_L)\in K_{\min}$
such that $n_{\ell}\geq\mathrm{Rank}_{\sigma}\left(K_{\ell},K_{\ell}^{\sigma}\right)$. 
\end{cor}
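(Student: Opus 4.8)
The plan is to identify the plateau value $v:=\min_{\mathbf{m}}\min_{\mathbf{W}\in\mathbb{R}^{P_{\mathbf{m}}}}\mathcal{L}_{\lambda,\mathbf{m}}(\mathbf{W})$ with $\inf_{(\mathbf{K},Z_{L})\in\mathcal{K}(X)}\mathcal{L}_{\lambda}^{\mathrm{k}}(\mathbf{K},Z_{L})$, and then to use Proposition~\ref{prop:rank_is_start_of_plateau} to determine exactly which widths $\mathbf{n}$ can realize a minimizing covariance sequence. Assuming $C$ is bounded below (so that each $\mathcal{L}_{\lambda,\mathbf{n}}$ is coercive and continuous and attains its minimum, and $\mathcal{L}_{\lambda}^{\mathrm{k}}$ attains its minimum on the closed set $\mathcal{K}(X)$), Proposition~\ref{prop:upper_bound_plateau} gives that $v$ is attained by every $\mathbf{m}$ whose hidden widths are all $\geq N(N+1)$, while Proposition~\ref{prop:secondrepresentation} applied to such an $\mathbf{m}$ identifies $v=\inf_{\mathcal{K}(X)}\mathcal{L}_{\lambda}^{\mathrm{k}}$; in particular $K_{\min}\neq\emptyset$. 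For every width $\mathbf{n}$, Proposition~\ref{prop:secondrepresentation} together with the inclusion $\mathcal{K}_{\mathbf{n}}(X)\subset\mathcal{K}(X)$ gives $\min_{\mathbf{W}}\mathcal{L}_{\lambda,\mathbf{n}}(\mathbf{W})\geq v$, so $\mathbf{n}\in R$ is equivalent to $\min_{\mathbf{W}}\mathcal{L}_{\lambda,\mathbf{n}}(\mathbf{W})=v$.

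For the forward implication, given $(\mathbf{K},Z_{L})\in K_{\min}$ with $n_{\ell}\geq\mathrm{Rank}_{\sigma}(K_{\ell},K_{\ell}^{\sigma})$ for all $\ell=1,\dots,L-1$, I would apply Proposition~\ref{prop:rank_is_start_of_plateau} (which, under exactly this rank condition, guarantees that $(\mathbf{K},Z_{L})$ is realized by a width-$\mathbf{n}$ network) to get parameters whose representations have covariances $\mathbf{K}$ and output $Z_{L}$; replacing each $W_{\ell}$ by $Z_{\ell}(Z_{\ell-1}^{\sigma})^{+}$, which by the decomposition preceding Proposition~\ref{prop:first-reformulation} leaves all representations and the output unchanged and sets $\tilde{W}_{\ell}=0$, turns $\Vert W_{\ell}\Vert_{F}^{2}\geq\mathrm{Tr}[K_{\ell}(K_{\ell-1}^{\sigma})^{+}]$ into an equality, so the resulting $\mathbf{W}$ satisfies $\mathcal{L}_{\lambda,\mathbf{n}}(\mathbf{W})=\mathcal{L}_{\lambda}^{\mathrm{k}}(\mathbf{K},Z_{L})=v$; hence $\min_{\mathbf{W}}\mathcal{L}_{\lambda,\mathbf{n}}(\mathbf{W})=v$ and $\mathbf{n}\in R$. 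For the converse, given $\mathbf{n}\in R$, pick a minimizer $\mathbf{W}^{*}$ of $\mathcal{L}_{\lambda,\mathbf{n}}$; at a minimizer one must have $\tilde{W}_{\ell}^{*}=0$ for all $\ell$ (otherwise dropping the residuals strictly decreases the regularization term while leaving $C(Z_{L})$ unchanged), so the associated covariances/output $(\mathbf{K},Z_{L})\in\mathcal{K}_{\mathbf{n}}(X)\subset\mathcal{K}(X)$ satisfy $\mathcal{L}_{\lambda}^{\mathrm{k}}(\mathbf{K},Z_{L})=\mathcal{L}_{\lambda,\mathbf{n}}(\mathbf{W}^{*})=v$, whence $(\mathbf{K},Z_{L})\in K_{\min}$; and since $\mathbf{W}^{*}$ realizes the covariances $\mathbf{K}$ (and output $Z_{L}$) on a width-$\mathbf{n}$ network, Proposition~\ref{prop:rank_is_start_of_plateau} forces $n_{\ell}\geq\mathrm{Rank}_{\sigma}(K_{\ell},K_{\ell}^{\sigma})$ for all $\ell=1,\dots,L-1$.

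The only genuinely delicate bookkeeping is the identification $v=\inf_{\mathcal{K}(X)}\mathcal{L}_{\lambda}^{\mathrm{k}}$ and the attainment of all the minima involved (which is where the boundedness below of $C$ and the closedness of $\mathcal{K}(X)$ enter); once these are in hand, the corollary is a direct combination of Propositions~\ref{prop:secondrepresentation}, \ref{prop:upper_bound_plateau} and \ref{prop:rank_is_start_of_plateau}, together with the observation that at an optimum one may always take $\tilde{W}_{\ell}=0$, which is precisely what makes $\mathcal{L}_{\lambda,\mathbf{n}}$ and $\mathcal{L}_{\lambda}^{\mathrm{k}}$ agree at the relevant points.
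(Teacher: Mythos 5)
Your proof is correct and follows exactly the route the paper intends: the corollary is stated without a separate proof precisely because it is the direct combination of Proposition~\ref{prop:secondrepresentation} (the loss value depends on $\mathbf{W}$ only through $(\mathbf{K},Z_L)$, with equality when $\tilde{W}_\ell=0$), Proposition~\ref{prop:upper_bound_plateau} (identifying the plateau value with $\inf_{\mathcal{K}(X)}\mathcal{L}_\lambda^{\mathrm{k}}$), and both directions of Proposition~\ref{prop:rank_is_start_of_plateau}. Your explicit flagging of the attainment of the minima is a reasonable caveat that the paper leaves implicit.
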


Hence, the investigation of $\mathrm{Rank}_{\sigma}(\cdot,\cdot)$
is crucial to understand where the plateau begins; unfortunately,
it can be difficult to compute. However, from its definition and the
Caratheodory's theorem for conical hulls (see our discussion after
Remark \ref{rem:criticalpoints}), we have the following natural bounds: 
\begin{lem}
\label{lem:bound_rank}For any pair $(K,K^{\sigma})\in S$, we have
$\mathrm{Rank}\left(K\right)\leq\mathrm{Rank}_{\sigma}(K,K^{\sigma})\leq N(N+1)$. 
\end{lem}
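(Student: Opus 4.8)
The plan is to prove the two inequalities in Lemma \ref{lem:bound_rank} separately, both directly from the definition of $\mathrm{Rank}_\sigma$ in \eqref{eq:Rank-Signa}.

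First I would handle the upper bound $\mathrm{Rank}_\sigma(K,K^\sigma)\le N(N+1)$. By hypothesis $(K,K^\sigma)\in S$, so by definition of the translated cone $S$ there is a (possibly infinite) conical combination $(K,K^\sigma)=\sum_i \alpha_i\bigl(x_i x_i^T,\sigma(x_i)\sigma(x_i)^T\bigr)+(0,\beta^2\mathbf{1}_{N\times N})$ with $\alpha_i\ge 0$, $x_i\in\mathbb{R}^N$ (absorbing $\alpha_i$ into $x_i$ is not quite legal because $\sigma$ is only positively homogeneous of degree $1$, so $\sqrt{\alpha_i}x_i$ rescales both coordinates correctly — this is the one point to state carefully). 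Thus $(K,K^\sigma-(0,\beta^2\mathbf 1))$ lies in $\mathrm{cone}\bigl(\{(xx^T,\sigma(x)\sigma(x)^T)\}\bigr)$, a conical hull of a subset of the space of pairs of symmetric $N\times N$ matrices, which has dimension $2\cdot\frac{N(N+1)}{2}=N(N+1)$. Caratheodory's theorem for conical hulls (already invoked in the excerpt) then gives a representation using at most $N(N+1)$ of the generators, i.e. vectors $z_1,\dots,z_k$ with $k\le N(N+1)$ realizing \eqref{eq:Rank-Signa}; hence $\mathrm{Rank}_\sigma(K,K^\sigma)\le N(N+1)$.

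For the lower bound $\mathrm{Rank}(K)\le\mathrm{Rank}_\sigma(K,K^\sigma)$, let $k=\mathrm{Rank}_\sigma(K,K^\sigma)$ and take $z_1,\dots,z_k\in\mathbb{R}^N$ achieving \eqref{eq:Rank-Signa}. Then $K=\sum_{i=1}^k z_i z_i^T = Z Z^T$ where $Z\in\mathbb{R}^{N\times k}$ has columns $z_i$, so $K$ is the Gram matrix of $k$ vectors and $\mathrm{Rank}(K)=\mathrm{rank}(ZZ^T)\le\mathrm{rank}(Z)\le k$; the trivial inequality $\mathrm{Rank}_\sigma\ge 1$ when $\beta=0$ and one vector suffices is subsumed. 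This half is essentially immediate.

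I do not expect a real obstacle here — the lemma is a routine consequence of Caratheodory plus the definition. The only subtlety worth care is the rescaling argument in the upper bound: one must note that the cone $S$ is defined with generators indexed by $x\in\mathbb{R}^N$ (or equivalently $x\in\mathbb{S}^{N-1}$ together with a nonnegative scalar), and that writing $\alpha\,(xx^T,\sigma(x)\sigma(x)^T)=(w w^T,\sigma(w)\sigma(w)^T)$ with $w=\sqrt\alpha\,x$ uses exactly positive homogeneity of degree $1$ of $\sigma$; this is what makes the parametrization in \eqref{eq:Rank-Signa} (with no scalars, scalars folded into the $z_i$) match the cone membership. Once that is noted, both bounds follow in a few lines.
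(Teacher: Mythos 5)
Your proposal is correct and follows exactly the route the paper intends: the paper gives no separate proof of Lemma \ref{lem:bound_rank}, stating only that it follows from the definition of $\mathrm{Rank}_\sigma$ and Caratheodory's theorem for conical hulls, which is precisely your argument (Caratheodory in the $N(N+1)$-dimensional space of pairs of symmetric matrices for the upper bound, the Gram-matrix rank inequality for the lower bound). Your remark that folding the coefficients $\alpha_i$ into the vectors $z_i$ relies on degree-$1$ positive homogeneity of $\sigma$ is the right point to flag, and it is handled correctly.
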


We show in the next section that the order of magnitude of the upper
bound is tight. More specifically, we construct a dataset for which
any global optimum satisfies
$\mathrm{Rank}_{\sigma}(K_{1},K_{1}^{\sigma})\geq\nicefrac{N^{2}}{4}$.
This implies that, in this example, the plateau transition occurs
when the number of hidden neurons is of order $O(N^{2})$. Note however
that, in our numerical experiments (see Figure \ref{fig:Loss-plateau}),
the rank of the global optimum can be much smaller for more traditional
dataset such as MNIST.
\begin{rem}
The start of the plateau measures a notion of sparsity of the learned
network, since the networks learned in the plateau are equivalent
to a network at the start of the plateau, i.e. large networks are
equivalent in terms of their covariances and outputs $(\mathbf{K},Z_L)$
to a (potentially much) smaller network.

Even though the set of pairs $(K_\ell,K^\sigma_\ell)$ in the cone $S$ that are not full rank has measure zero, the optimal representations $(K_\ell,K^\sigma_\ell)$ always lie on the border of the cone $S$  (since the derivative of the cost $\mathcal{L}_\lambda^k$ w.r.t. $(K_\ell,K^\sigma_\ell)$ never vanishes) where the rank is lower. More precisely, the rank is determined by the dimension of the smallest face that contains the optimum (e.g. the pairs $(K_\ell,K^\sigma_\ell)$ on the edges of $S$ have rank at most 2 for example, while those on the vertices are rank 1).

We can identify different degrees of sparsity depending on how the rank of the hidden representations
scales with the number of datapoints $N$: if the rank is $o(N)$
the covariances $K_{\ell},K_{\ell}^{\sigma}$ are low-rank (in the traditional linear sense) and for
shallow networks the effective number of parameters (i.e. the number
of parameters at the start of the plateau) is $o(N^2)$, if the rank
is $o(\sqrt{N})$ then for deep networks the effective number of parameters
$o(N)$. This could explain why very large networks with `too many
parameters' are able to generalize, since their effective number of
parameters is of the order of the number datapoints. Very large networks
can therefore be trained safely knowing that thanks to $L_{2}$-regularization,
the network is able to recognize what is the 'right' width of the
network.
\end{rem}

\subsection{Tightness of the Upper-Bound}

In this section, we construct a pair of input and output datasets
$X$ and $Y$, both in $\mathbb{R}^{N\times N}$, such that for any
optimal parameters $\mathbf{W}$ of a ReLU network of depth $L=2$
with no bias ($\beta=0$), the rank of the hidden representation $\mathrm{Rank}_{\sigma}\left(K_{1},K_{1}^{\sigma}\right)$
is greater than $\nicefrac{N^{2}}{4}$. 

Note that one can write the decomposition (\ref{eq:Rank-Signa}) as
$K=C^{T}C$ and $K_{\sigma}=B^{T}B$ where $C=(z_{1},\ldots,z_{k})$
and $B=\mathrm{ReLU}(C)$ is obtained by applying elementwise the
ReLU to $C$. Key to our construction is the fact that $B$ is then
a matrix with non-negative entries: the matrix $K_{\sigma}$ is completely
positive and $\mathrm{Rank}_{\sigma}\left(K,K^{\sigma}\right)$ can
be studied using the CP-rank of $K$: 
\begin{defn}
A $N\times N$ matrix $A$ is completely positive if $A=B^{T}B$ for
a $k\times N$ matrix $B$ with non-negative entries. The CP-rank
$\mathrm{Rank_{cp}}\left(A\right)$ of a completely positive matrix
$A$ is the minimal integer $k$ such $A=B^{T}B$ for a $k\times N$
matrix $B$ with non-negative entries.
\end{defn}

When $\sigma$ is the ReLU, the kernel $K_{\ell}^{\sigma}$ is completely
positive for all hidden layers $\ell$, and thus 
\[
\max\left(\mathrm{Rank}\left(K_{\ell}\right),\mathrm{Rank}_{cp}\left(K_{\ell}^{\sigma}\right)\right)\leq\mathrm{Rank}_{\sigma}(K,K^{\sigma}).
\]
In order to obtain the tightness of the upper bound, we proceed in
two steps: first, we construct a completely positive matrix $A$ with
high CP-rank, and then construct inputs $X$ and outputs $Y$ such
that the optimal hidden covariance $K_{1}=K_{1}^{\sigma}$ for a depth
$L=2$ network equals the matrix $A$.

As shown in \cite{drew1994completely}, bi-partite graphs can be used
to construct matrices with high CP-rank. We refine this by
showing that graphs on $N$ vertices without cliques of $3$ or more vertices
lead to $N \times N$ matrices with CP-rank equal to the number of edges, and
as a corollary, we construct a completely positive matrix with CP-rank
equal to $\nicefrac{N^{2}}{4}$. 
\begin{prop}\label{prop:CP_rank_graph} 
Given a graph $G$ with $N$ vertices and $k$ edges, consider the
$k\times N$ matrix $E$ with entries $E_{ev}=1$ if the vertex $v$
is an endpoint of the edge $e$ and $E_{ev}=0$ otherwise. The matrix
$A=E^{T}E$ is completely positive and if the graph $G$ contains
no cliques of 3 or more vertices then $\mathrm{Rank_{cp}}\left(A\right)=k$.
\end{prop}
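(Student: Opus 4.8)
The plan is to establish the two inequalities $\mathrm{Rank_{cp}}(A)\le k$ and $\mathrm{Rank_{cp}}(A)\ge k$ separately. The upper bound is immediate: $E$ has non‑negative (indeed $0/1$) entries and $A=E^{T}E$, so $A$ is completely positive by definition, and the factorization $A=E^{T}E$ with $E\in\mathbb{R}^{k\times N}$ already exhibits a non‑negative factor with $k$ rows, whence $\mathrm{Rank_{cp}}(A)\le k$.

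For the lower bound I would first record the explicit shape of $A$: for $v\neq w$ one has $A_{vw}=\sum_{e}E_{ev}E_{ew}=1$ if $\{v,w\}$ is an edge of $G$ and $A_{vw}=0$ otherwise (using that $G$ is simple, so at most one edge joins $v$ and $w$). Now suppose $A=B^{T}B$ for a non‑negative matrix $B\in\mathbb{R}^{m\times N}$, and let $b_{v}\in\mathbb{R}^{m}_{\ge 0}$ denote the $v$‑th column of $B$, so that $A_{vw}=\langle b_{v},b_{w}\rangle$. The first key point is that if $v\neq w$ are \emph{non‑adjacent}, then $\langle b_{v},b_{w}\rangle=0$, and since $b_{v},b_{w}$ have non‑negative entries this forces $\mathrm{supp}(b_{v})\cap\mathrm{supp}(b_{w})=\emptyset$. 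The second key point follows: for any fixed coordinate $i\in\{1,\dots,m\}$, the set $T_{i}:=\{v:(b_{v})_{i}>0\}$ of vertices whose column uses coordinate $i$ is a \emph{clique} of $G$, because any two of its members have overlapping supports and hence are adjacent by the first point; since $G$ has no clique on $3$ or more vertices, $|T_{i}|\le 2$.

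Finally I would count using these observations. For each edge $e=\{v,w\}$ we have $\langle b_{v},b_{w}\rangle=A_{vw}=1>0$, so some coordinate $i$ satisfies $(b_{v})_{i}>0$ and $(b_{w})_{i}>0$; for that coordinate $T_{i}\supseteq\{v,w\}$, and since $|T_{i}|\le 2$ in fact $T_{i}=\{v,w\}=e$. Because the pair $T_{i}$ determines $e$ uniquely, distinct edges must be associated to distinct coordinates, so $m\ge k$. Minimizing over all non‑negative factorizations of $A$ gives $\mathrm{Rank_{cp}}(A)\ge k$, and together with the upper bound $\mathrm{Rank_{cp}}(A)=k$. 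I do not expect a genuine obstacle; the only two points needing a little care are the step from ``$\langle b_{v},b_{w}\rangle=0$ with $b_{v},b_{w}\ge 0$'' to disjoint supports, and applying the no‑triangle hypothesis to the sets $T_{i}$ (rather than, say, directly to supports) so that $|T_{i}|\le 2$ is correctly justified.
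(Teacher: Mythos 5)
Your proof is correct and follows essentially the same route as the paper's: both use the no‑clique hypothesis to show that each coordinate of the non‑negative factorization is supported on at most two vertices (your sets $T_i$ are exactly the supports of the rows of $B$ in the paper's argument), and then count. The only cosmetic difference is that you conclude by injecting edges into coordinates, whereas the paper counts the $2k$ non‑zero off‑diagonal entries of $E^{T}E$ against the at most two contributed by each rank‑one summand; the content is identical.
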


Hence, to obtain a completely positive matrix of high CP-rank, it
remains to find a graph with no cliques and as many edges as possible.
For even $N$, we consider the complete bipartite graph, i.e. the
graph with two groups of size $N/2$ and with edges between any two
vertices iff they belong to different groups. For this graph, the
matrix $B_N=E^{T}E$ takes the form of a block matrix: 
\[
B_{N}=\left(\begin{array}{cc}
\frac{N}{2}I_{\frac{N}{2}} & \mathbf{1}_{\frac{N}{2}\times\frac{N}{2}}\\
\mathbf{1}_{\frac{N}{2}\times\frac{N}{2}} & \frac{N}{2}I_{\frac{N}{2}}
\end{array}\right)
\]
where $\mathbf{1}_{\frac{N}{2}}$ is the $N/2\times N/2$ matrix with
all ones entries. Since this bipartite graph has no cliques and $\nicefrac{N^{2}}{4}$
edges, from the previous proposition, we obtain $\mathrm{Rank_{cp}}\left(B_{N}\right)=\frac{N^{2}}{4}$.

The following proposition shows how for any completely positive matrix (with CP-rank $k$) there is a dataset such that a shallow ReLU network will have a hidden representation pair $(K_1,K_1^\sigma)$of rank $k$:
\begin{prop}
\label{prop:example-bipartite-tight} 
Consider a width-$\mathrm{n}$ shallow network ($L=2$) with ReLU activation, no bias $\beta=0$, $n_{0}=N$,
$n_{1}\geq N(N+1)$, input dataset $X_{N}=I_{N}$, and any output dataset
$Y_{N}$ such that $\left(Y_N^T Y_N \right)^{\frac{1}{2}}$ is a completely positive matrix with CP-rank $k$.

At any global minimum of $R_{\mathrm{n}}(X_{N},Y_{N})$, we have $\mathrm{Rank}_{\sigma}\left(K_{1},K_{1}^{\sigma}\right)=k$.
Furthermore for $\lambda$ small enough, at any global minimum of
$\mathcal{L}_{\lambda,\mathrm{n}}^{\mathrm{MSE}}(\mathbf{W})=\frac{1}{N}\left\Vert Y(X_{N};\mathbf{W})-Y_{N}\right\Vert _{F}^{2}+\lambda\left\Vert \mathbf{W}\right\Vert ^{2},$
we have $\mathrm{Rank}_{\sigma}\left(K_{1},K_{1}^{\sigma}\right)\geq k$.
%
\end{prop}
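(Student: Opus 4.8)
The plan is to reduce everything to the covariance reformulation (Proposition~\ref{prop:secondrepresentation}) and the characterization of the minimal width via $\mathrm{Rank}_\sigma$ (Proposition~\ref{prop:rank_is_start_of_plateau}), and then to pin down the optimal hidden covariance for the specific input $X_N = I_N$. First I would treat the representation-cost statement. Since $X_N = I_N$, we have $K_0^\sigma = X_N^T X_N = I_N$, so $\mathrm{Tr}[K_1 (K_0^\sigma)^+] = \mathrm{Tr}[K_1]$, and since there is no bias and $L=2$, the constraint on the last layer gives $\mathrm{Tr}[K_2 (K_1^\sigma)^+] = \mathrm{Tr}[Y_N^T Y_N (K_1^\sigma)^+]$ (using $Z_2 = Y_N$ at any feasible point of $R_{\mathbf n}$). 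So, after the reformulation, $R_{\mathbf n}(X_N, Y_N)$ equals the infimum over feasible pairs $(K_1, K_1^\sigma) \in S$ with $\mathrm{Im}\, Y_N^T \subset \mathrm{Im}\, K_1^\sigma$ of
\[
\mathrm{Tr}[K_1] + \mathrm{Tr}\!\left[Y_N^T Y_N (K_1^\sigma)^+\right].
\]
Here $K_1 = \sum_i z_i z_i^T$ and $K_1^\sigma = \sum_i \sigma(z_i)\sigma(z_i)^T$. The key elementary fact I would use is that for the ReLU, writing $z = z^+ - z^-$ with $z^\pm \geq 0$ disjointly supported, one has $\sigma(z)\sigma(z)^T = z^+ (z^+)^T$ while $z z^T = z^+(z^+)^T + z^-(z^-)^T - z^+(z^-)^T - z^-(z^+)^T$; in particular $\mathrm{Tr}[z z^T] = \|z^+\|^2 + \|z^-\|^2 \geq \|z^+\|^2 = \mathrm{Tr}[\sigma(z)\sigma(z)^T]$, with equality iff $z \geq 0$. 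Hence $\mathrm{Tr}[K_1] \geq \mathrm{Tr}[K_1^\sigma]$, with equality iff every generating vector $z_i$ is nonnegative, i.e.\ iff $K_1 = K_1^\sigma$ and this common matrix is completely positive.

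The next step is to lower-bound the objective. Since only $\mathrm{Tr}[K_1]$ and the term in $K_1^\sigma$ appear, and $\mathrm{Tr}[K_1]\ge \mathrm{Tr}[K_1^\sigma]$, the objective is at least $\mathrm{Tr}[K_1^\sigma] + \mathrm{Tr}[Y_N^T Y_N (K_1^\sigma)^+]$, an expression depending only on the single completely-positive matrix $M := K_1^\sigma$ (subject to $\mathrm{Im}\,Y_N^T\subset \mathrm{Im}\,M$). Now I minimize $g(M) = \mathrm{Tr}[M] + \mathrm{Tr}[A (A M^+ A)]$ — wait, more carefully $\mathrm{Tr}[Y_N^T Y_N M^+]$ — over positive semidefinite $M$ with the image constraint. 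By the matrix AM-GM / von Neumann trace inequality (or by optimizing on the joint eigenbasis when $M$ and $Y_N^T Y_N$ commute), the unconstrained PSD minimizer is $M = (Y_N^T Y_N)^{1/2}$, at which $g(M) = 2\,\mathrm{Tr}[(Y_N^T Y_N)^{1/2}]$. By hypothesis $(Y_N^T Y_N)^{1/2}$ is completely positive with CP-rank $k$, so this matrix is a \emph{feasible} choice of $M = K_1 = K_1^\sigma$ (take the $z_i \geq 0$ realizing its CP decomposition), and with $K_1 = K_1^\sigma = (Y_N^TY_N)^{1/2}$ the bound is attained. Therefore every global minimizer must have $\mathrm{Tr}[K_1] = \mathrm{Tr}[K_1^\sigma]$ (forcing $K_1 = K_1^\sigma$ completely positive) and $K_1^\sigma = (Y_N^T Y_N)^{1/2}$ — here I need that the PSD minimizer of $g$ is unique, which follows from strict convexity of $t \mapsto t + c/t$ on the positive eigenvalues. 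Consequently, by Proposition~\ref{prop:rank_is_start_of_plateau}, $\mathrm{Rank}_\sigma(K_1, K_1^\sigma) = \mathrm{Rank}_{cp}\big((Y_N^T Y_N)^{1/2}\big) = k$.

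For the MSE statement with small $\lambda$, I would run a standard perturbation argument. At $\lambda = 0$ the minimum MSE is $0$ and is attained (e.g.\ take $K_1^\sigma$ large enough in the right directions); as $\lambda \to 0^+$, any sequence of global minimizers of $\mathcal{L}_{\lambda,\mathbf n}^{\mathrm{MSE}}$ must have MSE $\to 0$, hence (using that the set of exact interpolators realizable by width $\ge N(N+1)$ is the feasible set with $Z_L = Y_N$, and that on that set $\mathcal{L}$ reduces to $\lambda R_{\mathbf n}$) the covariances must approach the set of minimizers of the representation cost $R_{\mathbf n}(X_N, Y_N)$ computed above, i.e.\ $(K_1, K_1^\sigma) \to ((Y_N^TY_N)^{1/2},(Y_N^TY_N)^{1/2})$. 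Since $\mathrm{Rank}_\sigma$ is lower semicontinuous (a matrix near one with a given CP decomposition cannot have strictly smaller CP-rank in the limit — the CP-rank of a limit is at most the liminf, so near the limiting point it is at least $k$), we get $\mathrm{Rank}_\sigma(K_1, K_1^\sigma) \geq k$ for $\lambda$ small enough. I would also invoke Proposition~\ref{prop:CP_rank_graph} and the complete bipartite graph example to note that taking $Y_N$ with $(Y_N^T Y_N)^{1/2} = B_N$ gives $k = N^2/4$.

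The main obstacle I anticipate is the argument that forces $K_1 = K_1^\sigma$ at the optimum: one must be careful that decreasing $\mathrm{Tr}[K_1]$ while keeping $K_1^\sigma$ fixed is actually \emph{achievable} within the cone $S$ — i.e.\ that replacing each generator $z_i$ by $z_i^+$ keeps the pair in $S$ and does not change $K_1^\sigma$, which is exactly the ReLU identity $\sigma(z_i^+) = z_i^+ = \sigma(z_i)$, so this is fine — combined with the uniqueness of the PSD minimizer of $g$ and the verification that the image constraint $\mathrm{Im}\,Y_N^T\subset\mathrm{Im}\,(Y_N^TY_N)^{1/2}$ holds automatically. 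The second delicate point is the lower semicontinuity of $\mathrm{Rank}_\sigma$ / $\mathrm{Rank}_{cp}$ needed for the $\lambda \to 0$ transfer; this should follow from a compactness argument on CP decompositions with bounded number of terms, but it is the step most likely to need a careful statement.
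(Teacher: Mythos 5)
Your proposal is correct and follows essentially the same route as the paper: reduce to the covariance reformulation, use the trace inequality $\mathrm{Tr}[K_1]\geq\mathrm{Tr}[K_1^\sigma]$ (with equality forcing nonnegative generators, hence $K_1=K_1^\sigma$ completely positive) to reduce to minimizing $M\mapsto\mathrm{Tr}[M]+\mathrm{Tr}[Y_N^TY_NM^+]$, identify the unique minimizer $(Y_N^TY_N)^{1/2}$, and transfer to small $\lambda$ by a compactness argument. The only cosmetic difference is in the second step: you invoke lower semicontinuity of $\mathrm{Rank}_\sigma$ directly via compactness of bounded decompositions with a fixed number of terms, whereas the paper runs the equivalent argument on a bounded sequence of width-$(k-1)$ network parameters; both are valid.
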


By Proposition \ref{prop:example-bipartite-tight}, with the outputs $Y_N=B_N$, the rank of the hidden representations (and the start the plateau) is larger or equal to $\frac{N^2}{4}$. This shows that the order  $N^2$ of the bound of 
Lemma \ref{lem:bound_rank} is tight when it comes to data-agnostic bounds. However under certain assumptions on the data one can guarantee a much earlier plateau.

For example, if we instead apply Proposition \ref{prop:example-bipartite-tight} to a task closer to classification, where the columns of the outputs $Y_N \in \mathbb{R}^{n_L \times N}$ are one-hot vectors, then $(Y_N^T Y_N)^{\frac 1 2}$ is (up to permutations of the columns/lines) a block diagonal matrix with $n_L$ constant positive blocks, which is completely positive with CP-rank equal to the number of classes $n_L$. This is in line with our empirical experiments in Figure \ref{fig:Loss-plateau} where we observe in MNIST a plateau starting roughly at a width of 10, which is the number of classes.

Another example where the structure of the data leads to an earlier plateau is when the input and output dimensions are both 1, in which case we can guarantee that the start of the plateau grows at most linearly with the number of datapoints $N$:
\begin{prop}\label{prop:shallow_1D_plateau}
Consider shallow networks ($L=2$) with scalar inputs and outputs
($n_{0}=n_{2}=1$), a ReLU nonlinearity, and a dataset $X,Y\in\mathbb{R}^{1\times N}$.
Both the representation cost $R_{\mathbf{n}}(X,Y)$ and global minimum
$\min_{\mathbf{W}}\mathcal{L}_{\lambda,\mathbf{n}}(\mathbf{W})$ for
any $\lambda>0$ are independent of the width $n_1$ as long as $n_{1}\geq4N$.
\end{prop}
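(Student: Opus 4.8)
The plan is to combine the covariance reformulation (Proposition \ref{prop:secondrepresentation}) with the characterization of the plateau by $\mathrm{Rank}_{\sigma}$ (Proposition \ref{prop:rank_is_start_of_plateau}): it suffices to prove that in this scalar setting every element $(K_{1},K_{1}^{\sigma})$ of the width-independent set (there is a single hidden layer, since $L=2$) satisfies $\mathrm{Rank}_{\sigma}(K_{1},K_{1}^{\sigma})\le 4N$, sharpening the generic bound $N(N+1)$ of Lemma \ref{lem:bound_rank}. Indeed, once this is known, for any $n_{1}\ge 4N$ the constraint set $\mathcal{K}_{\mathbf{n}}(X)$ coincides with $\mathcal{K}(X)$, so both $\min_{\mathbf{W}}\mathcal{L}_{\lambda,\mathbf{n}}(\mathbf{W})=\inf_{\mathcal{K}_{\mathbf{n}}(X)}\mathcal{L}_{\lambda}^{\mathrm k}$ and $R_{\mathbf{n}}(X,Y)=\min_{\mathbf{K}:(\mathbf{K},Y)\in\mathcal{K}_{\mathbf{n}}(X)}\sum_{\ell}\mathrm{Tr}[K_{\ell}(K_{\ell-1}^{\sigma})^{+}]$ stop depending on $n_{1}$.

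The starting point is that $K_{0}^{\sigma}=X^{T}X+\beta^{2}\mathbf{1}_{N\times N}$ has image $V:=\mathrm{span}\{X^{T},\mathbf{1}_{N}\}$ of dimension at most $2$ (because $n_{0}=1$). Since the constraint $\mathrm{Im}K_{1}\subset\mathrm{Im}K_{0}^{\sigma}$ forces $\mathrm{Im}K_{1}\subset V$, in any decomposition $K_{1}=\sum_{i}z_{i}z_{i}^{T}$, $K_{1}^{\sigma}=\sum_{i}\sigma(z_{i})\sigma(z_{i})^{T}+\beta^{2}\mathbf{1}_{N\times N}$ witnessing $(K_{1},K_{1}^{\sigma})\in S$, all $z_{i}$ lie in $V$. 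So it is enough to show that any finite family $(z_{i})_{i}\subset V$ can be replaced by a family of at most $4N$ vectors of $V$ with the same $\sum_{i}z_{i}z_{i}^{T}$ and the same $\sum_{i}\sigma(z_{i})\sigma(z_{i})^{T}$. If $\dim V\le 1$ this is trivial (two rays, two vectors). If $\dim V=2$, fix a basis, write $z_{i}=\Phi p_{i}$ with $p_{i}\in\mathbb{R}^{2}$, and note that the $N$ lines $\{z\in V:z_{j}=0\}$ through the origin cut $V$ into at most $2N$ closed sectors, on each of which $\sigma$ acts as a fixed coordinate projection $D_{A}$ (i.e. $\sigma(z)=D_{A}z$ for $z$ in the sector). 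Grouping the $z_{i}$ by the sector of $p_{i}$, the total contribution of a group to $\sum z_{i}z_{i}^{T}$ and to $\sum\sigma(z_{i})\sigma(z_{i})^{T}$ is $\Phi Q_{A}\Phi^{T}$ and $D_{A}\Phi Q_{A}\Phi^{T}D_{A}$ respectively, where $Q_{A}=\sum_{i\in\mathrm{group}}p_{i}p_{i}^{T}$ is a $2\times2$ PSD matrix lying in $\mathrm{cone}\{qq^{T}:q\in\mathcal{C}_{A}\}$ for the corresponding closed $2$-dimensional cone $\mathcal{C}_{A}$; hence the group is reproduced exactly by any family of vectors of $\mathcal{C}_{A}$ summing (as rank-one matrices) to $Q_{A}$.

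The crux is the following claim: \emph{every $2\times2$ PSD matrix $Q$ lying in $\mathrm{cone}\{qq^{T}:q\in\mathcal{C}\}$, with $\mathcal{C}$ a closed $2$-dimensional convex cone, equals $q_{1}q_{1}^{T}+q_{2}q_{2}^{T}$ for some $q_{1},q_{2}\in\mathcal{C}$ (a single term if $\mathrm{rank}\,Q\le1$).} Assuming $Q\succ0$ and passing to $\mathcal{C}':=Q^{-1/2}\mathcal{C}$, we get $I\in\mathrm{cone}\{ff^{T}:f\in\mathcal{C}'\}$. Using $ff^{T}=\tfrac12 I+\tfrac12 n(2\psi)$, where $\psi$ is the angle of the line $\mathbb{R}f$ and $n(\theta)$ is the rotating unit direction in the plane of traceless symmetric $2\times2$ matrices, the relation $I=\sum_{i}\mu_{i}f_{i}f_{i}^{T}$ forces $\sum_{i}\mu_{i}n(2\psi_{i})=0$ with $\mu_{i}\ge0$ not all zero; since the $n(2\psi_{i})$ all lie in a closed arc of length twice the aperture of $\mathcal{C}'$, this is possible only if that aperture is at least $\pi/2$. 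Then $\mathcal{C}'$ contains an orthonormal pair $f_{1},f_{2}$, and $q_{j}:=Q^{1/2}f_{j}\in\mathcal{C}$ satisfy $q_{1}q_{1}^{T}+q_{2}q_{2}^{T}=Q$. Applying the claim to each of the at most $2N$ groups gives at most $2$ vectors per sector, hence a family of at most $4N$ vectors of $V$ reproducing $(K_{1},K_{1}^{\sigma})$, so $\mathrm{Rank}_{\sigma}(K_{1},K_{1}^{\sigma})\le4N$, which together with the first paragraph concludes the proof.

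The routine parts are the sector bookkeeping and the reduction to the covariance reformulation. The main obstacle is the $2\times2$ compression claim: Carathéodory in the $3$-dimensional space of symmetric $2\times2$ matrices would only give $3$ vectors per sector (hence $6N$), and extracting exactly $2$ — which is what yields the sharp $4N$ — requires the angular argument showing that the constraint $Q\in\mathrm{cone}\{qq^{T}:q\in\mathcal{C}\}$ already forces $Q^{-1/2}\mathcal{C}$ to have aperture at least $\pi/2$, hence to contain an orthonormal pair.
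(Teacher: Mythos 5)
Your proof is correct, but it takes a genuinely different route from the paper's. The paper argues directly at the level of the network function $f_{\mathbf W}(x)=b+\sum_k a_k\sigma(c_kx+d_k)$: after balancing each neuron ($a_k^2=c_k^2+d_k^2$), it pigeonholes the neurons by the location of their cusp $-d_k/c_k$ relative to the sorted data and by the signs of $(a_k,c_k)$; whenever two same-signed neurons have cusps in the same gap $(x_i,x_{i+1})$, it merges them into a single neuron whose cusp stays in that gap, so the fit on the training set is unchanged while the parameter norm does not increase (by convexity of $(c,d)\mapsto\sqrt{c^2+d^2}$). You instead work in the covariance picture: the constraint $\mathrm{Im}\,K_1\subset\mathrm{Im}\,K_0^\sigma$ confines all witnesses $z_i$ to the $2$-dimensional subspace $V=\mathrm{span}\{X^T,\mathbf 1_N\}$, the $N$ coordinate hyperplanes cut $V$ into at most $2N$ sectors on which $\sigma$ is linear, and your angular lemma compresses each sector's $2\times2$ Gram contribution to exactly two vectors — the key point being that $Q\in\mathrm{cone}\{qq^T:q\in\mathcal C\}$ forces $Q^{-1/2}\mathcal C$ to have aperture at least $\pi/2$ and hence to contain an orthonormal pair, beating the three vectors that plain Carath\'eodory in the $3$-dimensional space of symmetric $2\times2$ matrices would give (which would only yield $6N$). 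The two arguments land on the same constant $4N$, but they buy different things: the paper's merging argument is elementary, constructive, and shows that any configuration with more than $4N$ active neurons can be strictly improved; yours proves the stronger structural fact that \emph{every} feasible pair $(K_1,K_1^\sigma)$ with $\mathrm{Im}\,K_1\subset V$ has $\mathrm{Rank}_\sigma\le 4N$, so it plugs directly into the rank/plateau machinery of Proposition \ref{prop:rank_is_start_of_plateau} and Lemma \ref{lem:bound_rank}, and it applies verbatim to the first hidden layer of deeper networks with scalar inputs, where the function-level merging argument does not obviously extend.
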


More generally, we propose to view the start of the plateau as an indicator of how well a certain task is adapted to a DNN architecture. An early plateau suggests that the network is able to solve the task optimally with very few neurons, in contrast to a late plateau. The fact that the optimal network requires few neurons (and hence few parameters) can be used to guarantee good generalization.

\subsection{Conclusion}

We have given two reformulations of the loss of $L_{2}$-regularized
DNNs. The first works for a general non-linearity and shows how the
hidden representations of the inputs $Z_{1},\dots,Z_{L-1}$ are learned
to interpolate between the input and output representations, as a balance
between attraction and repulsion forces for every layer. The second
reformulation for homogeneous non-linearities allows us to analyze
a sparsity effect of $L_{2}$-regularized DNNs, where the learned
networks are equivalent to another network with much fewer neurons.
This effect can be visualized by the appearance of a plateau in the
minimal loss as the number of neurons grows, the earlier the plateau,
the sparser the solution, since an early plateau means that very few
neurons were required to obtain the same loss as a network with an
infinite number of neurons. We show that this plateau cannot start
later than $N(N+1)$, and then show that the order of this bound is
tight by constructing a toy dataset for which the plateau starts at
$\nicefrac{N^{2}}{4}$, however, we observe that on more traditional
datasets, the start of the plateau can be much earlier.

\section*{Acknowledgements}
C. Hongler acknowledges support from the Blavatnik Family Foundation, the Latsis Foundation, and the NCCR Swissmap.

\bibliographystyle{plain}
\bibliography{./../main}

\section*{Checklist}

\begin{enumerate}

\item For all authors...
\begin{enumerate}
  \item Do the main claims made in the abstract and introduction accurately reflect the paper's contributions and scope?
    \answerYes{}
  \item Did you describe the limitations of your work?
    \answerYes{In section 3.3 we discuss the limitations of our reformulations when it comes to numeric optimization.}
  \item Did you discuss any potential negative societal impacts of your work?
    \answerNo{This work is theoretical and has no direct societal impact.}
  \item Have you read the ethics review guidelines and ensured that your paper conforms to them?
    \answerYes{}
\end{enumerate}

\item If you are including theoretical results...
\begin{enumerate}
  \item Did you state the full set of assumptions of all theoretical results?
    \answerYes{}
        \item Did you include complete proofs of all theoretical results?
    \answerYes{In the appendix.}
\end{enumerate}

\item If you ran experiments...
\begin{enumerate}
  \item Did you include the code, data, and instructions needed to reproduce the main experimental results (either in the supplemental material or as a URL)?
    \answerYes{In the supplementary material.}
  \item Did you specify all the training details (e.g., data splits, hyperparameters, how they were chosen)?
    \answerYes{In the appendix.}
        \item Did you report error bars (e.g., with respect to the random seed after running experiments multiple times)?
    \answerYes{(partly) In Figure 2 right we show the 3 different trials over which we have taken the minimum. However in general the plots are only given to illustrate the theoretical results not to support our argument, we therefore valued simplicity and readability.}
        \item Did you include the total amount of compute and the type of resources used (e.g., type of GPUs, internal cluster, or cloud provider)?
    \answerYes{In the appendix.}
\end{enumerate}

\item If you are using existing assets (e.g., code, data, models) or curating/releasing new assets...
\begin{enumerate}
  \item If your work uses existing assets, did you cite the creators?
    \answerYes{}
  \item Did you mention the license of the assets?
    \answerYes{In the appendix.}
  \item Did you include any new assets either in the supplemental material or as a URL?
    \answerNA{}
  \item Did you discuss whether and how consent was obtained from people whose data you're using/curating?
    \answerNA{}
  \item Did you discuss whether the data you are using/curating contains personally identifiable information or offensive content?
    \answerNA{}
\end{enumerate}

\item If you used crowdsourcing or conducted research with human subjects...
\begin{enumerate}
  \item Did you include the full text of instructions given to participants and screenshots, if applicable?
    \answerNA{}
  \item Did you describe any potential participant risks, with links to Institutional Review Board (IRB) approvals, if applicable?
    \answerNA{}
  \item Did you include the estimated hourly wage paid to participants and the total amount spent on participant compensation?
    \answerNA{}
\end{enumerate}

\end{enumerate}

\newpage

\appendix

The appendix is structured as follows:
\begin{enumerate}
\item In Section \ref{sec:app-Experimental-Setup}, we describe the Experimental
setup.
\item In Section \ref{subsec:app-first-reformulation}, we prove Proposition
\ref{prop:first-reformulation} of the main underlying the first reformulation.
\item In Section \ref{subsec:app-second-reformulation}, we prove Proposition
\ref{prop:secondrepresentation} for the second reformulation. We also give an example of a local
minimum of the original loss which is not a local minimum in the second
reformulation.
\item In Section \ref{sec:app-plateau}, we prove Proposition \ref{prop:rank_is_start_of_plateau}, \ref{prop:CP_rank_graph}, \ref{prop:example-bipartite-tight} and \ref{prop:shallow_1D_plateau} of the main. 
\end{enumerate}

\section{Experimental Setup \label{sec:app-Experimental-Setup}}

The experiments were done on fully-connected DNNs of depth $L=3$
with varying widths.

We used the MNIST dataset \cite{lecun_1998_MNIST} under the 'Creative
Commons Attribution-Share Alike 3.0' license. For the MNIST examples
we trained the networks on the multiclass cross-entropy loss with
$L_{2}$-regularization.

We also used synthetic data sampled from a teacher network. The network
has depth $L=3$, widths $\mathbf{n}=(50,10,10,10)$ with random Gaussian
weights. The cost used was the Mean Squared Error (MSE).

For the experiments of Figure \ref{fig:Attraction-Repulsion} of the main, the DNN was trained with
full batch GD. For the experiments of Figure \ref{fig:Loss-plateau} we first trained with
Adam \cite{adam} and finished with full batch GD (GD seems to be
better suited to consistently reach the bottom of the local minima,
though Adam trains faster overall). For the right plot of Figure \ref{fig:Loss-plateau},
three independent networks were trained for every width and the one
with the smallest loss at the end of training was chosen (the plotted
test error is that of the chosen network).

The goal of Figure \ref{fig:Loss-plateau} is to identify the start of the plateau, note
however that we cannot guarantee that our training procedures actually
approaches a global minimum. Interestingly it was easier to observe
a plateau on MNIST rather than on the teacher network data, which
is why we had to take the minimum over 3 trials in the teacher setting.
This could be due to the change of loss (from cross entropy to the
MSE) or due to the change of the data. Note that in Figure \ref{fig:Loss-plateau} (right),
it is unclear whether the 'failed' trials , i.e. the small blue dots
with a loss above the plateau even for large widths, are stuck at
local minima of the loss or if they could have reached the plateau
if we had trained them longer.

The experiments each took between 1 and 4 hour on a single NVIDIA
GeForce GTX 1080.

\section{Equivalence for the first reformulation\label{subsec:app-first-reformulation}}
\begin{prop}[Proposition \ref{prop:first-reformulation} of the main]
The infimum of $\mathcal{L}_{\lambda}(\mathbf{W})=C(Z_{L}(X;\mathbf{W}))+\lambda\left\Vert \mathbf{W}\right\Vert ^{2},$
over the parameters $\mathrm{W}\in\mathbb{R}^{P}$ is equal to the
infimum of
\[
\mathcal{L}_{\lambda}^{r}(Z_{1},\dots,Z_{L})=C(Z_{L})+\lambda\sum_{\ell=1}^{L}\left\Vert Z_{\ell}\left(Z_{\ell-1}^{\sigma}\right)^{+}\right\Vert _{F}^{2}
\]
over the set $\mathcal{Z}$ of hidden representations $\mathbf{Z}=(Z_{\ell})_{\ell=1,\dots,L}$
such that $Z_{\ell}\in\mathbb{R}^{n_{\ell}\times N}$, $\mathrm{Im}Z_{\ell+1}^{T}\subset\mathrm{Im}\left(Z_{\ell}^{\sigma}\right)^{T}$,
with the notations $Z_{0}^{\sigma}=\left(\begin{array}{c}
X\\
\beta\mathbf{1}_{N}^{T}
\end{array}\right)$ and $Z_{\ell}^{\sigma}=\left(\begin{array}{c}
\sigma\left(Z_{\ell}\right)\\
\beta\mathbf{1}_{N}^{T}
\end{array}\right)$.

Furthermore, if $\mathbf{W}$ is a local minimizer of $\mathcal{L}_{\lambda}$
then $(Z_{1}(X; \mathbf{W}),\dots,Z_{L}(X; \mathbf{W}))$ is a local minimizer
of \textup{$\mathcal{L}_{\lambda}^{r}$.} Conversely, keeping the
same notations, if $(Z_{\ell})_{\ell=1,\ldots,L}$ is a local minimizer
of $\mathcal{L}_{\lambda}^{r}$, then $\mathrm{W}=(Z_{\ell}(Z_{\ell-1}^{\sigma})^{+})_{\ell=1,\ldots,L}$
is a local minimizer of $\mathcal{L}_{\lambda}$.
\end{prop}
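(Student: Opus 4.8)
The proof splits naturally into two parts: the equality of infima, and the correspondence of local minimizers. For the equality of infima, I would use the decomposition $W_\ell = Z_\ell (Z_{\ell-1}^\sigma)^+ + \tilde W_\ell$ established just before the statement, together with the orthogonal splitting of the Frobenius norm $\|W_\ell\|_F^2 = \|Z_\ell(Z_{\ell-1}^\sigma)^+\|_F^2 + \|\tilde W_\ell\|_F^2$. Given any $\mathbf{W}$, setting $\tilde W_\ell = 0$ for all $\ell$ does not change $Z_L$ (hence does not change $C(Z_L)$) and can only decrease $\|\mathbf{W}\|^2$, so $\inf \mathcal{L}_\lambda \geq \inf \mathcal{L}_\lambda^r$ once we check that the representations $Z_\ell$ produced from any weights satisfy the image constraint $\mathrm{Im}\, Z_{\ell+1}^T \subset \mathrm{Im}\, (Z_\ell^\sigma)^T$ — this is immediate because $Z_{\ell+1} = W_{\ell+1} Z_\ell^\sigma$ forces the rows of $Z_{\ell+1}$ into the row space of $Z_\ell^\sigma$. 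Conversely, given any $\mathbf{Z} \in \mathcal{Z}$, I would set $W_\ell := Z_\ell(Z_{\ell-1}^\sigma)^+$ and verify by induction on $\ell$ that the network with these weights reproduces exactly $Z_1, \dots, Z_L$: the image constraint guarantees $Z_\ell (Z_{\ell-1}^\sigma)^+ Z_{\ell-1}^\sigma = Z_\ell$, since $(Z_{\ell-1}^\sigma)^+ Z_{\ell-1}^\sigma$ is the orthogonal projection onto the row space of $Z_{\ell-1}^\sigma$ and $Z_\ell$'s rows already lie there. This gives $\mathcal{L}_\lambda(\mathbf{W}) = \mathcal{L}_\lambda^r(\mathbf{Z})$ and hence the reverse inequality.

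For the local-minimizer correspondence, the key structural fact is that the map $\Phi : \mathbf{W} \mapsto (Z_1(X;\mathbf{W}), \dots, Z_L(X;\mathbf{W}))$ is continuous and, restricted to the submanifold $\{\tilde W_\ell = 0 \,\forall \ell\}$, it has a continuous right inverse $\Psi : \mathbf{Z} \mapsto (Z_\ell(Z_{\ell-1}^\sigma)^+)_\ell$ on $\mathcal{Z}$ (continuous at least near points where the relevant ranks are locally constant — a subtlety I will need to handle, see below). If $\mathbf{W}^*$ is a local minimizer of $\mathcal{L}_\lambda$, then first note $\tilde W_\ell^* = 0$ necessarily (otherwise shrinking $\tilde W_\ell^*$ strictly decreases the loss while staying arbitrarily close, contradicting local minimality). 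Then for $\mathbf{Z}$ near $\mathbf{Z}^* = \Phi(\mathbf{W}^*)$ in $\mathcal{Z}$, the weights $\Psi(\mathbf{Z})$ are near $\mathbf{W}^*$ and $\mathcal{L}_\lambda^r(\mathbf{Z}) = \mathcal{L}_\lambda(\Psi(\mathbf{Z})) \geq \mathcal{L}_\lambda(\mathbf{W}^*) = \mathcal{L}_\lambda^r(\mathbf{Z}^*)$, so $\mathbf{Z}^*$ is a local minimizer of $\mathcal{L}_\lambda^r$. Conversely, if $\mathbf{Z}^*$ is a local minimizer of $\mathcal{L}_\lambda^r$, set $\mathbf{W}^* = \Psi(\mathbf{Z}^*)$; for $\mathbf{W}$ near $\mathbf{W}^*$, decompose it as $W_\ell = Z_\ell(Z_{\ell-1}^\sigma)^+ + \tilde W_\ell$, observe $\Phi(\mathbf{W}) = \Phi(\text{the version with } \tilde W = 0)$ is close to $\mathbf{Z}^*$ by continuity of $\Phi$, and conclude $\mathcal{L}_\lambda(\mathbf{W}) \geq \mathcal{L}_\lambda(\text{version with }\tilde W = 0) = \mathcal{L}_\lambda^r(\Phi(\mathbf{W})) \geq \mathcal{L}_\lambda^r(\mathbf{Z}^*) = \mathcal{L}_\lambda(\mathbf{W}^*)$.

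\textbf{The main obstacle} is the discontinuity of the pseudo-inverse: $(Z_{\ell-1}^\sigma)^+$ jumps when the rank of $Z_{\ell-1}^\sigma$ changes, so $\Psi$ is not globally continuous, and the neighborhoods used above must be chosen with care. I expect this to be handled by arguing that at a local minimizer the relevant ranks do not drop in a neighborhood in the directions that matter — more precisely, rank can only jump \emph{up} under small perturbations (lower semicontinuity of rank), and when $\mathrm{rank}\, Z_{\ell-1}^\sigma$ jumps up the extra terms in $\|Z_\ell(Z_{\ell-1}^\sigma)^+\|_F^2$ and the constraint interact so that the loss does not decrease; alternatively, one restricts attention to perturbations preserving the rank and checks separately that rank-increasing perturbations cannot help. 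The cleanest route, which I would adopt, is to phrase everything in terms of the image-space constraints (which are closed conditions) and the projection $P_{\mathrm{Im}\, Z_{\ell-1}^\sigma}$ rather than the pseudo-inverse, and to invoke the already-noted fact that $\mathcal{L}_\lambda^r$ blows up as the constraints are approached from outside, so that near a finite-loss point the constraint is effectively an open condition in the ambient space; the correspondence of local minima then follows from the two inequalities above combined with continuity of $\Phi$ on all of $\mathbb{R}^P$.
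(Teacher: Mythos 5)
Your treatment of the infimum equality and of the direction ``$\mathbf{Z}^*$ local minimizer of $\mathcal{L}_\lambda^r$ $\Rightarrow$ $\Psi(\mathbf{Z}^*)$ local minimizer of $\mathcal{L}_\lambda$'' is correct and coincides with the paper's: both rest on the decomposition $W_\ell = Z_\ell(Z_{\ell-1}^\sigma)^+ + \tilde W_\ell$, the identities $\mathcal{L}_\lambda(\Psi(\mathbf{Z}))=\mathcal{L}_\lambda^r(\mathbf{Z})$ and $\mathcal{L}_\lambda^r(\Phi(\mathbf{W}))\leq\mathcal{L}_\lambda(\mathbf{W})$, and the continuity of $\Phi$ (which is all that direction needs).

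The gap is in the other direction, and it is exactly where you flagged it, but your proposed repairs do not close it. You want to argue that for $\mathbf{Z}$ near $\mathbf{Z}^*=\Phi(\mathbf{W}^*)$ the weights $\Psi(\mathbf{Z})$ are near $\mathbf{W}^*$; this fails, and neither lower semicontinuity of the rank nor the blow-up of the loss outside $\mathcal{Z}$ rescues it. The problematic perturbations stay \emph{inside} $\mathcal{Z}$ and have \emph{bounded} loss: let $Z_{\ell-1}^\sigma$ acquire a new singular direction with singular value $\epsilon$, and let $Z_\ell$ acquire a component of size $c\epsilon$ along it. The constraint $\mathrm{Im}\,Z_\ell^T\subset\mathrm{Im}(Z_{\ell-1}^\sigma)^T$ remains satisfied, the term $\|Z_\ell(Z_{\ell-1}^\sigma)^+\|_F^2$ only grows by $O(c^2)$, yet $\Psi(\mathbf{Z})$ stays at distance $\sim c$ from $\mathbf{W}^*$ no matter how small $\epsilon$ is. So you cannot conclude $\mathcal{L}_\lambda(\Psi(\mathbf{Z}))\geq\mathcal{L}_\lambda(\mathbf{W}^*)$ from local minimality of $\mathbf{W}^*$ alone, and your fallback claim that rank-increasing perturbations ``cannot decrease the loss'' is precisely what needs proof. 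The paper's resolution is a compactness argument run through the contrapositive: given $\mathbf{Z}_i\to\mathbf{Z}^*$ with $\mathcal{L}_\lambda^r(\mathbf{Z}_i)<\mathcal{L}_\lambda^r(\mathbf{Z}^*)$, the weights $\mathbf{W}_i=\Psi(\mathbf{Z}_i)$ satisfy $\lambda\|\mathbf{W}_i\|^2\leq\mathcal{L}_\lambda^r(\mathbf{Z}_i)$, hence are bounded; a subsequence converges to some $\mathbf{W}'$ with $\Phi(\mathbf{W}')=\mathbf{Z}^*=\Phi(\mathbf{W}^*)$, so $\mathbf{W}'$ and $\mathbf{W}^*$ differ only in their residual parts $\tilde W_\ell$. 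Local minimality of $\mathbf{W}^*$ forces $\tilde W_\ell^*=0$ (as you noted), and comparing norms (using $\|\mathbf{W}'\|\leq\|\mathbf{W}^*\|$ from the loss inequality and $\|\mathbf{W}'\|^2=\|\mathbf{W}^*\|^2+\|\tilde{\mathbf{W}}'\|^2$) forces $\mathbf{W}'=\mathbf{W}^*$, so the subsequence converges to $\mathbf{W}^*$ with strictly smaller loss --- the desired contradiction. Some such argument (or an equivalent) is needed; as written, your proof of this direction is incomplete.
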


\begin{proof}
We write $\Phi$ for the map which sends some weights $\mathbf{W}$
to the hidden representations $(Z_{1}(X; \mathbf{W}),\dots,Z_{L}(X; \mathbf{W}))$
and $\Psi$ for the map which sends some hidden representations $\mathbf{Z}\in\mathcal{Z}$
to $\mathbf{W}$ with weight matrices $W_{\ell}=Z_{\ell}\left(Z_{\ell-1}^{\sigma}\right)^{+}$.

We clearly have $\Phi(\Psi(\mathbf{Z}))=\mathbf{Z}$ for any $\mathbf{Z}\in\mathcal{Z}$,
however it is not true in general that $\Psi(\Phi(\mathbf{W}))$ for
all $\mathbf{W}$ (actually this is true iff $\mathbf{W}$ lies in
the image of $\Psi$).

Let $\mathcal{L}_{\lambda}(\mathbf{W})=C(Y_{\mathbf{W}})+\lambda\left\Vert \mathbf{W}\right\Vert ^{2}$
and $\mathcal{L}_{\lambda}^{r}(\mathbf{Z})=C(Z_{L})+\lambda\sum_{\ell=1}^{L}\left\Vert Z_{\ell}\left(Z_{\ell-1}^{\sigma}\right)^{+}\right\Vert _{F}^{2}$.
One can show that $\mathcal{L}_{\lambda}(\Psi(\mathbf{Z}))=\mathcal{L}_{\lambda}^r(\mathbf{Z})$
for all $\mathbf{Z}\in\mathcal{Z}$ and $\mathcal{L}_{\lambda}^r(\Phi(\mathbf{W}))\leq\mathcal{L}_{\lambda}(\mathbf{W})$
for all $\mathbf{W}$ (actually $\mathcal{L}_{\lambda}^r(\Phi(\mathbf{W}))=\mathcal{L}_{\lambda}(\mathbf{W})$
if $\mathbf{W}\in\mathrm{Im}\Psi$ and $\mathcal{L}_{\lambda}^r(\Phi(\mathbf{W}))<\mathcal{L}_{\lambda}(\mathbf{W})$
otherwise). The first fact implies that $\inf_{\mathbf{W}}\mathcal{L}_{\lambda}(\mathbf{W})\leq\inf_{\mathbf{Z}\in\mathcal{Z}}\mathcal{L}_{\lambda}^r(\mathbf{Z})$
while the second implies $\inf_{\mathbf{W}}\mathcal{L}_{\lambda}(\mathbf{W})\geq\inf_{\mathbf{Z}\in\mathcal{Z}}\mathcal{L}_{\lambda}^r(\mathbf{Z})$,
furthermore the maps $\Phi$ and $\Psi$ must map global minimizers
to global minimizers.

\textbf{Local Minima:} We now extend the correspondence
to local minima and saddles:

We prove that if $\mathbf{Z}$ is a local minimum of $\mathbf{Z}\mapsto\mathcal{L}_{\lambda}^r(\mathbf{Z})$
then $\mathbf{W}=\Psi(\mathbf{Z})$ is a local minimum of $\mathbf{W}\mapsto\mathcal{L}_{\lambda}(\mathbf{W})$
through the contrapositive: if $\mathbf{W}=\Psi(Z\mathbf{)}$ is not
a local minimum of the loss $\mathbf{W}\mapsto\mathcal{L}_{\lambda}(\mathbf{W})$
(i.e. there is a sequence of weights $\mathbf{W}_{1},\mathbf{W}_{2},\dots$
which converges to $\mathbf{W}$ with $\mathcal{L}_{\lambda}(\mathbf{W}_{i})<\mathcal{L}_{\lambda}(\mathbf{W})$
for all $i$) then $\mathbf{Z}$ is not a local minimum. We simply
consider the sequence $\mathbf{Z}_{i}=\Phi(\mathbf{W}_{i})$ which
converges to $\mathbf{Z}=\Phi(\Psi(\mathbf{Z}))$ by the continuity
of $\Phi$. This sequence satisfies $\mathcal{L}_{\lambda}^r(\mathbf{Z}_{i})\leq\mathcal{L}_{\lambda}(\mathbf{W}_{i})<\mathcal{L}_{\lambda}(\mathbf{W})=\mathcal{L}_{\lambda}^r(\mathbf{Z})$,
proving that $\mathbf{Z}$ is not a local minimum.

Let us now prove if $\mathbf{W}$ is a local minimum of $\mathbf{W}\mapsto\mathcal{L}_{\lambda}(\mathbf{W})$
then $\mathbf{Z}=\Phi(\mathbf{W})$ is a local minimum of $\mathbf{Z}\mapsto\mathcal{L}_{\lambda}^r(\mathbf{Z})$,
again using the contrapositive. Assume that there is a sequence $\mathbf{Z}_{1},\mathbf{Z}_{2},\dots$
which converges to $\mathbf{Z}=\Phi(\mathbf{W})$ with $\mathcal{L}_{\lambda}^r(\mathbf{Z}_{i})<\mathcal{L}_{\lambda}^r(\mathbf{Z})$
for all $i$. We consider the sequence $\mathbf{W}_{i}=\Psi(\mathbf{Z}_{i})$,
however this sequence might not be convergent since $\Psi$ is not
continuous, however we know the sequence is bounded, since $\left\Vert \mathbf{W}_{i}\right\Vert ^{2}=\sum_{\ell=1}^{L}\left\Vert Z_{i,\ell}\left(Z_{i,\ell-1}^{\sigma}\right)^{+}\right\Vert _{F}^{2}\leq\frac{1}{\lambda}\mathcal{L}_{\lambda}^r(\mathbf{Z}_{i})<\frac{1}{\lambda}\mathcal{L}_{\lambda}^r(\mathbf{Z})$,
this implies that there exists a subsequence $\mathbf{Z}_{k_{i}}$
such that $\mathbf{W}_{k_{i}}=\Psi\left(\mathbf{Z}_{k_{i}}\right)$
converges to some weights $\mathbf{W}'$. Note that since $\Phi(\mathbf{W})=\Phi(\mathbf{W}')$
the weight matrices must agree up to 'useless weights', i.e. for all
$\ell$
\begin{align*}
W_{\ell} & =Z_{\ell}\left(Z_{\ell-1}^{\sigma}\right)^{+}+\tilde{W}_{\ell}\\
W_{\ell}' & =Z_{\ell}\left(Z_{\ell-1}^{\sigma}\right)^{+}+\tilde{W}_{\ell}'.
\end{align*}
If $\tilde{W}_{\ell}\neq0$ then $\mathbf{W}$ is not a local minimum
(since we could choose the weights $W_{\ell}^{\epsilon}=Z_{\ell}\left(Z_{\ell-1}^{\sigma}\right)^{+}+(1-\epsilon)\tilde{W}_{\ell}$
for any $0<\epsilon<1$ to get a lower loss). We may therefore assume
$\tilde{W}_{\ell}=0$, but this implies that $\tilde{W}_{\ell}'=0$
too since $\left\Vert \mathbf{W}\right\Vert =\left\Vert \mathbf{W}'\right\Vert $
and therefore $\mathbf{W}'=\mathbf{W}$ and therefore $\mathbf{W}$
is not a local minimum since the sequence $\mathbf{W}_{k_{i}}$ approaches
$\mathbf{W}$ with a strictly lower loss.
\end{proof}

\subsection{Optimization}\label{subsec:optimization-projection}
It is possible to optimize the first reformulation directly, using projected gradient descent to guarantee that the constraints $\mathrm{Im}Z_{\ell+1}^{T}\subseteq\mathrm{Im}\left(Z_{\ell}^{\sigma}\right)^{T}$ remain satisfied. As we show now, this projection is unnecessary in the continuous case, which suggests that it might also be unnecessary in gradient descent with a small enough learning rate.

Assume there is a $\ell$ s.t. $\mathrm{Im}Z_{\ell+1}^{T}\nsubseteq\mathrm{Im}\left(Z_{\ell}^{\sigma}\right)^{T}$,
i.e. there is a vector $v\in\mathbb{R}^{N}$ (with $\left\Vert v\right\Vert =1$)
such that $v\in\ker Z_{\ell}^{\sigma}$ but $\left\Vert Z_{\ell+1}v\right\Vert >0$.
Consider any $\tilde{\mathcal{Z}}$ such that $\left\Vert \tilde{\mathcal{Z}}-\mathcal{Z}\right\Vert \leq\epsilon$,
then 
\[
\left\Vert \tilde{Z}_{\ell+1}\left(\tilde{Z}_{\ell}^{\sigma}\right)^{+}\right\Vert _{F}^{2}\geq\left\Vert \tilde{Z}_{\ell+1}vv^{T}\left(\tilde{Z}_{\ell}^{\sigma}\right)^{+}\right\Vert _{F}^{2}=\left\Vert \tilde{Z}_{\ell+1}v\right\Vert ^{2}\left\Vert v^{T}\left(\tilde{Z}_{\ell}^{\sigma}\right)^{+}\right\Vert ^{2}\geq\frac{\left\Vert Z_{\ell+1}v\right\Vert ^{2}-\epsilon}{\epsilon}.
\]
This implies that the loss explodes in the vicinity of any point where the constraints are not satisfied.
As a result, gradient flow on the cost $\mathcal{L}_{\lambda}^{r}$
starting from a value with a non-zero loss will never approach a non-acceptable
point (where $\mathrm{Im}Z_{\ell+1}^{T}\nsubseteq\mathrm{Im}\left(Z_{\ell}^{\sigma}\right)^{T}$)
since the loss is decreasing during gradient flow.

\section{Equivalence for the second reformulation\label{subsec:app-second-reformulation}}
\begin{prop}[Proposition \ref{prop:secondrepresentation} of the main]
\label{prop:correspondence-second-reformulation}For positively homogeneous
non-linearities $\sigma$, the infimum of $\mathcal{L}_{\lambda}(\mathbf{W})=C(Z_{L}(X;\mathbf{W}))+\lambda\left\Vert \mathbf{W}\right\Vert ^{2},$
over the parameters $\mathrm{W}\in\mathbb{R}^{P}$ is equal to the
infimum over $\mathcal{K}_{\mathbf{n}}(X)$ of
\[
\mathcal{L}_{\lambda}^{\mathrm{k}}(\mathbf{K},Z_{L})=C(Z_{L})+\lambda\sum_{\ell=1}^{L}\mathrm{Tr}\left[K_{\ell}\left(K_{\ell-1}^{\sigma}\right)^{+}\right].
\]
The set $\mathcal{K}_{\mathbf{n}}(X)$ is the set of covariances $\mathbf{K}=((K_{1},K_{1}^{\sigma}),\dots,(K_{L-1},K_{L-1}^{\sigma}))$
and outputs $Z_{L}$ such that for all hidden layer $\ell=1,\dots,L-1$: 
\begin{itemize}
\item the pair $\left(K_{\ell},K_{\ell}^{\sigma}\right)$ belongs to the
(translated) $n_{\ell}$-conical hull 
\[
S_{n_{\ell},\beta}=\mathrm{cone}_{n_{\ell}}\left( \left\{ \left(xx^{T},\sigma(x)\sigma(x)^{T}\right):x\in\mathbb{R}^{N}\right\} \right) +(0,\beta^{2}\mathbf{1}_{N\times N}),
\]
\item $\mathrm{Im}K_{\ell}\subset\mathrm{Im}K_{\ell-1}^{\sigma}$, with
the notation $K_{0}^{\sigma}=X^{T}X+\beta^{2}\mathbf{1}_{N\times N}$
and for the outputs, $\mathrm{Im}Z_{L}\subset\mathrm{Im}K_{L-1}^{\sigma}.$ 
\end{itemize}
\end{prop}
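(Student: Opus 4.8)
The plan is to build directly on Proposition~\ref{prop:first-reformulation}, which already gives $\inf_{\mathbf{W}}\mathcal{L}_{\lambda}(\mathbf{W})=\inf_{\mathbf{Z}\in\mathcal{Z}}\mathcal{L}_{\lambda}^{r}(\mathbf{Z})$, and to show that the right-hand side can be rewritten verbatim as $\inf_{(\mathbf{K},Z_{L})\in\mathcal{K}_{\mathbf{n}}(X)}\mathcal{L}_{\lambda}^{\mathrm{k}}(\mathbf{K},Z_{L})$. Concretely, I would exhibit the ``covariance map'' $\mathbf{Z}\mapsto(\mathbf{K},Z_{L})$ sending $(Z_{1},\dots,Z_{L})$ to $K_{\ell}=Z_{\ell}^{T}Z_{\ell}$, $K_{\ell}^{\sigma}=(Z_{\ell}^{\sigma})^{T}Z_{\ell}^{\sigma}$ for $\ell=1,\dots,L-1$ while keeping $Z_{L}$ unchanged, prove that it maps $\mathcal{Z}$ onto $\mathcal{K}_{\mathbf{n}}(X)$ and preserves the value of the loss, and also produce, for every $(\mathbf{K},Z_{L})\in\mathcal{K}_{\mathbf{n}}(X)$, a preimage in $\mathcal{Z}$. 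Surjectivity in both directions together with value-preservation forces the two infima to coincide, with no compactness or attainment argument needed.

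Two elementary linear-algebra facts carry most of the bookkeeping. First, for any real matrices $A$ and $B$ one has $B^{+}(B^{+})^{T}=(B^{T}B)^{+}$, hence $\|AB^{+}\|_{F}^{2}=\mathrm{Tr}\big[A(B^{T}B)^{+}A^{T}\big]=\mathrm{Tr}\big[(B^{T}B)^{+}A^{T}A\big]$; taking $A=Z_{\ell}$ and $B=Z_{\ell-1}^{\sigma}$ this is exactly $\mathrm{Tr}[K_{\ell}(K_{\ell-1}^{\sigma})^{+}]$ for $\ell=1,\dots,L$, with the conventions $K_{L}:=Z_{L}^{T}Z_{L}$ and $K_{0}^{\sigma}=(Z_{0}^{\sigma})^{T}Z_{0}^{\sigma}=X^{T}X+\beta^{2}\mathbf{1}_{N\times N}$ (the $\ell=L$ term only needs $Z_{L}$, which is why $Z_{L}$ rather than $K_{L}$ is carried along). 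Consequently $\mathcal{L}_{\lambda}^{r}(\mathbf{Z})=\mathcal{L}_{\lambda}^{\mathrm{k}}(\mathbf{K},Z_{L})$ under the covariance map. Second, for any real matrix $A$, $\mathrm{Im}(A^{T}A)=\mathrm{Im}(A^{T})$ (equal ranks, and one inclusion is obvious), so the constraint $\mathrm{Im}Z_{\ell+1}^{T}\subset\mathrm{Im}(Z_{\ell}^{\sigma})^{T}$ defining $\mathcal{Z}$ translates, layer by layer, into $\mathrm{Im}K_{\ell+1}\subset\mathrm{Im}K_{\ell}^{\sigma}$ for $\ell=1,\dots,L-2$, into $\mathrm{Im}K_{1}\subset\mathrm{Im}K_{0}^{\sigma}$ (the case $\ell=0$), and into $\mathrm{Im}Z_{L}^{T}\subset\mathrm{Im}K_{L-1}^{\sigma}$ (the case $\ell=L-1$), which are precisely the image conditions in $\mathcal{K}_{\mathbf{n}}(X)$.

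The genuinely new step, and the one I expect to be the main obstacle to state cleanly, is the identification of the realizable covariance pairs with the translated conical hull $S_{n_{\ell},\beta}$, which is where positive $1$-homogeneity of $\sigma$ is used. Writing the rows of $Z_{\ell}\in\mathbb{R}^{n_{\ell}\times N}$ as $z^{(1)},\dots,z^{(n_{\ell})}\in\mathbb{R}^{N}$, one gets $K_{\ell}=\sum_{k=1}^{n_{\ell}}z^{(k)}(z^{(k)})^{T}$ and, since the rows of $Z_{\ell}^{\sigma}$ are $\sigma(z^{(k)})$ together with the bias row $\beta\mathbf{1}_{N}^{T}$, $K_{\ell}^{\sigma}=\sum_{k=1}^{n_{\ell}}\sigma(z^{(k)})\sigma(z^{(k)})^{T}+\beta^{2}\mathbf{1}_{N\times N}$, so $(K_{\ell},K_{\ell}^{\sigma})\in S_{n_{\ell},\beta}$. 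Conversely a point of $S_{n_{\ell},\beta}$ has the form $\sum_{k=1}^{n_{\ell}}\alpha_{k}\big(x_{k}x_{k}^{T},\sigma(x_{k})\sigma(x_{k})^{T}\big)+(0,\beta^{2}\mathbf{1}_{N\times N})$ with $\alpha_{k}\ge 0$; putting $z^{(k)}=\sqrt{\alpha_{k}}\,x_{k}$ and using $\sigma(\sqrt{\alpha_{k}}\,x_{k})=\sqrt{\alpha_{k}}\,\sigma(x_{k})$ (positive homogeneity of degree one; note $\sigma(0)=0$ handles $\alpha_{k}=0$) realizes this pair as the covariances of the matrix $Z_{\ell}$ with rows $z^{(1)},\dots,z^{(n_{\ell})}$, which has exactly $n_{\ell}$ rows; this is why the $n_{\ell}$-conical hull (exactly $n_{\ell}$ summands, some possibly zero) is the right object, and not the full conical hull.

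Putting the pieces together: given $\mathbf{Z}\in\mathcal{Z}$, the covariance map yields $(\mathbf{K},Z_{L})$ satisfying both bullet points of $\mathcal{K}_{\mathbf{n}}(X)$, with equal loss; conversely, given $(\mathbf{K},Z_{L})\in\mathcal{K}_{\mathbf{n}}(X)$, I would pick a decomposition of each $(K_{\ell},K_{\ell}^{\sigma})$ in $S_{n_{\ell},\beta}$, build $Z_{1},\dots,Z_{L-1}$ from the associated rows, keep $Z_{L}$, and verify via the two linear-algebra facts above that the resulting $\mathbf{Z}$ lies in $\mathcal{Z}$ and has the same loss. Hence the infima coincide, which is the claim. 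Finally, restricting every step to representations with $Z_{L}=Y$ gives in the same way the companion statement $R_{\mathbf{n}}(X,Y)=\min_{\mathbf{K}:(\mathbf{K},Y)\in\mathcal{K}_{\mathbf{n}}(X)}\sum_{\ell=1}^{L}\mathrm{Tr}[K_{\ell}(K_{\ell-1}^{\sigma})^{+}]$.
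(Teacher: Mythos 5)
Your proposal is correct and follows essentially the same route as the paper: the appendix proof also reduces to showing that the covariance map is onto $\mathcal{K}_{\mathbf{n}}(X)$, constructs the preimage by factoring a decomposition of each $(K_{\ell},K_{\ell}^{\sigma})\in S_{n_{\ell},\beta}$ into rows of $Z_{\ell}$ via positive homogeneity, and then takes $W_{\ell}=Z_{\ell}(Z_{\ell-1}^{\sigma})^{+}$ exactly as in the first reformulation. You merely make explicit the linear-algebra identities ($\Vert AB^{+}\Vert_{F}^{2}=\mathrm{Tr}[(B^{T}B)^{+}A^{T}A]$ and $\mathrm{Im}(A^{T}A)=\mathrm{Im}(A^{T})$) that the paper leaves as "easily checked."
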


\begin{proof}
Consider the map $\Psi$ that maps parameters $\mathbf{W}$ to the
the tuple $(\mathbf{K},Z_{L})$. We simply need to show that the image
of $\Psi$ is the set $\mathcal{K}_{\mathbf{n}}(X)$. The fact that
$\mathrm{Im}\Psi\subset\mathcal{K}_{\mathbf{n}}(X)$ can easily be
checked.

To prove $\mathrm{Im}\Psi\supset\mathcal{K}_{\mathbf{n}}(X)$ we need
to construct a pre-image $\mathbf{W}\in\Psi^{-1}(\mathbf{K},Z_{L})$
from any tuple $\text{(}\mathbf{K},Z_{L})$ in $\mathcal{K}_{\mathbf{n}}(X)$.
For every hidden layer $\ell$, we have $(K_{\ell},K_{\ell}^{\sigma})\in S_{n_{\ell},\beta}$.
There are hence representations $Z_{\ell}\in\mathbb{R}^{n_{\ell}\times N}$
such that $K_{\ell}=Z_{\ell}^{T}Z_{\ell}$ and $K_{\ell}^{\sigma}=\sigma\left(Z_{\ell}\right)^{T}\sigma\left(Z_{\ell}\right)+\beta^{2}\mathbf{1}_{N\times N}$,
furthermore for all $\ell$, we have $\mathrm{Im}Z_{\ell}^{T}=\mathrm{Im}K_{\ell}$
and $\mathrm{Im}\left(\begin{array}{c}
\sigma\left(Z_{\ell}\right)\\
\beta\mathbf{1}_{N}^{T}
\end{array}\right)=\mathrm{Im}K_{\ell}^{\sigma}$, which implies that $\mathrm{Im}Z_{\ell}^{T}\subset\mathrm{Im}\left(Z_{\ell-1}^{\sigma}\right)^{T}$
and therefore that the tuple $\left(Z_{1},\dots Z_{L}\right)$ is
in the set $\mathcal{Z}_{\mathbf{n}}$ and we can choose the weight
matrices $W_{\ell}=Z_{\ell}\left(Z_{\ell-1}^{\sigma}\right)^{+}$
to obtain a preimage $\mathbf{W}\in\Psi^{-1}\left(\mathbf{K},Z_{L}\right)$.
\end{proof}

\subsection{Non-correspondence of the local minima}

Let us consider the map $\Gamma:\mathbf{Z}\mapsto(\mathbf{K},Z_{L})$
which maps each hidden representation $Z_{\ell}$ to the kernel pair
$(Z_{\ell}^{T}Z_{\ell},\left(Z_{\ell}^{\sigma}\right)^{T}Z_{\ell}^{\sigma})$.
The continuity of $\Gamma$ implies that if $\Gamma(\mathbf{Z})$
is a local minimum then so is $\mathbf{Z}$. The converse is not true,
instead we have:
\begin{prop}
A kernel and outputs pair $(\mathbf{K},Z_{L})$ is a local minimum
if all $\mathbf{Z}\in\Gamma^{-1}(\mathbf{K})$ are local minima. 
\end{prop}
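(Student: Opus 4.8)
The plan is to argue by contraposition. Suppose $(\mathbf{K},Z_{L})$ is \emph{not} a local minimum of $\mathcal{L}_{\lambda}^{\mathrm{k}}$ over $\mathcal{K}_{\mathbf{n}}(X)$; I will then produce an element $\mathbf{Z}^{*}\in\Gamma^{-1}(\mathbf{K},Z_{L})$ which is not a local minimum of $\mathcal{L}_{\lambda}^{r}$ over $\mathcal{Z}$, contradicting the hypothesis. Before starting, I would record two elementary facts, which together make $\Gamma$ play for the second reformulation the role played by $\Phi$ for the first: (i) $\mathcal{L}_{\lambda}^{r}=\mathcal{L}_{\lambda}^{\mathrm{k}}\circ\Gamma$ on $\mathcal{Z}$, since $\left\Vert Z_{\ell}(Z_{\ell-1}^{\sigma})^{+}\right\Vert _{F}^{2}=\mathrm{Tr}[K_{\ell}(K_{\ell-1}^{\sigma})^{+}]$ only depends on $Z_{\ell}$ and $Z_{\ell-1}^{\sigma}$ through their Gram matrices; and (ii) $\Gamma$ maps $\mathcal{Z}$ onto $\mathcal{K}_{\mathbf{n}}(X)$, the construction in the proof of Proposition~\ref{prop:secondrepresentation} providing, for every point of $\mathcal{K}_{\mathbf{n}}(X)$, a preimage in $\mathcal{Z}$ whose Gram matrices realize exactly the prescribed $K_{\ell}$ and $K_{\ell}^{\sigma}$ (so in particular the image constraints of $\mathcal{K}_{\mathbf{n}}(X)$ become those of $\mathcal{Z}$).

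Next I would unfold the non-local-minimality: fix a sequence $(\mathbf{K}_{i},Z_{L,i})\in\mathcal{K}_{\mathbf{n}}(X)$ with $(\mathbf{K}_{i},Z_{L,i})\to(\mathbf{K},Z_{L})$ and $\mathcal{L}_{\lambda}^{\mathrm{k}}(\mathbf{K}_{i},Z_{L,i})<\mathcal{L}_{\lambda}^{\mathrm{k}}(\mathbf{K},Z_{L})$, and use (ii) to pull it back to $\mathbf{Z}_{i}\in\mathcal{Z}$ with $\Gamma(\mathbf{Z}_{i})=(\mathbf{K}_{i},Z_{L,i})$. The decisive observation---the same one used in the local-minima part of the proof of Proposition~\ref{prop:first-reformulation}---is that even though $\Gamma$ has no continuous section, the pulled-back sequence is bounded, because $\left\Vert \mathbf{Z}_{i}\right\Vert ^{2}=\sum_{\ell=1}^{L-1}\mathrm{Tr}[K_{\ell,i}]+\left\Vert Z_{L,i}\right\Vert _{F}^{2}$ and the right-hand side converges. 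Passing to a convergent subsequence $\mathbf{Z}_{i}\to\mathbf{Z}^{*}$ and using continuity of $\Gamma$, I get $\Gamma(\mathbf{Z}^{*})=(\mathbf{K},Z_{L})$; since the Gram matrices of $\mathbf{Z}^{*}$ reproduce $(\mathbf{K},Z_{L})$, the image constraints defining $\mathcal{K}_{\mathbf{n}}(X)$ hand back the image constraints defining $\mathcal{Z}$, hence $\mathbf{Z}^{*}\in\mathcal{Z}$, i.e. $\mathbf{Z}^{*}\in\Gamma^{-1}(\mathbf{K},Z_{L})$. Finally, along the subsequence, (i) gives $\mathcal{L}_{\lambda}^{r}(\mathbf{Z}_{i})=\mathcal{L}_{\lambda}^{\mathrm{k}}(\mathbf{K}_{i},Z_{L,i})<\mathcal{L}_{\lambda}^{\mathrm{k}}(\mathbf{K},Z_{L})=\mathcal{L}_{\lambda}^{r}(\mathbf{Z}^{*})$ with $\mathbf{Z}_{i}\to\mathbf{Z}^{*}$, so $\mathbf{Z}^{*}$ is not a local minimum of $\mathcal{L}_{\lambda}^{r}$, contradicting the hypothesis.

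I expect the only genuinely delicate step to be this last one: obtaining, from the perturbed covariances, perturbed representations that converge to a \emph{bona fide} preimage of $(\mathbf{K},Z_{L})$ in $\mathcal{Z}$. Because $\Gamma$ admits no continuous right inverse (exactly the difficulty already met with $\Psi$ in Appendix~\ref{subsec:app-first-reformulation}), one cannot simply track a single prescribed preimage along the sequence; instead one pulls back arbitrarily, uses the a priori bound $\left\Vert \mathbf{Z}_{i}\right\Vert ^{2}=\sum_{\ell}\mathrm{Tr}[K_{\ell,i}]+\left\Vert Z_{L,i}\right\Vert _{F}^{2}$ coming from the trace reformulation to extract a convergent subsequence, and relies on continuity of $\Gamma$ to certify that its limit lands in $\Gamma^{-1}(\mathbf{K},Z_{L})\cap\mathcal{Z}$ --- which is exactly the set to which the hypothesis applies. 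The surjectivity of $\Gamma$ onto $\mathcal{K}_{\mathbf{n}}(X)$, the identity $\mathcal{L}_{\lambda}^{r}=\mathcal{L}_{\lambda}^{\mathrm{k}}\circ\Gamma$, and the translation of image constraints are all routine and follow from the proofs of Propositions~\ref{prop:first-reformulation} and~\ref{prop:secondrepresentation}.
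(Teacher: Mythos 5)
Your proof is correct and follows essentially the same route as the paper's: argue by contraposition, pull the perturbing sequence $(\mathbf{K}_i,Z_{L,i})$ back to arbitrary preimages $\mathbf{Z}_i$, use the identity $\Vert Z_\ell\Vert_F^2=\mathrm{Tr}[K_\ell]$ to bound the sequence and extract a convergent subsequence, and invoke continuity of $\Gamma$ to certify that the limit lies in $\Gamma^{-1}(\mathbf{K},Z_L)$ while inheriting the strict loss decrease. Your explicit verification that the limit satisfies the image constraints of $\mathcal{Z}$ is a small point the paper leaves implicit, but the argument is the same.
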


\begin{proof}
We will prove the contrapositive of this statement: if $(\mathbf{K},Z_{L})$
is a saddle (i.e. there is a sequence $(\mathbf{K}_{1},Z_{L,1}),(\mathbf{K}_{2},Z_{L,2}),\dots$
which converges to $(\mathbf{K},Z_{L})$ such that $\mathcal{L}_{\lambda}(\mathbf{K}_{i},Z_{L,i})<\mathcal{L}_{\lambda}(\mathbf{K},Z_{L})$),
then there is a $\mathbf{Z}\in\Gamma^{-1}(\mathbf{K},Z_{L})$ which
is a saddle.

First note that for any $i$, $\Gamma^{-1}(\mathbf{K}_{i},Z_{L,i})$
is compact (it is closed and bounded since $\left\Vert Z_{\ell}\right\Vert _{F}^{2}=\mathrm{Tr}\left[K_{\ell}\right]<\infty$).
There is hence a sequence $\mathbf{Z}_{1},\mathbf{Z}_{2}.\dots$ with
$\mathbf{Z}_{i}\in\Gamma^{-1}\left(\mathbf{K}_{i},Z_{L,i}\right)$
which converges to some \textbf{$\mathbf{Z}$}. By the continuity
of $\Gamma$, we have $\Gamma(\mathbf{Z})=(\mathbf{K},Z_{L})$ and
we have $\mathcal{L}_{\lambda}\left(\mathbf{Z}_{i}\right)=\mathcal{L}_{\lambda}\left(\mathbf{K}_{i},Z_{L,i}\right)<\mathcal{L}_{\lambda}\left(\mathbf{K},Z_{L}\right)=\mathcal{L}_{\lambda}\left(\mathbf{Z}\right)$,
hence proving that $\mathbf{Z}$ is a saddle as needed.
\end{proof}
Let us now give an example of a set of weights $\mathbf{W}$ of a
depth $L=2$ network which is a local minimum of $\mathcal{L}_{\lambda}$
but such that the corresponding covariances $(\mathbf{K},Z_{L})$
are not a local minimum of $\mathcal{L}_{\lambda}^{k}$:
\begin{prop}
Consider a shallow ReLU network ($L=2$) of widths $n_{0}=1,n_{1}=2,n_{2}=1$
with no bias $\beta=0$. Consider the MSE error $\mathcal{L}_{\lambda}(\mathbf{W})=\frac{1}{N}\left\Vert Y(X;\mathbf{W})-Y\right\Vert _{F}^{2}$
for the size $N=2$ dataset with inputs $X=\left(\begin{array}{cc}
1 & -1\end{array}\right)$ and outputs $Y=\left(\begin{array}{cc}
1 & 1\end{array}\right)$.

For any $\lambda<1$ and any choices of $a_{1},a_{2}>0$ s.t. $a_{1}^{2}+a_{2}^{2}=1-\lambda$
the parameters 
\[
W_{1}=\left(\begin{array}{c}
a_{1}\\
a_{2}
\end{array}\right),W_{2}=\left(\begin{array}{cc}
a_{1} & a_{2}\end{array}\right)
\]
are a local minimum of the loss $\mathcal{L}_{\lambda}(\mathbf{W})$
however, the corresponding covariances and outputs $(K_{1},K_{1}^{\sigma}),Z_{2}$
are not a local minimum of the second reformulation $\mathcal{L}_{\lambda}^{c}((K_{1},K_{1}^{\sigma}),Z_{2})$.
\end{prop}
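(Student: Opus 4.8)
The strategy is to establish the two assertions separately, each reducing to an explicit low-dimensional computation after the relevant quantities are written out. Using $a_1,a_2>0$ and $\beta=0$ one computes the hidden preactivations $Z_1=\bigl(\begin{smallmatrix} a_1 & -a_1 \\ a_2 & -a_2 \end{smallmatrix}\bigr)$, the activations $Z_1^{\sigma}=\bigl(\begin{smallmatrix} a_1 & 0 \\ a_2 & 0 \end{smallmatrix}\bigr)$, the output $Z_2=(a_1^2+a_2^2,\,0)=(1-\lambda,\,0)$, and hence $K_1=Z_1^{T}Z_1=(1-\lambda)\bigl(\begin{smallmatrix} 1 & -1 \\ -1 & 1 \end{smallmatrix}\bigr)$, $K_1^{\sigma}=\bigl(\begin{smallmatrix} 1-\lambda & 0 \\ 0 & 0 \end{smallmatrix}\bigr)$, and $K_0^{\sigma}=X^{T}X=\bigl(\begin{smallmatrix} 1 & -1 \\ -1 & 1 \end{smallmatrix}\bigr)$. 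With the squared-error cost $C(Z)=\lVert Z-Y\rVert_F^2$, a short pseudo-inverse computation (using $(X^{T}X)^{+}=\tfrac14 X^{T}X$) gives $\mathrm{Tr}[K_1(K_0^{\sigma})^{+}]=\mathrm{Tr}[K_2(K_1^{\sigma})^{+}]=1-\lambda$, so that $\mathcal{L}_\lambda(\mathbf{W})=\mathcal{L}_\lambda^{\mathrm{k}}\bigl((K_1,K_1^{\sigma}),Z_2\bigr)=\lambda^2+1+2\lambda(1-\lambda)$, consistent with Proposition~\ref{prop:secondrepresentation}.

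\textbf{First assertion ($\mathbf{W}$ a local minimum of $\mathcal{L}_\lambda$).} Since every entry $\pm a_i$ of $Z_1$ is bounded away from $0$, for all $\mathbf{W}'$ in a small ball around $\mathbf{W}$ both hidden neurons are active on the input $1$ and inactive on $-1$; the activation pattern is locally frozen. Consequently $Z_2(X;\mathbf{W}')=(W_2'W_1',\,0)$ depends only on the scalar $u:=W_2'W_1'$ (the second output coordinate is pinned to $0$), and $\mathcal{L}_\lambda(\mathbf{W}')=(u-1)^2+1+\lambda\bigl(\lVert W_1'\rVert_F^2+\lVert W_2'\rVert_F^2\bigr)$. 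By AM--GM and Cauchy--Schwarz, $\lVert W_1'\rVert_F^2+\lVert W_2'\rVert_F^2\ge 2\lVert W_1'\rVert_F\lVert W_2'\rVert_F\ge 2\lvert W_2'W_1'\rvert\ge 2u$ (locally $u>0$), so $\mathcal{L}_\lambda(\mathbf{W}')\ge h(u):=(u-1)^2+1+2\lambda u$. The convex quadratic $h$ is minimized exactly at $u=1-\lambda$, and at $\mathbf{W}$ one has $u=a_1^2+a_2^2=1-\lambda$ together with equality throughout the chain (because $W_2^{T}=W_1$). Hence $\mathcal{L}_\lambda(\mathbf{W}')\ge h(1-\lambda)=\mathcal{L}_\lambda(\mathbf{W})$ on the whole ball, i.e. $\mathbf{W}$ is a local minimizer (a perturbed always-killed bias column, if one keeps it, only adds a nonnegative term).

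\textbf{Second assertion ($\bigl((K_1,K_1^{\sigma}),Z_2\bigr)$ not a local minimum of $\mathcal{L}_\lambda^{\mathrm{k}}$).} I would exhibit an explicit feasible curve leaving this point along which the cost strictly decreases. For small $\epsilon>0$ set $v=(1,-1)^{T}$, $w=(-1,1)^{T}$, and take $K_1^{\epsilon}=(1-\lambda)vv^{T}+\epsilon ww^{T}=(1-\lambda+\epsilon)\bigl(\begin{smallmatrix} 1 & -1 \\ -1 & 1 \end{smallmatrix}\bigr)$, $K_1^{\sigma,\epsilon}=(1-\lambda)\sigma(v)\sigma(v)^{T}+\epsilon\sigma(w)\sigma(w)^{T}=\bigl(\begin{smallmatrix} 1-\lambda & 0 \\ 0 & \epsilon \end{smallmatrix}\bigr)$ (two generators, hence in $S_{2}$), and $Z_2^{\epsilon}=(1-\lambda,\,\epsilon)$. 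One checks $\mathrm{Im}\,K_1^{\epsilon}=\mathrm{Im}\,K_0^{\sigma}$ and $\mathrm{Im}\,K_1^{\sigma,\epsilon}=\mathbb{R}^2$, so the triple lies in $\mathcal{K}_{(1,2,1)}(X)$ and converges to the original point as $\epsilon\to0$. Since $\mathrm{Tr}[K_2^{\epsilon}(K_1^{\sigma,\epsilon})^{+}]=(1-\lambda)+\epsilon$ (the new output mass $\epsilon$ is matched to the new activation mass $\epsilon$, so this stays bounded) and $\mathrm{Tr}[K_1^{\epsilon}(K_0^{\sigma})^{+}]=1-\lambda+\epsilon$, one obtains $\mathcal{L}_\lambda^{\mathrm{k}}=\lambda^2+(\epsilon-1)^2+\lambda\bigl[(1-\lambda+\epsilon)+(1-\lambda)+\epsilon\bigr]$; subtracting its value at $\epsilon=0$ yields $\epsilon\bigl(\epsilon-2(1-\lambda)\bigr)$, strictly negative for every $0<\epsilon<2(1-\lambda)$ because $\lambda<1$. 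Hence the cost strictly decreases and the point is not a local minimum, illustrating Remark~\ref{rem:criticalpoints}.

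\textbf{Main obstacle.} The delicate part is the first assertion: one must verify that the ReLU activation pattern is locally constant (so $\mathcal{L}_\lambda$ is genuinely smooth near $\mathbf{W}$) and, more importantly, deal with the flat direction coming from the output coordinate that can never be fitted to its target. This is precisely why the clean Cauchy--Schwarz/AM--GM reduction to the one-variable function $h(u)$ is needed, rather than a Hessian computation, which would be degenerate. In the second assertion the only delicate point is the matching of scales --- the new mass $\epsilon$ in $Z_2^{\epsilon}$ versus the new diagonal entry $\epsilon$ of $K_1^{\sigma,\epsilon}$ --- so that the penalty $\mathrm{Tr}[K_2(K_1^{\sigma})^{+}]$ does not blow up; with that balance in place the strict decrease is a one-line estimate.
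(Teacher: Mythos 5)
Your proof is correct, and for both halves it takes a genuinely different (and in places cleaner) route than the paper's. For the first assertion, the paper restricts to the orthant of nonnegative weights, changes variables to $c=\bigl(\tfrac{a_1+b_1}{2}\bigr)^2+\bigl(\tfrac{a_2+b_2}{2}\bigr)^2$ and $d=\bigl(\tfrac{a_1-b_1}{2}\bigr)^2+\bigl(\tfrac{a_2-b_2}{2}\bigr)^2$, rewrites the loss as $(1-c+d)^2+1+2\lambda(c+d)$, and observes that the given weights sit in the interior of the orthant at the minimizer $c=1-\lambda$, $d=0$. Your frozen-activation-pattern reduction to the scalar $u=W_2'W_1'$ followed by the AM--GM/Cauchy--Schwarz bound $\lVert W_1'\rVert^2+\lVert W_2'\rVert^2\ge 2u$ reaches the same conclusion with a one-variable convex lower envelope $h(u)$ that is tight at $\mathbf{W}$; this avoids the orthant restriction entirely and is arguably more self-contained. (Both you and the paper silently drop the $\tfrac1N$ prefactor written in the statement of the proposition; the claimed critical value $a_1^2+a_2^2=1-\lambda$ is only consistent with that un-normalized convention, so you have matched the paper's actual computation.) For the second assertion the difference is more substantive: the paper argues indirectly, showing that the endpoint $(a_1,a_2)=(\sqrt{1-\lambda},0)$ of the segment --- which has the same covariances --- is not a local minimum of $\mathcal{L}_\lambda$ in weight space, and then invokes the continuity of $\mathbf{W}\mapsto(\mathbf{K},Z_L)$ to transfer non-minimality to the covariance point. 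You instead exhibit an explicit feasible descent curve $(K_1^{\epsilon},K_1^{\sigma,\epsilon},Z_2^{\epsilon})$ directly inside $\mathcal{K}_{(1,2,1)}(X)$ and verify the strict decrease $\epsilon\bigl(\epsilon-2(1-\lambda)\bigr)<0$ by hand; your computations of the traces and of the feasibility constraints check out. The direct construction is preferable in that it does not rely on the correspondence lemma and makes the mechanism of the failure (the second output coordinate can be fitted at an $O(\epsilon)$ marginal regularization cost once the cone is left along a new generator) completely explicit; the paper's indirect route has the advantage of reusing machinery already established for Remark~\ref{rem:criticalpoints}. Your perturbation is essentially the covariance image of the paper's weight perturbation $W_1^{\epsilon}=(\sqrt{1-\lambda},-\epsilon)^{T}$, $W_2^{\epsilon}=(\sqrt{1-\lambda},-\epsilon)$ after reparametrizing $\epsilon^2\mapsto\epsilon$, so the two arguments agree at the level of the underlying geometry.
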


\begin{proof}
Consider a depth $L=2$ network with no bias ($\beta=0$) and widths
$\mathbf{n}=(1,2,1)$ with a training set of size $N=2$, with inputs
$X=(1,-1)$ and outputs $Y=(1,1)$. Let us consider this loss in the
region where all four weights are positive:
\[
W_{1}=\left(\begin{array}{c}
a_{1}\\
a_{2}
\end{array}\right),W_{2}=\left(\begin{array}{cc}
b_{1} & b_{2}\end{array}\right)
\]
with $a_{1},a_{2},b_{1},b_{2}\geq0$. We then have the following activations
\begin{align*}
Z_{1} & =\left(\begin{array}{cc}
a_{1} & -a_{1}\\
a_{2} & -a_{2}
\end{array}\right)\\
\sigma\left(Z_{1}\right) & =\left(\begin{array}{cc}
a_{1} & 0\\
a_{2} & 0
\end{array}\right)\\
Z_{2} & =\left(\begin{array}{cc}
a_{1}b_{1}+a_{2}b_{2} & 0\end{array}\right).
\end{align*}
The cost therefore takes the form
\[
\mathcal{L}_{\lambda}(\mathbf{W})=(1-a_{1}b_{1}-a_{2}b_{2})^{2}+1+\lambda\left(a_{1}^{2}+a_{2}^{2}+b_{1}^{2}+b_{2}^{2}\right).
\]
Let us now reformulate the loss in terms of the two positive values
\begin{align*}
c & =\left(\frac{a_{1}+b_{1}}{2}\right)^{2}+\left(\frac{a_{2}+b_{2}}{2}\right)^{2}\\
d & =\left(\frac{a_{1}-b_{1}}{2}\right)^{2}+\left(\frac{a_{2}-b_{2}}{2}\right)^{2}.
\end{align*}
Since $2(c+d)=a_{1}^{2}+a_{2}^{2}+b_{1}^{2}+b_{2}^{2}$ and $c-d=a_{1}b_{1}+a_{2}b_{2}$,
we can rewrite
\[
\mathcal{L}_{\lambda}(\mathbf{W})=(1-c+d)^{2}+1+2\lambda(c+d).
\]

The above is minimized (over the set of positive $c,d$) at $c=1-\lambda$
and $d=0$, since it is the unique point of the quarterplane $\left\{ \left(\begin{array}{c}
c\\
d
\end{array}\right):c,d\geq0\right\} $ where the gradient
\begin{align*}
\nabla\mathcal{L}_{\lambda}(\mathbf{W}) & =\left(\begin{array}{c}
\partial_{c}\mathcal{L}_{\lambda}(\mathbf{W})\\
\partial_{d}\mathcal{L}_{\lambda}(\mathbf{W})
\end{array}\right)=\left(\begin{array}{c}
0\\
4\lambda
\end{array}\right)
\end{align*}
points toward the inside of the quarterplane.

The set weights which optimal amongst the set of positive weights
equals the set of positive weights such that $c=1-\lambda$ and $d=0$.
Such weights $a_{1},a_{2},b_{1},b_{2}$ must satisfy $a_{1}=b_{1}$
and $a_{2}=b_{2}$ (since $d=0$) and $a_{1}^{2}+a_{2}^{2}=1-\lambda$
(since $c=1-\lambda$). In other terms, the weights of the form
\[
W_{1}=\left(\begin{array}{c}
a_{1}\\
a_{2}
\end{array}\right),W_{2}=\left(\begin{array}{cc}
a_{1} & a_{2}\end{array}\right)
\]
for any choice of positive $a_{1},a_{2}$ s.t. $a_{1}^{2}+a_{2}^{2}=1-\lambda$
(we have assumed that $\lambda<1$). For any choice of $a_{1},a_{2}$
that are both strictly positive, the above weights lie in the inside
of the set of positive weights, which implies that these weights form
a local minimum.

To prove that the corresponding covariances $(K_{1},K_{1}^{\sigma})$
are not a local minimum of the reformulation, it is sufficient to
find a pre-image of these covariances which is not a local minimum.
We will show that the extrema of the segment of local minima that
we identified are not local minima. Since all weights on the segment
have the same covariances, it follows from Proposition \ref{prop:correspondence-second-reformulation}
that if one of those points is not a local minimum, the covariances
cannot be a local minimum of the reformulation. 

Let us consider one of the extrema:
\[
W_{1}=\left(\begin{array}{c}
\sqrt{1-\lambda}\\
0
\end{array}\right),W_{2}=\left(\begin{array}{cc}
\sqrt{1-\lambda} & 0\end{array}\right).
\]
This extremum can be approached by the following weights as $\epsilon\searrow0$
\[
W_{1}^{\epsilon}=\left(\begin{array}{c}
\sqrt{1-\lambda}\\
-\epsilon
\end{array}\right),W_{2}^{\epsilon}=\left(\begin{array}{cc}
\sqrt{1-\lambda} & -\epsilon\end{array}\right).
\]

We simply need to show that for small enough $\epsilon$, we have
$\mathcal{L}_{\lambda}(\mathbf{W}^{\epsilon})<\mathcal{L}_{\lambda}(\mathbf{W})$.
Let us first compute the activations 
\begin{align*}
Z_{1} & =\left(\begin{array}{cc}
\sqrt{1-\lambda} & -\sqrt{1-\lambda}\\
-\epsilon & \epsilon
\end{array}\right)\\
\sigma\left(Z_{1}\right) & =\left(\begin{array}{cc}
\sqrt{1-\lambda} & 0\\
0 & \epsilon
\end{array}\right)\\
Z_{2} & =\left(\begin{array}{cc}
1-\lambda & \epsilon^{2}\end{array}\right).
\end{align*}
Therefore the cost $\mathcal{L}_{\lambda}(\mathbf{W}^{\epsilon})$
takes the form
\[
\mathcal{L}_{\lambda}(\mathbf{W}^{\epsilon})=(1-\lambda-1)^{2}+(\epsilon^{2}-1)^{2}+2\lambda\left((1-\lambda)+\epsilon^{2}\right).
\]
Clearly for small enough $\epsilon>0$, we have $\mathcal{L}_{\lambda}(\mathbf{W}^{\epsilon})<\mathcal{L}_{\lambda}(\mathbf{W})=\mathcal{L}_{\lambda}(\mathbf{W}^{\epsilon=0})$.
\end{proof}

\section{Description of the Plateau \label{sec:app-plateau}}
\begin{prop}[Proposition \ref{prop:rank_is_start_of_plateau} of the main]
\label{prop:reconstruction-iff-rank}Let $(\mathbf{K},Z_{L})\in\mathcal{K}(X)$,
then there are parameters $\mathbf{W}$ of a width $\mathbf{n}$ network
with covariances and outputs $\mathbf{K}$ if and only if $n_{\ell}\geq\mathrm{Rank}_{\sigma}\left(K_{\ell},K_{\ell}^{\sigma}\right)$
for all $\ell=1,\dots,L-1$.
\end{prop}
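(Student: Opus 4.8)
The plan is to prove the two implications separately, in both directions passing through the rank-one decompositions in the definition~\eqref{eq:Rank-Signa} of $\mathrm{Rank}_{\sigma}$.

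\emph{Necessity.} Suppose $\mathbf{W}$ are the weights of a width-$\mathbf{n}$ network whose hidden covariances and output coincide with the given $(\mathbf{K},Z_{L})$. For each hidden layer $\ell$, write the rows of the pre-activation matrix $Z_{\ell}=Z_{\ell}(X;\mathbf{W})\in\mathbb{R}^{n_{\ell}\times N}$ as $z_{\ell,1}^{T},\dots,z_{\ell,n_{\ell}}^{T}$ with $z_{\ell,i}\in\mathbb{R}^{N}$. Then $K_{\ell}=Z_{\ell}^{T}Z_{\ell}=\sum_{i=1}^{n_{\ell}}z_{\ell,i}z_{\ell,i}^{T}$, while $K_{\ell}^{\sigma}=(Z_{\ell}^{\sigma})^{T}Z_{\ell}^{\sigma}=\sum_{i=1}^{n_{\ell}}\sigma(z_{\ell,i})\sigma(z_{\ell,i})^{T}+\beta^{2}\mathbf{1}_{N\times N}$, the last summand coming from the constant bias row $\beta\mathbf{1}_{N}^{T}$ of $Z_{\ell}^{\sigma}$. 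This is exactly a decomposition of $(K_{\ell},K_{\ell}^{\sigma})$ of the form~\eqref{eq:Rank-Signa} of length $n_{\ell}$, so $\mathrm{Rank}_{\sigma}(K_{\ell},K_{\ell}^{\sigma})\le n_{\ell}$.

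\emph{Sufficiency.} Assume $k_{\ell}:=\mathrm{Rank}_{\sigma}(K_{\ell},K_{\ell}^{\sigma})\le n_{\ell}$ for all $\ell=1,\dots,L-1$, and pick for each such $\ell$ vectors $z_{\ell,1},\dots,z_{\ell,k_{\ell}}\in\mathbb{R}^{N}$ realizing~\eqref{eq:Rank-Signa}. Padding the list with $n_{\ell}-k_{\ell}$ copies of the zero vector is harmless, since positive homogeneity forces $\sigma(0)=0$, so the extra terms $z_{\ell,i}z_{\ell,i}^{T}$ and $\sigma(z_{\ell,i})\sigma(z_{\ell,i})^{T}$ vanish. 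Stacking the padded vectors as rows produces $Z_{\ell}\in\mathbb{R}^{n_{\ell}\times N}$ with $Z_{\ell}^{T}Z_{\ell}=K_{\ell}$ and $(Z_{\ell}^{\sigma})^{T}Z_{\ell}^{\sigma}=K_{\ell}^{\sigma}$, and we take $Z_{L}$ as given (recall $(Z_{0}^{\sigma})^{T}Z_{0}^{\sigma}=K_{0}^{\sigma}$). Since $\mathrm{Im}(A^{T}A)=\mathrm{Im}A^{T}$ for any real matrix $A$, the constraints defining $\mathcal{K}(X)$, namely $\mathrm{Im}K_{\ell}\subset\mathrm{Im}K_{\ell-1}^{\sigma}$ and $\mathrm{Im}Z_{L}\subset\mathrm{Im}K_{L-1}^{\sigma}$ (where $\mathrm{Im}Z_{L}$ denotes the row space $\mathrm{Im}Z_{L}^{T}$), are equivalent to $\mathrm{Im}Z_{\ell}^{T}\subset\mathrm{Im}(Z_{\ell-1}^{\sigma})^{T}$ for all $\ell=1,\dots,L$; hence $(Z_{1},\dots,Z_{L})$ lies in the constraint set $\mathcal{Z}$ of Proposition~\ref{prop:first-reformulation}. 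Setting $W_{\ell}=Z_{\ell}(Z_{\ell-1}^{\sigma})^{+}$ and using $(Z_{\ell-1}^{\sigma})^{+}Z_{\ell-1}^{\sigma}=\mathrm{P}_{\mathrm{Im}(Z_{\ell-1}^{\sigma})^{T}}$ together with the inclusion just established, an induction on $\ell$ gives $Z_{\ell}(X;\mathbf{W})=Z_{\ell}$ and hence $Z_{\ell}^{\sigma}(X;\mathbf{W})=Z_{\ell}^{\sigma}$ for every $\ell$; therefore this width-$\mathbf{n}$ network has covariances $\mathbf{K}$ and output $Z_{L}$.

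I do not expect a real obstacle: the sufficiency argument is the pseudo-inverse gluing already used for Proposition~\ref{prop:first-reformulation}, plus padding by dead neurons. The only points needing care are (i) tracking the constant bias row, which contributes exactly the fixed summand $\beta^{2}\mathbf{1}_{N\times N}$ appearing in the definition of $\mathrm{Rank}_{\sigma}$, and (ii) translating the image conditions on the Gram matrices $(K_{\ell},K_{\ell}^{\sigma})$ into the image conditions on the representations $(Z_{\ell},Z_{\ell}^{\sigma})$ via $\mathrm{Im}(A^{T}A)=\mathrm{Im}A^{T}$.
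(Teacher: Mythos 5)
Your proof is correct and follows essentially the same route as the paper's: necessity by reading off the length-$n_{\ell}$ decomposition from the rows of $Z_{\ell}$ (with the bias row contributing the $\beta^{2}\mathbf{1}_{N\times N}$ term), and sufficiency by realizing the minimal decomposition, padding with zero rows, and gluing layer by layer via $W_{\ell}=Z_{\ell}(Z_{\ell-1}^{\sigma})^{+}$ using $\mathrm{Im}(A^{T}A)=\mathrm{Im}A^{T}$. Your explicit justification that zero-padding is harmless because positive homogeneity forces $\sigma(0)=0$ is a detail the paper leaves implicit.
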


\begin{proof}
To prove that the constraints $n_{\ell}\geq\mathrm{Rank}_{\sigma}\left(K_{\ell},K_{\ell}^{\sigma}\right)$
are sufficient, we construct the parameters $\mathbf{W}$ recursively
from the first layer to the last. Since $n_{1}\geq\mathrm{Rank}_{\sigma}(K_{1},K_{1}^{\sigma})$,
there is a hidden representation $Z_{1}\in\mathbb{R}^{n_{\ell}\times N}$
such that $K_{\ell}=Z_{\ell}^{T}Z_{\ell}$ and $K_{\ell}^{\sigma}=\left(Z_{\ell}^{\sigma}\right)^{T}Z_{\ell}^{\sigma}$
(there is a representation of dimension $\mathrm{Rank}_{\sigma}(K_{1},K_{1}^{\sigma})\times N$,
but one can add some zero lines to it to obtain $Z_{1}$ without changing
the resulting $K_{\ell}$ and $K_{\ell}^{\sigma}$). Since $\mathrm{Im}Z_{1}=\mathrm{Im}K_{1}\subset\mathrm{Im}K_{0}^{\sigma}=\mathrm{Im}Z_{0}^{\sigma}$,
we can choose the parameters of the first layer as $W_{1}=Z_{1}\left(Z_{0}^{\sigma}\right)^{+}$.
All other weight matrices $W_{\ell}$ are then constructed in the
same manner.

The fact that the constraints $n_{\ell}\geq\mathrm{Rank}_{\sigma}\left(K_{\ell},K_{\ell}^{\sigma}\right)$
are necessary follows from the fact that for any network of width
$\mathbf{n}$ with parameters $\mathbf{W}$ we have that $\mathrm{Rank}_{\sigma}\left(K_{\ell}(\mathbf{W}),K_{\ell}^{\sigma}(\mathbf{W})\right)\leq n_{\ell}$
since $K_{\ell}(\mathbf{W})=\left(Z_{\ell}(\mathbf{W})\right)^{T}Z_{\ell}(\mathbf{W})$
and $K_{\ell}^{\sigma}(\mathbf{W})=\left(Z_{\ell}^{\sigma}(\mathbf{W})\right)^{T}Z_{\ell}^{\sigma}(\mathbf{W})$. 
\end{proof}

\subsection{Tightness of the upper bound}

Let us first prove the Proposition on the CP-rank of matrices resulting
from graphs without cliques:
\begin{prop}[Proposition \ref{prop:CP_rank_graph} of the main]
Given a graph $G$ with $N$ vertices and $k$ edges, consider the
$k\times N$ matrix $E$ with entries $E_{ev}=1$ if the vertex $v$
is an endpoint of the edge $e$ and $E_{ev}=0$ otherwise. The matrix
$A=E^{T}E$ is completely positive and if the graph $G$ contains
no cliques of 3 or more vertices then $\mathrm{Rank_{cp}}\left(A\right)=k$.
\end{prop}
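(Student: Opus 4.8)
The plan is to prove the two inequalities $\mathrm{Rank_{cp}}(A)\le k$ and $\mathrm{Rank_{cp}}(A)\ge k$ separately; the hypothesis that $G$ has no clique on $3$ or more vertices is needed only for the second. First I would record the entrywise structure of $A$. Writing $\mathbf{1}_{v}\in\mathbb{R}^{N}$ for the $v$-th standard basis vector, the row of $E$ indexed by an edge $e=\{u,v\}$ equals $\mathbf{1}_{u}+\mathbf{1}_{v}$, so
\[
A=E^{T}E=\sum_{e=\{u,v\}\in E(G)}(\mathbf{1}_{u}+\mathbf{1}_{v})(\mathbf{1}_{u}+\mathbf{1}_{v})^{T}.
\]
This exhibits $A$ as a sum of $k$ rank-one positive semidefinite matrices with non-negative entries (equivalently $B=E$ is a non-negative square root of $A$), giving $\mathrm{Rank_{cp}}(A)\le k$. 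The same formula shows that for $u\neq v$ the entry $A_{uv}$ is positive if and only if $\{u,v\}\in E(G)$, which is the only property of $A$ used below.

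For the lower bound, let $A=\sum_{i=1}^{m}b_{i}b_{i}^{T}$ be an arbitrary completely positive decomposition, with each $b_{i}\in\mathbb{R}^{N}$ having non-negative entries (the $b_{i}$ are the rows of a non-negative matrix $B$ with $B^{T}B=A$); I must show $m\ge k$. The key step is a support lemma: for each $i$ the support $S_{i}=\{v:(b_{i})_{v}>0\}$ is a clique of $G$. Indeed, if distinct vertices $u,v$ both lie in $S_{i}$ then, since every summand $b_{j}b_{j}^{T}$ has non-negative entries, $A_{uv}\ge (b_{i})_{u}(b_{i})_{v}>0$, so $\{u,v\}\in E(G)$. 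Because $G$ has no clique on $3$ or more vertices, this forces $|S_{i}|\le 2$ for every $i$.

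Finally I would conclude by an injectivity argument. Fix an edge $e=\{u,v\}$. Since $A_{uv}=\sum_{i}(b_{i})_{u}(b_{i})_{v}>0$, there is an index $\phi(e)$ with $(b_{\phi(e)})_{u}(b_{\phi(e)})_{v}>0$, i.e. $\{u,v\}\subseteq S_{\phi(e)}$; combined with $|S_{\phi(e)}|\le 2$ this yields $S_{\phi(e)}=e$. The resulting map $\phi\colon E(G)\to\{1,\dots,m\}$ is injective, since $\phi(e)=\phi(e')$ implies $e=S_{\phi(e)}=S_{\phi(e')}=e'$. Hence $m\ge |E(G)|=k$, so $\mathrm{Rank_{cp}}(A)\ge k$, and together with the upper bound $\mathrm{Rank_{cp}}(A)=k$.

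The main obstacle is the support lemma, and more precisely recognizing where complete positivity (as opposed to an arbitrary PSD factorization) is essential: entrywise non-negativity prevents cancellation among the rank-one terms, so positivity of an off-diagonal entry of $A$ must already be witnessed within a single term $b_{i}b_{i}^{T}$. This is exactly the feature that lets the clique size bound $|S_{i}|\le 2$ translate into the counting bound $m\ge k$; the rest is bookkeeping. This also refines the bipartite-graph construction of \cite{drew1994completely}.
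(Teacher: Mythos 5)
Your proof is correct and follows essentially the same route as the paper: the key step in both is that entrywise non-negativity forces the support of each row of $B$ to be a clique of $G$, hence of size at most $2$. The only cosmetic difference is in the final counting step, where you build an injective map from edges to rows while the paper counts the $2k$ non-zero off-diagonal entries of $A$ against the at-most-two such entries contributed by each $b_{i}b_{i}^{T}$; these are the same argument in different clothing (and your phrasing is arguably cleaner).
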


\begin{proof}
The fact that $A=E^{T}E$ implies $\mathrm{Rank_{cp}}\left(A\right)\leq k$,
we only need to show $\mathrm{Rank_{cp}}\left(A\right)\geq k$. Let
assume that there is another decomposition $E^{T}E=B^{T}B$ for some
$m'\times N$ matrix $B$ with positive entries, we will now show
that $k'\geq k$.

First, we show that the absence of cliques of 3 or more vertices implies
that each line $B_{e}$ has at most $2$ non-zero entries. The absence
of cliques implies that for all sets $\Omega=\{v_{1},\dots,v_{r}\}$
of 3 or more vertices, there must be a pair of vertices $v,w\in\Omega$
which are not connected, i.e. $(E^{T}E)_{vw}=0$. If one line $B_{e}$
contains more than two non-zero entries, corresponding to the vertices
$\Omega=\{v_{1},\dots,v_{r}\}$ then for all $v\neq w\in\Omega$,
we have 
\[
\left(B^{T}B\right)_{vw}\geq\left(B_{e}B_{e}^{T}\right)_{vw}=1.
\]

Now if all lines $B_{e}$ have at most two non-zero entries it implies
that $B_{e}B_{e}^{T}$ has at most two non-zero off-diagonal entries.
We know that $E^{T}E$ has $2k$ non-zero off-diagonal entries. Since
\[
E^{T}E=\sum_{e=1}^{m'}B_{e}B_{e}^{T}
\]
it follows that $k'\geq k$, otherwise we could not recover all the
off-diagonal entries.
\end{proof}
We may now prove the tightness of the upper bound on the $\sigma$-rank
of the hidden representation in shallow ReLU networks without bias:
\begin{prop}[Proposition \ref{prop:example-bipartite-tight} of the main]
\label{prop:appendix-example-bipartite-tight} Consider a width-$\mathrm{n}$ shallow network ($L=2$) with ReLU activation, no bias $\beta=0$, $n_{0}=N$,
$n_{1}\geq N(N+1)$, input dataset $X_{N}=I_{N}$, and any output dataset
$Y_{N}$ such that $\left(Y_N^T Y_N \right)^{\frac{1}{2}}$ is a completely positive matrix with CP-rank $k$.

At any global minimum of $R_{\mathrm{n}}(X_{N},Y_{N})$, we have $\mathrm{Rank}_{\sigma}\left(K_{1},K_{1}^{\sigma}\right)=k$.
Furthermore for $\lambda$ small enough, at any global minimum of
$\mathcal{L}_{\lambda,\mathrm{n}}^{\mathrm{MSE}}(\mathbf{W})=\frac{1}{N}\left\Vert Y(X_{N};\mathbf{W})-Y_{N}\right\Vert _{F}^{2}+\lambda\left\Vert \mathbf{W}\right\Vert ^{2},$
we have $\mathrm{Rank}_{\sigma}\left(K_{1},K_{1}^{\sigma}\right)\geq k$.
\end{prop}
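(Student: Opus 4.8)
The plan is to run everything through the covariance reformulation of Proposition~\ref{prop:correspondence-second-reformulation}. Since $L=2$, $X_N=I_N$ and $\beta=0$, we have $K_0^\sigma=I_N$ and $K_2=Y'^TY'$, so for any output $Y'$ the representation cost is $R_{\mathbf{n}}(X_N,Y')=\min\{\mathrm{Tr}[K_1]+\mathrm{Tr}[Y'^TY'(K_1^\sigma)^+]\}$ over $(K_1,K_1^\sigma)\in S$ with $\mathrm{Im}(Y'^T)\subset\mathrm{Im}(K_1^\sigma)$ (here $n_1\geq N(N+1)$ is used so that $S_{n_1}=S$). The first step is a reduction: writing $(K_1,K_1^\sigma)\in S$ via rows $z_i$, one has $\mathrm{Tr}[K_1]=\sum_i\|z_i\|^2\geq\sum_i\|\mathrm{ReLU}(z_i)\|^2$ with equality iff all $z_i$ are entrywise non-negative, and replacing $z_i$ by $\mathrm{ReLU}(z_i)$ leaves $K_1^\sigma$ and feasibility untouched while not increasing $\mathrm{Tr}[K_1]$. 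Hence it suffices to minimize $g(A):=\mathrm{Tr}[A]+\mathrm{Tr}[Y'^TY'A^+]$ over completely positive $A$ (with $\mathrm{Im}(Y'^T)\subset\mathrm{Im}(A)$), and every minimizing pair of the original problem satisfies $K_1=K_1^\sigma$.

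Second, I would establish the trace AM--GM identity: for PSD $A,B$ with $\mathrm{Im}(B)\subset\mathrm{Im}(A)$, writing $A^{+/2}$ for the pseudo-inverse of $A^{1/2}$,
\[
\mathrm{Tr}[A]+\mathrm{Tr}[A^+B]-2\mathrm{Tr}[B^{1/2}]=\bigl\|A^{1/2}-A^{+/2}B^{1/2}\bigr\|_F^2\geq0,
\]
with equality iff $A=B^{1/2}$ (the cross term equals $2\mathrm{Tr}[P_{\mathrm{Im}A}B^{1/2}]=2\mathrm{Tr}[B^{1/2}]$ precisely because $\mathrm{Im}(B)\subset\mathrm{Im}(A)$). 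Taking $B=Y_N^TY_N$ and $M:=(Y_N^TY_N)^{1/2}$, which is completely positive by hypothesis and therefore an admissible $A$, gives $R_{\mathbf{n}}(X_N,Y_N)=2\mathrm{Tr}[M]$, attained only at $K_1=K_1^\sigma=M$. Combined with Proposition~\ref{prop:correspondence-second-reformulation}, any minimal-norm representation of $Y_N$ has covariance pair $(M,M)$, so the first claim reduces to $\mathrm{Rank}_\sigma(M,M)=k$: ``$\geq$'' is $\mathrm{Rank}_{cp}(M)\leq\mathrm{Rank}_\sigma(M,M)$ from the bound stated before Lemma~\ref{lem:bound_rank}, and ``$\leq$'' follows by taking $z_i=b_i$ in a minimal CP decomposition $M=\sum_{i=1}^kb_ib_i^T$, so that $\mathrm{ReLU}(z_i)=z_i$ and both sums in (\ref{eq:Rank-Signa}) equal $M$.

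For the MSE statement I would fix any sequence $\lambda\downarrow0$ and any global minimizers $\mathbf{W}_\lambda$, with output $\tilde Y_\lambda$ and covariance $A_\lambda:=K_1(\mathbf{W}_\lambda)$. Each $\mathbf{W}_\lambda$ is a minimal-norm representation of its own output, so by Step~1, $K_1(\mathbf{W}_\lambda)=K_1^\sigma(\mathbf{W}_\lambda)=A_\lambda$ is completely positive and $\mathrm{Rank}_\sigma(K_1,K_1^\sigma)=\mathrm{Rank}_{cp}(A_\lambda)$. Comparing with an exact representation of $Y_N$ of squared norm $2\mathrm{Tr}[M]$ gives $\mathcal{L}_{\lambda,\mathbf{n}}^{\mathrm{MSE}}(\mathbf{W}_\lambda)\leq2\lambda\,\mathrm{Tr}[M]$, hence $\tilde Y_\lambda\to Y_N$ (so $\tilde Y_\lambda^T\tilde Y_\lambda\to M^2$) and $\mathrm{Tr}[A_\lambda]+\mathrm{Tr}[\tilde Y_\lambda^T\tilde Y_\lambda A_\lambda^+]=\|\mathbf{W}_\lambda\|^2\leq2\mathrm{Tr}[M]$; in particular $A_\lambda$ is bounded and any subsequential limit $A_0$ is completely positive (that cone is closed). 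Using $\mathrm{Tr}[BA^+]=\sup_{t>0}\mathrm{Tr}[B(A+tI)^{-1}]$ whenever $\mathrm{Im}(B)\subset\mathrm{Im}(A)$, together with the feasibility constraint $\mathrm{Im}(\tilde Y_\lambda^T)\subset\mathrm{Im}(A_\lambda)$, one gets $\liminf_\lambda\mathrm{Tr}[\tilde Y_\lambda^T\tilde Y_\lambda A_\lambda^+]\geq\sup_{t>0}\mathrm{Tr}[M^2(A_0+tI)^{-1}]$; finiteness of the left side forces $\mathrm{Im}(M)\subset\mathrm{Im}(A_0)$, and then $g(A_0)\leq2\mathrm{Tr}[M]$, so $A_0=M$ by the equality case of Step~2. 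Therefore $A_\lambda\to M$. The last step is lower semicontinuity of $\mathrm{Rank}_{cp}$: for each $r$ the set $\{A:\mathrm{Rank}_{cp}(A)\leq r\}$ is the image of $\{r\times N\text{ entrywise-nonnegative }B\}$ under $B\mapsto B^TB$, and it is closed because $\mathrm{Tr}[B^TB]=\|B\|_F^2$ bounds $B$ along any convergent sequence of images; hence $\mathrm{Rank}_{cp}(A_\lambda)\geq\mathrm{Rank}_{cp}(M)=k$ for all $\lambda$ small enough, which is the claim.

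I expect the main obstacle to be this limiting argument in the MSE case: the pseudo-inverse $(K_1^\sigma)^+$ and the CP-rank are both discontinuous exactly where the rank drops, so pinning down $A_0=M$ (rather than some lower-rank degeneration) requires the resolvent/supremum representation of $\mathrm{Tr}[B(\cdot)^+]$ and careful use of the feasibility constraint $\mathrm{Im}(Z_2^T)\subset\mathrm{Im}(K_1^\sigma)$ to keep all traces finite; and transferring the rank bound from $M$ to the nearby optimal $A_\lambda$ relies on the (true but not entirely obvious) lower semicontinuity of CP-rank. Everything else --- the trace inequality, the reduction to $K_1=K_1^\sigma$, and the identification $\mathrm{Rank}_\sigma(M,M)=\mathrm{Rank}_{cp}(M)$ --- is routine.
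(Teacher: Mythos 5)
Your argument is correct, and the first half (representation cost) follows the paper's own route: reduce to $K_1=K_1^\sigma$ by replacing each $z_i$ with $\sigma(z_i)$, then minimize $\mathrm{Tr}[A]+\mathrm{Tr}[Y^TYA^+]$ over completely positive $A$ and observe that the unconstrained minimizer $(Y^TY)^{1/2}$ is itself completely positive. You actually supply more detail than the paper here — the explicit identity $\mathrm{Tr}[A]+\mathrm{Tr}[A^+B]-2\mathrm{Tr}[B^{1/2}]=\|A^{1/2}-A^{+/2}B^{1/2}\|_F^2$ and the two-sided verification of $\mathrm{Rank}_\sigma(M,M)=\mathrm{Rank}_{cp}(M)$ are left implicit in the appendix. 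For the MSE half the two proofs diverge: the paper argues by contradiction in \emph{weight space}, realizing any hypothetical sequence of rank-$(k-1)$ optimal covariances as parameters of a width-$(k-1)$ network via Proposition \ref{prop:reconstruction-iff-rank}, extracting a norm-bounded convergent subsequence of weights, and noting the limit must attain the representation cost (hence have rank $k$, contradicting the width bound). You instead stay in covariance space, prove $A_\lambda\to M$ directly via the resolvent representation $\mathrm{Tr}[BA^+]=\sup_{t>0}\mathrm{Tr}[B(A+tI)^{-1}]$ to get lower semicontinuity of the cost across the rank-dropping discontinuity, and conclude from closedness of $\{\mathrm{Rank}_{cp}\leq r\}$. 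These are two faces of the same compactness argument — your closedness of the CP-rank sublevel set is exactly the paper's width-$(k-1)$ realization plus subsequence extraction — but the paper's detour through weights lets it use continuity of $\|\mathbf{W}\|^2$ and avoid the pseudo-inverse semicontinuity lemma, whereas your version is self-contained at the level of the second reformulation and makes the limiting covariance $A_0=M$ explicit rather than only deriving a contradiction. Both are sound; no gap.
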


\begin{proof}
The proof is in two steps, we first show that the minimizer $\mathbf{K}$
of the representation cost has rank $k$, and then
use this to show that for small enough $\lambda$s the rank must be
at least $k$.

\textbf{Representation Cost: }We first show that at a minimizer $(K_{1},K_{1}^{\sigma})$
of the cost $\mathrm{Tr}\left[K_{1}\right]+\mathrm{Tr}\left[YY^{T}\left(K_{1}^{\sigma}\right)^{+}\right]$,
we have $K_{1}=K_{1}^{\sigma}$. This follows from the fact that if
$K_{1}\neq K_{1}^{\sigma}$, then the pair $(K_{1}^{\sigma},K_{1}^{\sigma})$
has a strictly lower cost than the pair $(K_{1},K_{1}^{\sigma})$:
for any $Z_{1}$ such that $K_{1}=Z_{1}^{T}Z_{1}$ and $K_{1}^{\sigma}=\sigma(Z_{1})^{T}\sigma(Z_{1})$,
we have that $\mathrm{Tr}\left[K_{\ell}\right]=\left\Vert Z_{1}\right\Vert _{F}^{2}\geq\left\Vert \sigma\left(Z_{1}\right)\right\Vert _{F}^{2}=\mathrm{Tr}\left[K_{\ell}^{\sigma}\right]$
and the inequality is strict if $Z_{1}\neq\sigma(Z_{1})$ (which happens
iff $K_{1}\neq K_{1}^{\sigma}$).

The optimization of the previous cost over pairs $(K_{1},K_{1}^{\sigma})$
in $S$ is therefore equivalent to the optimization of the cost $K\mapsto\mathrm{Tr}\left[K\right]+\mathrm{Tr}\left[Y^{T}YK^{+}\right]$
over completely positive matrices $K$ such that $\mathrm{Im}Y\subset\mathrm{Im}K$.
If we remove the complete positiveness constraint on $K$, then the
unique minimizer of the above is $K=\left(Y^{T}Y\right)^{\frac{1}{2}}$.
Now since $\left(Y^{T}Y\right)^{\frac{1}{2}}$
is completely positive, it is also the unique minimizer over complete
positive matrices.

We therefore have $\mathrm{Rank}_{\sigma}\left(K_{1},K_{1}^{\sigma}\right)=\mathrm{Rank}_{cp}\left(\left(Y^{T}Y\right)^{\frac{1}{2}}\right)=k$.

\textbf{Regularized Loss:} Let us consider the regularized loss 
\[
\frac{1}{N}\left\Vert Z_{2}-Y\right\Vert _{F}^{2}+\lambda\mathrm{Tr}\left[K_{1}\right]+\lambda\mathrm{Tr}\left[Z_{2}^{T}Z_{2}\left(K_{1}^{\sigma}\right)^{+}\right].
\]
The minimizer $\mathbf{K}(\lambda)=(K_{1}(\lambda),K_{1}^{\sigma}(\lambda),Z_{2}(\lambda))$
converges as $\lambda\searrow0$ to the pair $\left(K_{1},K_{1}^{\sigma},Y\right)$
where $K_{1}=K_{1}^{\sigma}$ is the minimizer of the representation
cost.

Let us now assume that there is no $\lambda_{0}$ such that for all
$\lambda<\lambda_{0}$, any minimizer $\mathbf{K}$ of the loss $\mathcal{L}_{\lambda}$
satisfies $\mathrm{Rank}_{\sigma}\left(K_{1},K_{1}^{\sigma}\right)\geq k$.
This would imply that there is a sequence $\lambda_{1},\lambda_{2},\dots$
of ridges with $\lim_{n\to\infty}\lambda_{n}=0$ and corresponding
minimizers $\mathbf{K}_{1},\mathbf{K}_{2},\dots$ (where $\mathbf{K}_{n}$
is a minimizer of the loss $\mathcal{L}_{\lambda_{n}}$) such that
$\mathrm{Rank}_{\sigma}\left(K_{n,1},K_{n,1}^{\sigma}\right) < k$.
Now by Proposition \ref{prop:reconstruction-iff-rank} for all $n$
there are parameters $\mathbf{W}_{n}$ of shallow ReLU network with
$n_{1}=k-1$ neurons in the hidden layer with covariances
equal $\mathbf{K}_{n}$. The sequence $\mathbf{W}_{1},\mathbf{W}_{2},\dots$
is uniformly bounded in norm by the representation cost $R(X_{N},Y_{N})$, there
is therefore a converging subsequence $\mathbf{W}_{n_{1}},\mathbf{W}_{n_{2}},\dots$
which converges to some parameters $\mathbf{W}$. The covariances
and outputs $\left(K_{1},K_{1}^{\sigma},Y\right)$ at these limiting
parameters $\mathbf{W}$ must minimize the representation cost, i.e.
$K_{1}=K_{1}^\sigma=\left(Y^{T}Y\right)^{\frac{1}{2}}$, but this yields a contradiction, since $\mathrm{Rank}_{\sigma}\left(K_{1},K_{1}^{\sigma}\right)=k$
but $\mathbf{W}$ are parameters of network with $n_{1}=k-1$
neurons in the hidden layer, which would imply $\mathrm{Rank}_{\sigma}\left(K_{1},K_{1}^{\sigma}\right)\leq k-1$.
\end{proof}

To show the tightness (up to constant factor) of the upper bound, one can simply apply this proposition to the special case $Y_N=E^T E$, where $E$ is the edge-vertex incidence matrix of the complete bipartite graph, in which case $k=\frac{N^2}{4}$.

We could also consider an output dataset $Y_N\in \mathbb{R}^{n_L \times} N$ whose lines are one-hot vectors, corresponding to a classification task. If we reorder the training set by class, the covariance $Y_N^T Y_N$ is a block diagonal matrix, with all ones blocks corresponding to each class. The square root $\left( Y_N^T Y_N \right)^{\frac{1}{2}}$ is also block-diagonal but the block of a class $i$ has value $\frac{1}{m_i}$ where $m_i$ is the number of datapoints in the class $i$. The matrix $\left( Y_N^T Y_N \right)^{\frac{1}{2}}$ is completely positive and has rank $k$ equal to the number of classes. This implies a much earlier plateau, which could explain why in real-world classification tasks, we observe a very early plateau.

\begin{rem}
The representation cost for $Y=E^{T}E$ is $2\left\Vert E\right\Vert _{F}^{2}=4\frac{N^{2}}{4}=N^{2}$.
We can obtain an almost optimal representation with $n_{1}=N$ neurons
by taking the weights $W_{1}=\sqrt{\frac{N}{2}}I$ and $W_{2}=\sqrt{\frac{2}{N}}E^{T}E$,
with norm $\left\Vert \sqrt{\frac{N}{2}}I\right\Vert _{F}^{2}+\left\Vert \sqrt{\frac{2}{N}}E^{T}E\right\Vert _{F}^{2}=\frac{N^{2}}{2}+\frac{2}{N}(N\frac{N^{2}}{4}+2\frac{N^{2}}{4})=\frac{N^{2}}{2}+\frac{N^{2}}{2}+N=N^{2}+N$. 
\end{rem}

\subsection{One Dimensional Shallow Network}

We now prove an upper bound on the start of the plateau for shallow networks with one-dimensional inputs and outputs:

\begin{prop}[Proposition \ref{prop:shallow_1D_plateau} of the main]
Consider shallow networks ($L=2$) with scalar inputs and outputs
($n_{0}=n_{2}=1$), a ReLU nonlinearity, and a dataset $X,Y\in\mathbb{R}^{1\times N}$.
Both the representation cost $R_{\mathbf{n}}(X,Y)$ and global minimum
$\min_{\mathbf{W}}\mathcal{L}_{\lambda,\mathbf{n}}(\mathbf{W})$ for
any $\lambda>0$ are constant as long as $n_{1}\geq4N$.
\end{prop}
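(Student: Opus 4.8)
The plan is to reduce both statements to a single compression claim: \emph{any weights $\mathbf W$ of a shallow ReLU network with $n_0=n_2=1$ can be replaced by weights $\mathbf W'$ of a network with at most $4N$ hidden neurons such that $Z_2(X;\mathbf W')=Z_2(X;\mathbf W)$ and $\|\mathbf W'\|^2\le\|\mathbf W\|^2$.} Granting this, the claim about $R_{\mathbf n}$ is immediate (compress any interpolating $\mathbf W$, and combine with the monotonicity proposition to get $R_{\mathbf n}(X,Y)=R_{(4N,1)}(X,Y)$ for $n_1\ge 4N$), and the claim about $\min_{\mathbf W}\mathcal L_{\lambda,\mathbf n}(\mathbf W)$ follows because the replacement changes neither $C(Z_2)$ nor increases $\lambda\|\mathbf W\|^2$ (equivalently, via $\min_{\mathbf W}\mathcal L_{\lambda,\mathbf n}=\min_Y C(Y)+\lambda R_{\mathbf n}(X,Y)$).

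To prove the claim, write the network function as $x\mapsto\sum_{j=1}^{n_1}v_j\,\sigma(a_jx+\beta\tilde c_j)+\beta w_0$. Positive homogeneity of $\sigma$ lets us rescale $(a_j,\tilde c_j,v_j)\mapsto(t_ja_j,t_j\tilde c_j,v_j/t_j)$ for $t_j>0$ without changing the function, and optimizing over the $t_j$ gives $\min_t\|\mathbf W\|^2=2\sum_j\sqrt{a_j^2+\tilde c_j^2}\,|v_j|+w_0^2$; hence it is enough to reduce the number of neurons without increasing the total mass $M:=\sum_j\sqrt{a_j^2+\tilde c_j^2}\,|v_j|$ (an at-most-$4N$-neuron network of mass $\le M$ then has optimally rescaled squared norm $\le 2M+w_0^2\le\|\mathbf W\|^2$). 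Normalize $a_j^2+\tilde c_j^2=1$, so $M=\sum_j|v_j|$; neurons with $a_j=0$ contribute a constant, which is absorbed into $\beta w_0$ and discarded. For a fixed datapoint $x_i$ the sign of $(a_j,\tilde c_j)\cdot(x_i,\beta)$ flips at the two antipodal points of $S^1$ orthogonal to $(x_i,\beta)$, so the $N$ datapoints cut $S^1$ into at most $2N$ arcs on each of which the full activation pattern $(\operatorname{sign}(a_jx_i+\beta\tilde c_j))_i$ is constant, and (away from the degenerate case) no such arc contains a pair of antipodal points.

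Now group the surviving neurons by the arc $A$ containing $(a_j,\tilde c_j)$. On the dataset every neuron in $A$ has activation vector $\cos\theta_j\,p_A+\sin\theta_j\,q_A$ for fixed $p_A,q_A\in\mathbb R^N$ (the relevant linear map is the same across $A$ since the activation pattern is), so the pooled contribution of the positively weighted neurons of $A$ applies $(p_A,q_A)$ to the coefficient vector $\sum_{j\in A,\,v_j>0}v_j(\cos\theta_j,\sin\theta_j)$; being a nonnegative combination of unit vectors in an arc with no antipodal pair, this vector equals $\lambda(\cos\theta,\sin\theta)$ for some $\theta\in A$ with $\lambda=\big\|\sum_{j\in A,\,v_j>0}v_j(\cos\theta_j,\sin\theta_j)\big\|\le\sum_{j\in A,\,v_j>0}v_j$, hence it is realized by a single normalized neuron of mass no larger than the summed masses it replaces (and if the vector vanishes, no neuron is needed). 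The negatively weighted neurons of $A$ are treated identically, their pooled coefficient vector pointing in the direction $-A$ and being realized by one neuron of arc $A$ with a negative coefficient. This leaves at most two neurons per arc, hence at most $4N$ in all, with $M$ not increased, which proves the claim. The degenerate situations — $N\le1$; all $(x_i,\beta)$ collinear; or $\beta=0$, where the network function space is the two-dimensional span of $\sigma(x)$ and $\sigma(-x)$ — are trivial and two hidden neurons always suffice in them. The one genuinely load-bearing point is the ``no antipodal pair in an arc'' observation: it is exactly what forces each arc's positive (and negative) part to collapse to a single neuron while keeping $\|\mathbf W\|^2$ under control; the rest is homogeneity bookkeeping and a Carathéodory-type pooling.
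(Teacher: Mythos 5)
Your argument is essentially the paper's proof recast on the unit circle: the paper classifies surviving neurons by the interval between consecutive datapoints containing their cusp together with the signs of the incoming slope and of the outgoing weight (giving $4N$ classes), and merges two neurons of the same class at a time using the convexity of $(c,d)\mapsto\sqrt{c^2+d^2}$; your $2N$ arcs of constant activation pattern times the sign of $v_j$ is the same classification, and your one-shot pooling via the triangle inequality is the same estimate, so the two proofs buy the same thing. One step of your write-up is actually wrong as stated, though easily removed: absorbing a neuron with $a_j=0$ into the output bias replaces $w_0$ by $w_0+v_j\tilde c_j$, and the resulting change $2w_0v_j\tilde c_j+v_j^2-2|v_j|$ in the squared norm can be positive (take $w_0$ large of the appropriate sign), so this "discard" can increase $\|\mathbf W\|^2$. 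The step is unnecessary: for $\beta\neq 0$ the direction $(0,\pm 1)$ is not a cut point, so such a neuron sits in the interior of an arc and is handled by your pooling like any other; for $\beta=0$ it is dead and removing it only decreases the norm.
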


\begin{proof}
We show that if there is a network with depth $L=2$ and $n_{1}>4N$
hidden neurons, we can construct a network with strictly less neurons
with the same outputs on the dataset and a smaller parameter norm.

The network function can be written in the form
\[
f_{\mathbf{W}}(x)=b+\sum_{k=1}^{n_{1}}a_{k}\sigma\left(c_{k}x+d_{k}\right).
\]
We may assume that for all neuron $i$, we have $a_{k}^{2}=c_{k}^{2}+d_{k}^{2}$
since if this is not the case, one can multiply $a_{k}$ by a scalar
and divide $c_{k}$ and $d_{k}$ by the same scalar to satisfy this
constraint while reducing the norm of the parameters.

For each neuron $i$, we define the cusp of the neuron the value $-\frac{d_{k}}{c_{k}}$,
which is the point where the neuron goes from dead to active.

If there are neurons that are inactive on the whole training set,
they can simply be removed without changing the outputs and reducing
the norm.

If there are more $4N$ neurons, we either have:
\begin{enumerate}
\item There are more than $4$ neurons whose cusp lies between two inputs
$x_{i}$ and $x_{i+1}$ (w.l.o.g. we assume $x_{1}<\dots<x_{N}$).
\item There are more than 2 neurons whose cusp lies to the left or right
of the data.
\end{enumerate}
We will now show how in the case 1, one can remove a neuron while
keeping the same outputs on the training data and reducing the norm
of the parameters. The second case is analogous.

If there are five or more neurons with a cusp between $x_{i}$ and
$x_{i+1}$, then two of those neurons $k,m$ must have the same signs
$\mathrm{sign}a_{k}=\mathrm{sign}a_{m}$ and $\mathrm{sign}c_{k}=\mathrm{sign}c_{m}$
(w.l.o.g. we assume they are all positive). We will replace these
two neurons by a single neuron $\tilde{a}\sigma(\tilde{c}x+\tilde{d})$
where $\tilde{a},\tilde{c},\tilde{d}$ are chosen as the unique positive
values ($\tilde{d}$ may be negative) to satisfy 
\begin{align*}
\tilde{a}\tilde{c} & =a_{k}c_{k}+a_{m}c_{m}\\
\tilde{a}\tilde{d} & =a_{k}d_{k}+a_{m}d_{m}\\
\tilde{a}^{2} & =\tilde{c}^{2}+\tilde{d}^{2}.
\end{align*}
First note this new neurons contributes $\tilde{a}^{2}+\tilde{c}^{2}+\tilde{d}^{2}=2\tilde{a}^{2}$
to the norm of the parameters which is less than the two previous
neurons $2a_{k}^{2}+2a_{m}^{2}$, since 
\begin{align*}
\tilde{a}^{2} & =\sqrt{\tilde{a}^{2}\left(\tilde{c}^{2}+\tilde{d}^{2}\right)}\\
 & =\sqrt{\left(a_{k}d_{k}+a_{m}d_{m}\right)^{2}+\left(a_{k}c_{k}+a_{m}c_{m}\right)^{2}}\\
 & =(a_{k}+a_{m})\sqrt{\left(\frac{a_{k}}{a_{k}+a_{m}}d_{k}+\frac{a_{m}}{a_{k}+a_{m}}d_{m}\right)^{2}+\left(\frac{a_{k}}{a_{k}+a_{m}}c_{k}+\frac{a_{m}}{a_{k}+a_{m}}c_{m}\right)^{2}}\\
 & \leq(a_{k}+a_{m})\left(\frac{a_{k}}{a_{k}+a_{m}}\sqrt{d_{k}^{2}+c_{k}^{2}}+\frac{a_{m}}{a_{k}+a_{m}}\sqrt{d_{m}^{2}+c_{m}^{2}}\right)\\
 & =a_{k}^{2}+a_{m}^{2}
\end{align*}
where the inequality follows from the convexity of the norm function
$(c,d)\mapsto\sqrt{c^{2}+d^{2}}$.

For any $x$ with $x\leq x_{i}$ or $x\geq x_{i+1}$, one can check
that
\[
\tilde{a}\sigma(\tilde{c}x+\tilde{d})=a_{k}\sigma(c_{k}x+d_{k})+a_{m}\sigma(c_{m}x+d_{m}),
\]
which implies that replacement has not changed the values of the network
on the training set.
\end{proof}

\end{document}